\documentclass[a4paper]{article}

\usepackage[letterpaper, left=1.2truein, right=1.2truein, top = 1.2truein, bottom = 1.2truein]{geometry}

\usepackage[english]{babel}
\usepackage[utf8x]{inputenc}
\usepackage[T1]{fontenc}

\usepackage[dvipsnames]{xcolor}
\usepackage{graphicx}
\usepackage{subcaption}
\usepackage{longtable} 
\usepackage{multirow}
\usepackage{listings}
\usepackage{makecell}
\usepackage{array}
\usepackage{float}
\usepackage{dsfont}
\usepackage{rotating}
\usepackage{booktabs}
\usepackage{enumitem}
\usepackage{tikz}
\usepackage{pgf}
\usepackage{xcolor}

\usepackage{amsmath}
\usepackage{amssymb}
\usepackage{amsthm}
\usepackage{bm}
\usepackage[linesnumbered,ruled]{algorithm2e}
\usepackage{algorithmicx}
\usepackage{mathtools}
\usepackage{mathrsfs}

\usepackage{tcolorbox}
\usepackage{wrapfig}
\usepackage{diagbox}

\usepackage{parskip}
\allowdisplaybreaks

\usepackage{natbib}
\usepackage[
  colorlinks,
  citecolor=blue,
  linkcolor=red,
  anchorcolor=red,
  urlcolor=blue
]{hyperref}
\mathtoolsset{showonlyrefs}
\usepackage{authblk}
\usepackage{todonotes}
\theoremstyle{plain}

\newtheorem{theorem}{Theorem}
\newtheorem{lemma}{Lemma}

\newtheorem{corollary}{Corollary}

\newtheorem{assumption}{Assumption}

\newcommand{\RR}{\mathbb{R}}
\newcommand{\EE}{\mathbb{E}}
\newcommand{\PP}{\mathbb{P}}
\newcommand{\ind}{\mathds{1}}

\newcommand{\sign}{\textnormal{sign}}

\newcommand{\argmax}[1]{\underset{#1}{\arg\!\max}}
\newcommand{\argmin}[1]{\underset{#1}{\arg\!\min}}

\usepackage{xspace}

\newcommand{\stepa}[1]{\overset{\rm (a)}{#1}}
\newcommand{\stepb}[1]{\overset{\rm (b)}{#1}}

\newcommand{\ceil}[1]{{\left\lceil {#1} \right \rceil}}
\newcommand{\given}{{\,|\,}}

\def\@#1\@{\begin{align}#1\end{align}}
\def\$#1\${\begin{align*}#1\end{align*}}

\definecolor{myblue}{rgb}{.8, .8, 1}
\definecolor{mathblue}{rgb}{0.2472, 0.24, 0.6} 
\definecolor{mathred}{rgb}{0.6, 0.24, 0.442893}
\definecolor{mathyellow}{rgb}{0.6, 0.547014, 0.24}

\newcommand{\sfA}{{\mathsf{A}}}

\newcommand{\sfE}{{\mathsf{E}}}

\newcommand{\sfI}{{\mathsf{I}}}

\newcommand{\sfM}{{\mathsf{M}}}

\newcommand{\sfY}{{\mathsf{Y}}}

\newcommand{\cA}{{\mathcal{A}}}

\newcommand{\cE}{{\mathcal{E}}}

\newcommand{\cG}{{\mathcal{G}}}

\newcommand{\cN}{{\mathcal{N}}}

\newcommand{\one}{\mathbf{1}}

\newcommand{\supp}{\mathsf{supp}}
\newcommand{\regret}{\mathsf{regret}}
\newcommand{\reward}{\mathsf{reward}}

\newcommand{\netc}{\mathsf{netc}}
\newcommand{\nsefs}{\mathsf{nsefs}}
\newcommand{\nse}{\mathsf{nse}}
\renewcommand{\tilde}{\widetilde}
\long\def\comment#1{}

\title{Learning to target with network interference}
\author[1]{Xiaomeng Wang}
\affil[1]{Department of Statistics and Data Science, University of Pennsylvania}
\author[2]{Hamsa Bastani}
\affil[2]{Operations, Information and Decisions Department, University of Pennsylvania}
\author[3]{Osbert Bastani}
\affil[3]{Department of Computer and Information Science, University of Pennsylvania}
\author[1]{Zhimei Ren}
\date{\today}

\begin{document}
\maketitle
\begin{abstract}
This paper studies adaptive targeting
under network interference in a bandit setting, where treatments applied to one individual may affect others through 
spillover effects. We consider a linear model in a sparse regime, 
where each individual's outcome can be affected by at most a few others.
We first establish a regret lower bound showing that ignoring the network structure and reducing the problem to a 
standard linear bandit inevitably leads to inefficient learning, particularly in large populations. 
To understand how structural information can be leveraged, we
analyze regimes with varying levels of knowledge of the interference structure: 
(1) full support knowledge, (2) knowledge of the column support sizes, and (3) no prior knowledge. 
For each regime, we establish regret lower bounds characterizing the fundamental limits of learning,
and develop algorithms that achieve near-optimal regret. 
Together, our results provide a unified view of how knowledge of the interference structure 
governs the efficiency of online learning under interference, and offer practical adaptive targeting algorithms in each setting.
Numerical experiments on synthetic and real-world data demonstrate the practical benefits of our algorithms.
\end{abstract}

\section{Introduction}

Optimal targeting policies seek to target a specific set of individuals in a population to maximize an overall reward. For instance, suppose we are rolling out an advertisement through push notifications. Ideally, we want to target users whose \textit{treatment effects}---i.e., their expected increase in conversion from being shown an ad---are positive. Advances in machine learning have enabled estimation of heterogeneous treatment effects~\citep{athey2016recursive} that inform targeting policies in a number of domains like healthcare~\citep{bastani2021efficient}, recommendation systems~\citep{li2010contextual}, and microfinance~\citep{banerjee2013diffusion}.

However, a key challenge to estimating treatment effects is \textit{network interference}~\citep{ugander2013graph,eckles2017design,li2022random,li2022interference,johari2022experimental,hu2022average,shirani2023causal, agarwal_multi-armed_2024}, where treated individuals may affect the outcomes of control units.
For example, a user who sees the ad might recommend the product to their friends, increasing their likelihood 
to purchase the product (even though they did not see the ad themselves). 
In other words, the ad affects not only the treated individual but also any control individuals connected to the treated individuals. 
Consequently, the naive estimation strategy that simply compares treatment and control outcomes would underestimate the effectiveness of the ad by overestimating the outcomes of control individuals who benefited indirectly from the ad. 
In general, these spillovers can be positive (as in the advertising example) or negative---e.g., 
sharing access to a digital subscription (e.g., Netflix) might reduce others' incentives to purchase their own subscription.
These considerations make it challenging to learn optimal targeting policies under network interference.

In this paper, we study optimal targeting under network interference, with the goal  
of learning the relevant interference structure and optimizing interventions in an online setting.
Over a time horizon $T$, a decision-maker sequentially chooses the actions for all individuals in the population,
observes the resulting rewards for each individual, and seeks to minimize cumulative regret relative to the best 
fixed action in hindsight. Unlike the standard bandit setting, where the reward for each individual depends only on the action taken on that individual,
network interference allows one individual's action to affect  
the outcome of others through the heterogeneous spillover effects across the population. 
This leads to a high-dimensional learning problem, with a large number of parameters representing both direct and spillover effects, 
especially when the population size is large. 

\subsection{Our contributions}
To tackle the problem, 
we consider the {\em linear treatment effect model}~\citep{toulis2013estimation,eckles2017design,sussman2017elements,hu2022average,yu2022estimating}, 
where the reward for each individual is a linear function of the actions taken on themselves and their neighbors in an interference network:
for a population of size $d$, the mean reward for individual $i$ is given by
\$ 
\mathsf{reward}_i = \sum_{j=1}^d X^*_{ij} a_j,
\$
where $a_j$ is the action taken on individual $j$, 
and $X^*_{ij}$ is the effect of the action assigned to
individual $j$ on the reward of individual $i$. 
This is an expressive model that has been widely used in the literature on causal inference under interference;
more discussion of this model is provided in Section~\ref{sec2}.

In addition, we assume that the interference matrix $X^*$ is {\em row-sparse}, meaning that each individual's reward depends on at most $s$ other individuals, for some $s\ll d$. 
The intuition is that---even if population size is large---each individual 
can only reasonably be influenced by a small number of others for a given intervention. 
(Note that this does not rule out ``celebrity'' individuals who can influence many individuals, 
as long as they themselves are only influenced by a small number of other individuals.) In our ad targeting example, this implies that each individual's decision to purchase a product is only influenced by a few unknown ``family and friends'' or trusted sources.

Under this model, our goal is to design an adaptive targeting algorithm 
that maximizes the cumulative reward across {\em all individuals} over the time horizon $T$. To this end, a straightforward strategy is to forgo learning the network structure
and instead directly optimize the \textit{aggregated} treatment effect by using the sum of rewards from all individuals, 
rather than the vector of individual rewards.
This strategy reduces to a single standard stochastic linear bandit problem 
(see details in Section~\ref{sec3}), which can be solved via 
off-the-shelf algorithms~\citep[e.g., LinUCB, see][]{abbasi2011improved}. 
The reduced problem, however, is not necessarily sparse, 
even if the original network is sparse, 
since the aggregated reward for each individual can depend on all actions through spillovers.
In Section~\ref{sec3}, we prove an instance-dependent lower bound (Theorem~\ref{thm:instance-baselineLB}), which implies 
that any policy operating on aggregated rewards must incur regret at least 
$\Omega(d^{3/2}T^{1/2} \wedge dT)$ in the worst case (Corollary~\ref{cor:minimax-baselineLB}).
The lower bound suggests that this reduction strategy is fundamentally inefficient, 
with regret that grows super-linearly in $d$ or linearly in $T$. 

Motivated by this impossibility result, we seek to understand {\em whether} and {\em how}
network structural information can be leveraged  
to achieve better performance.
In particular, we consider three settings with increasing levels of difficulty, 
depending on the amount of prior information about the network structure: 
(i) {\em full support knowledge}, where the network support is known but effect magnitudes are unknown; 
(ii) {\em partial structural knowledge}, where aggregate information---column support sizes---is known; 
and (iii) {\em no prior support information beyond row-sparsity}. 
In each setting, we establish the minimax regret lower bound that characterizes the intrinsic difficulty of learning, 
and propose efficient bandit algorithms that (nearly) match the lower bound. Specifically, our main results are:

\begin{itemize}
\item[(1)] {\em Instance-dependent lower bounds.}
We establish a series of instance-dependent lower bounds that characterize the intrinsic difficulty of learning under network interference with different levels of structural information.
For the aggregated reward setting, we prove that given a matrix support specification,
any policy operating on aggregated rewards must incur regret at least 
$\Omega(\sqrt{T} \sum_{j=1}^d
\min \{ \sqrt{d}\cdot \ind\{\rho_j > 0\}, 2\sqrt{T} \rho_j/s\})$,
where $\rho_j$ is the column support size of the interference network (Theorem~\ref{thm:instance-baselineLB}),
and the lower bound is $\Omega(d^{3/2}T^{1/2} \wedge dT)$ in the worst case  
(Corollary~\ref{cor:minimax-baselineLB}).
When individual rewards are used, we prove that 
the instance-dependent regret lower bound is
$\Omega (\sqrt{T}\sum_{j=1}^d \sqrt{\rho_j})$ (Theorem~\ref{thm:LB}),
and becomes $\Omega(d\sqrt{sT})$ (Corollary~\ref{cor:worst-case-LB}) in the worst case.

\item[(2)] {\em Efficient algorithms under different levels of network information.}
We propose efficient bandit algorithms for learning under network interference with 
different levels of structural information, and establish regret upper bounds that (nearly) match the lower bounds in each setting.
When the network support is fully known, we propose the 
{\em network successive elimination with full support knowledge (NSE-FS)} algorithm, 
which achieves regret $\tilde{{O}}(\sqrt{T}\sum_{j=1}^d \sqrt{\rho_j})$ 
(Theorem~\ref{thm:full-support}), matching the corresponding lower bound up to logarithmic factors. 
When only column support sizes are known, we propose the {\em network successive elimination (NSE)} 
algorithm, which achieves regret $\tilde{O}(\sqrt{T}\sum_{j=1}^d \rho_j)$ (Theorem~\ref{thm:se_upper_bound}),
which is near-optimal up to a $\sqrt{\rho_j}$ factor and logarithmic terms.
Finally, when the network is completely unknown, we propose the {\em network explore-then-commit (NETC)} 
algorithm with regret $\tilde{O}(d(sT)^{2/3})$ (Theorem~\ref{thm:dpUB}),
which is optimal in the population size $d$.
\item[(3)] {\em Empirical results.} We conduct extensive simulations to validate 
and demonstrate the efficiency and robustness of our algorithms, including 
a semi-synthetic experiment on a real-world social network dataset. 
Our results show that our algorithms are highly scalable and can effectively learn optimal targeting policies under network interference, 
outperforming baseline methods that ignore network structure.
\end{itemize}

Together, these results characterize how regret scales with the population size $d$, sparsity level $s$, and time horizon $T$ under different amounts of available network information. We summarize our results in Table~\ref{t1}.

\begin{table}[h]
\begin{center}
\begin{tabular}{l|c|c|l} 
\toprule
\rule{0pt}{2.5ex}   
Algorithm 
& Upper bound 
& Lower bound 
& Network information (beyond row sparsity)\\ 
\midrule

\rule{0pt}{2.5ex}
Baseline 
& --- 
& $\Omega(d^{3/2}T^{1/2} \wedge dT)$
& None (aggregated reward) \\[0.8ex]

\rule{0pt}{2.5ex}
NSE-FS (Alg.~\ref{alg:full-support})
& $\tilde{O}(\sqrt{T}\sum_{j=1}^d \sqrt{\rho_j})$
& $\Omega(\sqrt{T}\sum_{j=1}^d \sqrt{\rho_j})$
& Full support known \\[0.8ex]

\rule{0pt}{2.5ex}
NSE (Alg.~\ref{alg:col-support})
& $\tilde{O}(\sqrt{T}\sum_{j=1}^d \rho_j )$
& $\Omega(\sqrt{T}\sum_{j=1}^d \sqrt{\rho_j})$
& Column support sizes known \\[0.8ex]

\rule{0pt}{2.5ex}
NETC (Alg.~\ref{alg:netc})
& $\tilde{O}(d (sT)^{2/3})$
& $\Omega(d\sqrt{sT})$
& No support information \\

\bottomrule
\end{tabular}
\end{center}
\caption{Summary of regret bounds under different levels of network information. 
Here $\rho_j$ denotes the column support size and $\sum_{j=1}^d \rho_j \le ds$ under row sparsity.}
\label{t1}
\end{table}

\paragraph{Prior work on online learning under network interference.}
\citet{agarwal_multi-armed_2024} study a similar problem 
of multi-armed bandits under sparse network interference. They consider a more general 
model where the reward is an arbitrary function of the actions taken on all individuals, 
and a Boolean Fourier representation is used to model the reward function. 
Under their model, the authors provide algorithms for the fully known and unknown network settings. 
In the fully known network setting, two algorithms are proposed with regret---using our notation---of the order  
$\tilde{O}\bigl(d(T|\cA|^s)^{2/3}\bigr)$ and $\tilde{O}(d^{3/2}\sqrt{|\cA|^sT})$, respectively, 
where $|\cA|$ is the number of possible actions for each individual. 
In the unknown network setting, their proposed algorithm achieves regret of the order 
$\tilde{O}\bigl(d^{4/3}(T|\cA|^s)^{2/3}\bigr)$.
In both cases, the regret scales exponentially in the sparsity level $s$, 
and either depends on $T$ with a $2/3$ exponent 
and/or scales superlinearly in $d$; it is also not clear whether the dependence 
is optimal under their model. In practice, the computation and memory requirements of these algorithms 
can be prohibitive when $s$ is moderately large.
In our work, we provide a more fine-grained analysis of the regret 
with different levels of network information with the linear 
treatment effect model.
In each setting, we establish the minimax regret lower bound that characterizes the intrinsic difficulty of learning,
and propose efficient bandit algorithms that (nearly) match the lower bound.
In addition, our algorithms are computationally efficient and scalable to large networks, 
with regret that scales at most linearly in $d$ and polynomially in $s$.

\subsection{Notations}
For any positive integer $k$, we write $[k] := \{1,2,\ldots,k\}$. 
For any matrix $M$, $M_{i\cdot}$ and $M_{\cdot j}$ denote the $i$-th row and $j$-th column of $M$, respectively. For any $m\in \mathbb{N}_+$, we let $\one_m \in \mathbb{R}^m$ represent 
the $m$-dimensional vector of ones,
with the subscript sometimes dropped when no confusion can arise given the context.
For any vector $v \in \mathbb{R}^m$, we use $\supp(v)$ to denote the support of $v$;
for a set $A \subseteq [m]$, 
$v[A]$ refers to the subvector of $v$ corresponding to $A$.
For any set $A$, $|A|$ denotes its cardinality.
We use $\|M\|$ to denote the spectral norm of a matrix $M$
and $\|v\|_2$ to denote the Euclidean norm of a vector $v$.

\section{Problem formulation}\label{sec2}
We consider an experiment involving $d$ individuals over a time horizon of $T$ steps. 
At each time step $t \in [T]$, the experimenter assigns a treatment $a_{t,i}$ 
to individual $i \in [d]$. 
Here, the decisions are assumed to be sufficiently spaced, 
so that there are no significant carryover effects from previous time periods.\footnote{This is a common assumption in platforms that leverage switchback experiments, and can be relaxed to allow carryover effects over a known, small number of time periods~\citep[see, e.g.,][]{bojinov2023design}.} 
Letting $a_t := (a_{t,1},\ldots,a_{t,d})$ denote the vector of 
treatments at time $t$, we assume $a_t \in \mathcal{A}$, 
where $\mathcal{A} \in \{-1,1\}^d$. After administering the treatments, the experimenter observes the reward for 
each individual $i\in[d]$, modeled as 
\begin{align} 
\label{eq:model}
Y_{t,i} =   \sum_{j\in [d]} X_{ij}^* \cdot  a_{t,j} + \epsilon_{t,i},
\end{align}
where $\{\epsilon_{t,i}\}_{t\in[T], i\in[d]}$ are independent 
$1$-subgaussian noise terms and $X^*\in \mathbb{R}^{d \times d}$ 
is the {\em network treatment effect matrix} 
encoding both direct and indirect treatment effects.
In particular, the diagonal entry $X_{ii}^*$ 
represents the direct effect of treating individual $i$, 
while $X_{ij}^*$ captures the indirect, ``spillover'' effect 
of treating a different individual $j \neq i$ on $i$ 
(we assume without loss of generality that the baseline reward for 
no treatment is zero). 
Crucially, the matrix $X^*$ is not assumed to be symmetric;
that is, the influence of individual $i$ on individual $j$ 
may differ from that of $j$ on $i$, so $X_{ij}^*\neq X_{ji}^*$ in general.
For ease of notation, we let $Y_t := (Y_{t,1},\ldots,Y_{t,d})$ denote the vector of rewards 
observed at time $t$.

The above model assumes that additive treatment effects and,  
as discussed in~\citet{yu2022estimating}, 
is highly expressive, capable of capturing 
a broad range of network phenomena,
including spillover effects, peer effects, and contagion. 
It has also been widely used in the literature on treatment effect estimation under network interference (see, e.g.,~\citet*{toulis2013estimation,eckles2017design,sussman2017elements,hu2022average,yu2022estimating}).

We impose the following boundedness and sparsity assumptions on the model.
\begin{assumption}[Boundedness]
\label{asst:bounded}
The expected total rewards are uniformly bounded:
$\sup_{a \in \mathcal{A}}~ |X^*_{i\cdot}a|  \leq 1$,  $\forall i\in [d]$.
\end{assumption}
\begin{assumption}[Row sparsity]
\label{asst:sparsity}
For any $i\in[d]$, $\|X^*_{i\cdot}\|_0 \le s$ 
for some constant $s\ll \min(d,T)$.
\end{assumption}

The boundedness assumption is standard in the bandit literature~\citep{lattimore2020bandit}, 
and the upper bound $1$ is without loss of generality by rescaling the problem parameters if necessary. 
The sparsity assumption reflects the intuition that each individual can only be 
directly influenced by a small number $s$ of others, even as the population size $d$ grows larger. 
It is worth noting that the row-sparsity restriction only applies to the rows of $X^*$.
In particular, this allows for the presence of super-influencers—individuals whose actions affect many others—since no sparsity constraint is imposed on the columns.

In this work, we study an online setting in which the experimenter {\em adaptively}
assigns treatments to individuals in the population over a given time horizon $T$. 
Specifically, at each round $t \in [T]$, the experimenter chooses the treatment 
allocation $a_t$ based on the history of past actions and outcomes: 
$(a_1,Y_1),(a_2,Y_2),\ldots,(a_{t-1},Y_{t-1})$. 
Let $\pi = (a_1,a_2,\ldots,a_T)$ denote such an adaptive treatment assignment policy. 
The cumulative expected reward resulting from the policy $\pi$ is:
\$ 
\reward_{X^*}(\pi) := \sum^T_{t=1} \one^\top X^* a_t.
\$
To evaluate the performance of $\pi$, we compare its expected cumulative reward to 
that of the optimal policy. Specifically, letting
\[
a^* = \argmax{a\in\mathcal{A}}~ \one^\top X^* a
\]
denote the optimal allocation, we define the {\em cumulative regret} of a policy $\pi$ under $X^*$ 
to be
\[
\regret_{X^*}(\pi) = 
\sum_{t=1}^T \one^\top X^* (a^* - a_t). 
\]
Our goal is to design a treatment assignment policy $\pi$ minimizing 
the expected regret: $\EE[\regret_{X^*}(\pi)]$.

\subsection{Additional related work}

Prior work on policy learning under network interference spans several lines of literature, 
including causal inference with spillovers, online learning and bandits, 
and policy optimization in networks. We discuss the most relevant papers in each line of work and explain how our work relates to them.

\paragraph{Causal inference under interference.}
A large literature in causal inference studies 
inference of treatment effects in the presence 
of network interference (see e.g.,~\citet{ugander2013graph,banerjee2013diffusion,aronow2017estimating,basse2019randomization,leung2020treatment,li2022random,yu2022estimating,shirani2023causal}).
\citet{viviano2025policytargeting} studies welfare-maximizing policy targeting under network interference.
More recent work considers flexible representations of interference. For example, \citet{huber2026learningexposure} propose learning exposure mappings using graph neural networks, and \citet{schroder2026exposure} study causal inference under misspecified exposure mappings using partial identification. 
Our work differs from these papers by focusing on minimizing the cumulative 
regret in an online setting.

\paragraph{Online bandit algorithms with interference.}
In addition to~\citet{agarwal_multi-armed_2024},
there has also been growing interest in extending bandit algorithms to settings with interference across units. The settings in most of these papers are quite different from ours; in particular, they either focus on cases where the network is fully known or use very different reward or interference models. \citet{jia2024multiarmed} study multi-armed bandits with interference modeled through distances in a known graph, while \citet{xu2024linear} propose contextual bandit algorithms with interference assuming the network structure is known. 
\citet{jamshidi2025graph} analyze regret bounds for bandits with known graph-dependent interference. 
\citet{gleich2025scalablepolicy} study scalable policy optimization under network interference with dynamic networks.
Another line of work considers online learning underinference 
that balances regret and estimation accuracy given an exposure mapping~\citep{zhang2024onlineexpdesign,wang2025designbasedbandits}.

\paragraph{Online bandit algorithms without interference.}
There is a long list of work on contextual bandits without interference \citep{abbasi2012online, carpentier2012bandit, wang2018minimax, kim2019doubly, bastani2020online, lattimore2015linear,  ren2023dynamic, hao2020high}. 

In particular, for high-dimensional linear bandit, \citet{hao2020high} propose an algorithm achieve regret of $\tilde{O}((sT)^{2/3})$;
our work generalizes their algorithm to the network interference setting.

\paragraph{Adaptive seeding.} There is another line of work known as adaptive seeding, which optimally selects individuals (called ``seeds'') to trigger the spread of information, behaviors, or products \citep[e.g.][]{seeman2013adaptive, akbarpour2020just}. Although this problem setting appears to be very similar to our work, 
it has a very different goal---maximizing the total spread of influence across a social network, 
whereas our work aims at maximizing the overall cumulative reward. These goals do not necessarily align in the presence of heterogeneous or negative treatment effects.

\section{Baseline: reduction to network-agnostic linear bandit}\label{sec3}
Given the problem, it is natural to consider a na\"{i}ve approach 
that reduces it to a standard linear bandit with the unknown parameters 
$\theta := (X^*)^\top \one$. 
In the reduced model, the reward can be expressed as 
\begin{align}
\label{eq:model_baseline}
Z_t = \one^\top Y_t = (\one^\top X^*)  a_t + \one^\top \epsilon_t 
= : \theta^\top a_t + \eta_t, 
\end{align}
where $\eta_t = \one^\top \epsilon_t$ is a $d$-sub-Gaussian random variable. 

For this reduced problem, we are essentially required to design 
an adaptive treatment allocation strategy where at each time $t$, 
the action $a_t$ is adapted to the filtration of $(a_1,Z_1,\ldots, a_{t-1}, Z_{t-1})$. 
This problem can be solved with an off-the-shelf linear bandit algorithm~\citep{abbasi2011improved}. 

Note that our sparsity assumption (Assumption~\ref{asst:sparsity}) is imposed on the rows, 
while the column sum $\theta$ is not necessarily sparse. As a result, the sparsity structure 
cannot be exploited in the reduced problem.
In what follows, we prove a lower bound formalizing this intuition, 
showing that the regret with this strategy must be super-linear in $d$ in the worst case 
(specifically, $\Omega(d^{3/2})$), which is unsatisfactory in high-dimensional settings. As a consequence, to achieve better regret, we must make use of the individual outcomes $Y_t$ rather than aggregating them in this way.

\subsection{The baseline is inefficient: regret lower bound}
As discussed earlier, when we reduce the task to a linear bandit problem, 
the algorithm essentially operates on $(a_t, Z_t)_{t=1}^T$, 
where $Z_t$ is the overall reward defined in~\eqref{eq:model_baseline}. 
Formally, we consider any policy $\pi$ such that at each time $t$, 
$a_t \in \sigma(a_1,Z_1,\ldots,a_{t-1},Z_{t-1})$, 
where $\sigma(\cdot)$ denotes the sigma-algebra generated by the random variables inside the parentheses.

Before stating the lower bound, we set up some notation. Let $S\in \RR^{d\times d}$ represent 
the support of the network treatment effect matrix $X^*$, i.e., 
$S_{ij} = \ind\{X^*_{ij} \neq 0\}$, $\forall i,j \in [d]$. 
For any column $j\in[d]$, we define its support size as $\rho_j = \sum^d_{i=1} S_{ij}$.
Given 
$S$, we define $\mathbb{M}(S)$ as the set of all network treatment effect matrices 
that share the same support $S$ and satisfy Assumption~\ref{asst:bounded}.

Theorem~\ref{thm:instance-baselineLB} below provides a lower bound on the regret for any policy
operating on the reduced information $(a_t,Z_t)_{t=1}^T$. 
We note that the lower bound is in the {\em instance-dependent} form---with explicit dependence on 
the support set of $X^*$---which is strictly stronger than the minimax lower bound.

\begin{theorem}\label{thm:instance-baselineLB}
Suppose the noise terms $\{\epsilon_{t,i}\}_{t\in[T],i\in[d]}$ are 
i.i.d.~from a standard normal distribution.
Given a support specification $S$ satisfying Assumption~\ref{asst:sparsity},
for any policy $\pi$ such that $a_t \in \sigma(a_1,Z_1,\ldots,a_{t-1},Z_{t-1})$, $\forall t\in[T]$, 
we have 
\[
\sup_{X \in \mathbb{M}(S)} \EE\big[\regret_{X}(\pi)\big] \geq 
\frac{\sqrt{T}}{16}\sum_{j=1}^d
\min \Big\{ \sqrt{d}\cdot \ind\{\rho_j > 0\}, 
 \frac{2\sqrt{T}}{s} \rho_j\Big\},
\]
where $\mathbb{M}(S)$ is the set of network treatment effect matrices
sharing the same support $S$ and
satisfying Assumption~\ref{asst:bounded}. 
\end{theorem}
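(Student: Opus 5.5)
The plan is to reduce to a product of independent one-dimensional two-point testing problems, one per coordinate of $\theta=(X^*)^\top\one$, in the spirit of the hypercube lower bound for linear bandits on $\{-1,1\}^d$ \citep{lattimore2020bandit}. The policy only observes $Z_t=\theta^\top a_t+\eta_t$ with $\eta_t\sim\cN(0,d)$, so two instances $X,X'$ with column sums $\theta,\theta'$ induce observation laws whose KL divergence is
\[
\frac{1}{2d}\,\EE\sum_t\big((\theta-\theta')^\top a_t\big)^2 .
\]
For each column $j$ with $\rho_j>0$, we pick a magnitude $\Delta_j$ and perturb $\theta_j$ by $\pm\Delta_j$. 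We do this by putting equal weight $\pm\Delta_j/\rho_j$ on each entry $X_{ij}$ with $S_{ij}=1$, plus a tiny fixed nonzero offset if needed so that the support stays exactly $S$. Boundedness (Assumption~\ref{asst:bounded}) requires $\sum_j |X_{ij}|\le 1$ for every row. Each row has at most $s$ nonzero entries, so it suffices that every entry is at most $1/s$ in absolute value, i.e.\ $\Delta_j\le \rho_j/s$ (up to constants absorbing the offset). This explains the cap $\frac{2\sqrt T}{s}\rho_j$ in the bound.

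We parametrize instances by $\sigma\in\{-1,1\}^{\{j:\rho_j>0\}}$, with $\theta_j=\sigma_j\Delta_j$. The optimal action is then $a^*_j=\sigma_j$, and the regret decomposes as
\[
\sum_t\sum_j \Delta_j|1-\sigma_j a_{t,j}|=\sum_j 2\Delta_j\,\EE[N_j],
\]
where $N_j$ counts the rounds with $a_{t,j}\ne\sigma_j$. For each $j$, we compare $\sigma$ with $\sigma^{(j)}$, obtained by flipping coordinate $j$. The KL divergence between the two laws is
\[
\frac{1}{2d}\sum_t\EE(2\Delta_j a_{t,j})^2=\frac{2T\Delta_j^2}{d},
\]
exactly, because $a_{t,j}^2=1$. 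Bretagnolle--Huber then gives
\[
\PP_\sigma(N_j\ge T/2)+\PP_{\sigma^{(j)}}(N_j< T/2)\ge \tfrac12 e^{-2T\Delta_j^2/d}.
\]
Averaging over a uniform $\sigma$ yields
\[
\EE_\sigma[\regret]\ge \sum_j \frac{T\Delta_j}{4}\,e^{-2T\Delta_j^2/d}.
\]
We choose $\Delta_j=\min\{\sqrt{d/T}\,c,\ \rho_j/s\}$ with a small constant $c$ so that the exponent is $O(1)$. This produces per-coordinate terms
\[
\gtrsim \sqrt T\min\{\sqrt d,\ \sqrt T\rho_j/s\},
\]
and the supremum over $\mathbb{M}(S)$ dominates the average over $\sigma$.

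The main obstacle is the bookkeeping with constants and the support constraint, namely obtaining exactly $1/16$ and the factor $2$. The entries must stay nonzero under both signs, so each entry should be set to $X_{ij}=\sigma_j\Delta_j/\rho_j$ itself, which is nonzero whenever $\Delta_j>0$; this avoids any offset. Boundedness is then $\sum_j \Delta_j/\rho_j\le 1$ over the at most $s$ columns in each row, guaranteed by $\Delta_j\le\rho_j/s$. With $\Delta_j=\min\{\frac{1}{2}\sqrt{d/T},\ \rho_j/s\}$, the exponent satisfies $2T\Delta_j^2/d\le 1/2$, so $e^{-1/2}\ge 1/2$ gives a per-coordinate bound of at least $T\Delta_j/8$. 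That equals
\[
\frac{\sqrt T}{16}\min\Big\{\sqrt d,\ \frac{2\sqrt T}{s}\rho_j\Big\},
\]
matching the statement. Columns with $\rho_j=0$ contribute zero, consistent with the indicator.
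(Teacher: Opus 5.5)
Your proof is correct and is essentially the paper's argument: the same hypercube of instances with entries $\sigma_j\Delta_j/\rho_j$ on the support of column $j$ (your $\Delta_j$ is the paper's $\rho_j\Delta_j$), Markov's inequality on the number of wrong-sign rounds, pairing $\sigma$ with $\sigma^{(j)}$, the divergence decomposition with $\eta_t\sim\mathcal{N}(0,d)$, and the same choice of magnitudes. The only difference is that you use Bretagnolle--Huber where the paper uses Pinsker, and you recover the exact constant $1/16$ because $\{-1,1\}$-valued actions incur regret exactly $2\Delta_j$ per wrong-sign round; the paper only uses $1-\sigma_j a_{t,j}\ge 1$ on such rounds, which also keeps its argument valid for fractional actions in $[-1,1]^d$.
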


The proof of Theorem~\ref{thm:instance-baselineLB} is provided in Appendix~\ref{app:baselineLB}.
We can see that the lower bound crucially depends on the support configuration of $X^*$ 
through the column-support profile $\rho:=(\rho_1,\dots,\rho_d)$.
Under the sparsity assumption (Assumption~\ref{asst:sparsity}), we have 
$\|\rho\|_1 = \sum_{i,j=1}^d S_{ij} \le ds$, while the number of nonzero columns $\|\rho\|_0$
can still vary significantly depending on how the nonzero entries in $S$ are distributed.
In the worst case, we can have $\|\rho\|_0 = d$, which happens when each column has at least one non-zero entry. 
The following corollary realizes this worst-case scenario, showing that 
for any reduced algorithm, there exists a problem instance for which the regret is 
$\Omega(d^{3/2}\sqrt{T} \wedge dT)$. 
\begin{corollary}
\label{cor:minimax-baselineLB}    
Suppose the noise terms $\{\epsilon_{t,i}\}_{t\in[T],i\in[d]}$ 
are i.i.d.~from a standard normal distribution.
For any policy $\pi$ such that $a_t \in \sigma(a_1,Z_1,\ldots,a_{t-1},Z_{t-1})$, $\forall t\in[T]$,  
there exists a network treatment effect matrix $X$ satisfying Assumptions~\ref{asst:bounded} and~\ref{asst:sparsity},
such that 
\[
\EE\big[\regret_{X}(\pi)\big] \geq 
\frac{\sqrt{T}}{16}\cdot \min \Big\{d^{3/2}, 
2d\sqrt{T}\Big\}.
\]
\end{corollary}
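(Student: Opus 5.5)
The plan is to derive the corollary directly from Theorem~\ref{thm:instance-baselineLB} by choosing a support specification $S$ that satisfies Assumption~\ref{asst:sparsity} and makes every column support size positive. The simplest choice is the diagonal support $S = I_d$, i.e., $S_{ij} = \ind\{i=j\}$. Each row then has exactly one nonzero entry, so $\|S_{i\cdot}\|_0 = 1 \le s$ for any $s\ge 1$, and each column has $\rho_j = 1$.

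Plugging this $S$ into Theorem~\ref{thm:instance-baselineLB} gives, for any policy $\pi$ operating on aggregated rewards,
\[
\sup_{X\in\mathbb{M}(S)} \EE\big[\regret_X(\pi)\big] \ge \frac{\sqrt{T}}{16}\, d\,\min\Big\{\sqrt{d},\, \frac{2\sqrt{T}}{s}\Big\}.
\]
This is already a worst-case bound of the form $\frac{\sqrt T}{16}\min\{d^{3/2}, 2d\sqrt T/s\}$. To remove the $1/s$ factor and match the corollary exactly, I would instead take the sparsity level entering the bound to be $s=1$. The corollary only requires $X$ to satisfy Assumption~\ref{asst:sparsity} for the paper's fixed $s$, and a matrix with one nonzero per row satisfies it for every $s \ge 1$.

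The point to check carefully is whether the $s$ appearing in the theorem's bound is the nominal sparsity parameter or the instance's actual row sparsity $\max_i \|S_{i\cdot}\|_0$. If the appendix proof builds its hard instances with entries of magnitude on the order of $1/s$, so that Assumption~\ref{asst:bounded} holds for rows with up to $s$ nonzeros, then for the diagonal support the entries can be of order $1$. In that case the theorem applies with $s$ replaced by the actual row sparsity, here $1$. I would therefore invoke the theorem with the support $I_d$ and row sparsity $1$, obtaining $\frac{\sqrt{T}}{16}\min\{d^{3/2}, 2d\sqrt{T}\}$.

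The supremum over the set $\mathbb{M}(S)$ yields, for the given policy, a specific $X\in\mathbb{M}(I_d)$ attaining the bound up to an arbitrarily small slack. To get an exact inequality, I would either note that the proof exhibits an explicit maximizing instance from a finite family or argue that the supremum is attained. This $X$ is diagonal with $|X_{ii}|\le 1$, so it satisfies both assumptions. The main obstacle is confirming that the theorem's $s$ may be taken as the instance's row sparsity. If it cannot, the fallback is to accept the bound $\frac{\sqrt T}{16}\min\{d^{3/2}, 2d\sqrt T/s\}$, which matches the stated form when $s$ is treated as a constant.
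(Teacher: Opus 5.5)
Your argument is correct, but it uses a different hard support than the paper.

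The paper takes the circulant $s$-regular support $S_{ij}=\ind\{(j-i)\bmod d<s\}$, so $\rho_j=s$ for every $j$. The factor $\rho_j/s$ in the second branch of Theorem~\ref{thm:instance-baselineLB} then equals $1$, and the corollary follows by direct substitution with the nominal $s$, using the theorem as a black box.

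You take $S=I_d$ and must instead invoke the theorem with sparsity parameter $1$. The obstacle you flag is not a real one. The theorem holds for any support satisfying Assumption~\ref{asst:sparsity} with parameter $s$, and $I_d$ satisfies it with $s=1$. Inside the proof, the cap $\Delta_j\le 1/s$ serves only to guarantee boundedness, so $\Delta_j=\min\{\sqrt d/(2\sqrt T),1\}$ is admissible for the diagonal instance. The resulting diagonal $X$ has one nonzero per row, so it satisfies Assumption~\ref{asst:sparsity} for whatever $s\ge 1$ the paper fixes. In both constructions $\rho_j\Delta_j=\min\{\sqrt d/(2\sqrt T),1\}$, which is why they give the identical bound.

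What each route buys:
\begin{itemize}
\item Your route makes a slightly sharper conceptual point: the $d^{3/2}\sqrt T$ penalty arises with no interference at all, purely because aggregation pools $d$ independent noise terms into $Z_t$.
\item The paper's route keeps the hard instance genuinely networked, with every row and column at the full sparsity level. It also parallels the circulant construction reused for Corollary~\ref{cor:worst-case-LB}, where a diagonal support would only give $d\sqrt T$ instead of $d\sqrt{sT}$.
\end{itemize}

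Your remark on passing from the supremum to a single instance is right. The proof lower-bounds the supremum by an average over the finite family $\{X^{(\sigma)}\}$, so some member attains the bound; the paper leaves this implicit.

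Drop the fallback, though. For $s>1$, the bound $\frac{\sqrt T}{16}\min\{d^{3/2},2d\sqrt T/s\}$ is strictly weaker than the statement whenever $d>4T/s^2$. Treating $s$ as a constant does not recover the exact inequality claimed.
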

\begin{proof}
The corollary is proved by taking $S$ in Theorem~\ref{thm:instance-baselineLB}
to be the circulant $s$-regular matrix: 
\$ 
S_{ij} = \ind\{(j-i) \text{ mod } d < s\}, \forall i,j \in [d].
\$

This choice satisfies the row-sparsity assumption (Assumption~\ref{asst:sparsity})
and yields $\rho_j = s$, $\forall j\in[d]$. 
Substituting these into the lower bound in Theorem~\ref{thm:instance-baselineLB} 
yields the desired result.
\end{proof}

By Corollary~\ref{cor:minimax-baselineLB}, we see that the regret of any algorithm
is lower bounded by $\Omega(d^{3/2}\sqrt{T})$ when $d\le T$ 
or by $\Omega(dT)$ if $d > T$. 
That is, the regret is either superlinear in $d$ or linear in $T$. 
This shows that the na\"{i}ve reduction to a standard linear bandit
is highly inefficient in high-dimensional settings, 
motivating the need for more sophisticated algorithms that can leverage the network structure.

\section{Our proposal: leveraging the network structure}
\label{sec4}
As we have seen in Section~\ref{sec3}, na\"{i}vely reducing our problem to a standard linear bandit
loses important information about the network structure, leading to unsatisfactory regret performance.
In this section, we discuss how to leverage the row-wise observations $Y_t$ to design more efficient algorithms, 
i.e., the action $a_t$ is now adapted to 
the entire history $(a_1,Y_1,\ldots,a_{t-1},Y_{t-1})$, where we recall that $Y_t$ 
is the reward vector $(Y_{t,1}\dots,Y_{t,d})$.

Below, we first present a lower bound on the regret for any policy that utilizes the full information,
which reveals the fundamental limits of learning under network interference.
We then propose algorithms that efficiently exploit the network structure 
with near-optimal regret performance.

\subsection{Regret lower bound}
The lower bound for the full-information setting is similarly in the per-instance fashion.
Given any support specification $S$ satisfying Assumption~\ref{asst:sparsity}, 
we can find a network treatment effect matrix $X^*$ supported on $S$ and satisfying Assumption~\ref{asst:bounded}
such that the regret of any policy $\pi$ is lower bounded in terms of $(d,s,T)$ and column-support profile $\rho$.

\begin{theorem}\label{thm:LB} 
Suppose the noise terms $\{\epsilon_{t,i}\}_{t\in[T],i\in[d]}$ 
are i.i.d.~from a standard normal distribution.
 Given a support specification $S$ satisfying Assumption~\ref{asst:sparsity},
for any policy $\pi$ such that $a_t \in \sigma(a_1,Y_1,\dots,a_{t-1},Y_{t-1})$, $\forall t \in [T]$, we have 
\begin{align*}
\sup_{X \in \mathbb{M}(S)} \EE\big[\regret_{X}(\pi) \big] \geq \frac{1}{16} \cdot\sum_{j=1}^d \min\Big\{\sqrt{T\rho_j}, \frac{2T\rho_j}{s}\Big\},
\end{align*}
where $\mathbb{M}(S)$ is the set of network treatment effect matrices sharing the same support $S$
and satisfying Assumption~\ref{asst:bounded}. 
\end{theorem}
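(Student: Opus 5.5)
The plan is a standard Assouad-type argument over a hypercube of instances supported on $S$, with one sign parameter per nonzero column. The column-wise form of the bound suggests that the problem decouples across columns.

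\textbf{Construction.} For each $\sigma\in\{-1,1\}^d$, define $X^\sigma$ by
\[
X^\sigma_{ij} = \sigma_j \Delta_j S_{ij}, \qquad \Delta_j := \min\Big\{\tfrac{1}{\sqrt{T\rho_j}}\cdot c,\ \tfrac{1}{s}\Big\},
\]
with $\Delta_j=0$ when $\rho_j=0$. Since each row has at most $s$ nonzero entries, each of magnitude at most $1/s$, we get $|X^\sigma_{i\cdot}a|\le 1$ for all $a\in\{-1,1\}^d$. Hence $X^\sigma\in\mathbb{M}(S)$; exact supports hold because $\Delta_j>0$.

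\textbf{Regret decomposition.} The column sums are $\one^\top X^\sigma_{\cdot j}=\sigma_j\rho_j\Delta_j$, so the optimal action is $a^*=\sigma$. The per-round regret is
\[
\sum_j \rho_j\Delta_j\,|\sigma_j-a_{t,j}| = 2\sum_j\rho_j\Delta_j\ind\{a_{t,j}\neq\sigma_j\}.
\]
The total regret is therefore $2\sum_j\rho_j\Delta_j\,\EE_\sigma[N_j]$, where $N_j$ counts the rounds with $a_{t,j}\neq\sigma_j$. Thus the problem separates into $d$ coordinate-wise testing problems.

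\textbf{Information bound.} Fix $j$ and let $\sigma^{(j)}$ denote $\sigma$ with its $j$-th coordinate flipped. Under Gaussian noise, the observation $Y_t$ differs between the two instances only in the $\rho_j$ rows $i$ with $S_{ij}=1$. In each such row the mean shifts by $2\Delta_j a_{t,j}$, whose magnitude is $2\Delta_j$ regardless of the action. By the divergence decomposition for adaptive policies, $\mathrm{KL}(\PP_\sigma\|\PP_{\sigma^{(j)}}) = T\rho_j(2\Delta_j)^2/2 = 2T\rho_j\Delta_j^2$. This information is accumulated regardless of the action, which is the key difference from the aggregated case: the agent always sees row-wise outputs, so exploration is free. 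Consequently, with $\Delta_j\le c/\sqrt{T\rho_j}$ the KL divergence is at most $2c^2$, a constant. By Bretagnolle--Huber,
\[
\EE_\sigma[N_j]+\EE_{\sigma^{(j)}}[T-N_j]\ge \tfrac{T}{2}e^{-2c^2}.
\]
Here I use the fact that $N_j$ under $\sigma^{(j)}$ counts rounds with $a_{t,j}\neq -\sigma_j$, which is $T$ minus the earlier count; I will sum per-round events $\{a_{t,j}\neq\sigma_j\}$ to make this precise.

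\textbf{Averaging.} Average over $\sigma$ uniform on the hypercube and pair each $\sigma$ with $\sigma^{(j)}$. This gives
\[
\sup_\sigma\EE[\regret]\ \ge\ \sum_j 2\rho_j\Delta_j\cdot \tfrac{T}{4}e^{-2c^2}.
\]
Choosing $c=1/2$ gives $\rho_j\Delta_j T\cdot\tfrac12 e^{-1/2}$, which exceeds $\tfrac{1}{16}\min\{\sqrt{T\rho_j},\,2T\rho_j/s\}$ once the constants are checked. In the regime $\Delta_j=1/s$ the KL divergence is even smaller, so the same bound applies.

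\textbf{Main obstacle.} The main obstacle is bookkeeping the constants to land exactly on $1/16$, possibly by picking $\Delta_j=\min\{1/(2\sqrt{T\rho_j}),\,1/s\}$ so that $2T\rho_j\Delta_j^2\le 1/2$. A secondary point to verify carefully is that the KL chain rule holds for adaptive policies even though the signal in coordinate $j$ does not depend on $a_t$.
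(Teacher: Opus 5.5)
Your proposal is correct and follows essentially the same route as the paper: the same hypercube $X^{(\sigma)}_{ij}=S_{ij}\Delta_j\sigma_j$ with $\Delta_j=\min\{1/(2\sqrt{T\rho_j}),1/s\}$, the same column-wise regret decomposition, and the same action-independent KL bound $2T\rho_j\Delta_j^2$ from the row-wise Gaussian observations. The only difference is in the final testing step. You apply Bretagnolle--Huber per round and use $|\sigma_j-a_{t,j}|=2\ind\{a_{t,j}\neq\sigma_j\}$ for binary actions, whereas the paper uses Markov to reduce to the event $\{N_j\ge T/2\}$ and then applies TV plus Pinsker; your resulting constant (about $0.15$) comfortably exceeds the paper's $1/16$.
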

The proof of Theorem~\ref{thm:LB} is deferred to Appendix~\ref{app:LB}.
Not surprisingly, the lower bound also depends crucially on the support configuration of 
the network treatment effect matrix. In particular, the quantity
$\sum_{j=1}^d \sqrt{\rho_j} \le (\sum^d_{j=1}\rho_j)^{1/2}\sqrt{d} \le d\sqrt{s}$, by
the Cauchy-Schwarz inequality, and the term $\sum \rho_j \le ds$. 
This means that the lower bound is---in the 
worst case---$\Omega(d\sqrt{sT})$. 
This intuition is formalized by the following corollary.
\begin{corollary}\label{cor:worst-case-LB}
Suppose the noise $\epsilon_{t,i}$ is i.i.d.~from a standard normal distribution. 
For any policy $\pi$ such that $a_t \in \sigma(a_1,Y_1,\dots,a_{t-1},Y_{t-1})$, $\forall t \in [T]$, 
there exists a network treatment effect matrix $X$ satisfying Assumptions~\ref{asst:bounded} 
and \ref{asst:sparsity} such that
\begin{align*}
\EE\big[\regret_{X}(\pi)\big] \geq \frac{d\sqrt{sT}}{16}.
\end{align*}
\end{corollary}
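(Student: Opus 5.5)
We obtain Corollary~\ref{cor:worst-case-LB} directly from Theorem~\ref{thm:LB}, mirroring the proof of Corollary~\ref{cor:minimax-baselineLB}. The plan is to choose a support specification $S$ satisfying Assumption~\ref{asst:sparsity} for which the instance-dependent lower bound is as large as possible. By the Cauchy--Schwarz remark following Theorem~\ref{thm:LB}, this happens when every column has support size exactly $s$. We therefore take the circulant $s$-regular pattern
\[
S_{ij} = \ind\{(j-i) \text{ mod } d < s\}, \quad \forall i,j\in[d].
\]
Each row then has exactly $s$ nonzero entries, so Assumption~\ref{asst:sparsity} holds. Each column also has exactly $s$ nonzero entries, so $\rho_j = s$ for all $j\in[d]$.

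Substituting into Theorem~\ref{thm:LB} gives, for any policy $\pi$ adapted to the full history,
\[
\sup_{X\in\mathbb{M}(S)} \EE\big[\regret_X(\pi)\big] \ge \frac{1}{16}\sum_{j=1}^d \min\Big\{\sqrt{Ts},\, \frac{2Ts}{s}\Big\} = \frac{d}{16}\min\{\sqrt{sT},\, 2T\}.
\]
Under Assumption~\ref{asst:sparsity} we have $s \ll \min(d,T)$, and in particular $s\le T$. Hence $\sqrt{sT}\le T\le 2T$, the minimum equals $\sqrt{sT}$, and the bound becomes $d\sqrt{sT}/16$.

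It remains to pass from the supremum to the existence of a particular $X$. The set $\mathbb{M}(S)$ is nonempty: for example, set every supported entry to $1/s$, which satisfies Assumption~\ref{asst:bounded}. Since the supremum over $\mathbb{M}(S)$ is at least $d\sqrt{sT}/16$, for any $\delta>0$ there is an $X\in\mathbb{M}(S)$ with regret at least $d\sqrt{sT}/16-\delta$. Every such $X$ satisfies both assumptions by construction.

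The main obstacle is removing this $\delta$, since a supremum need not be attained. I would handle it by checking that the proof of Theorem~\ref{thm:LB} in Appendix~\ref{app:LB} in fact exhibits a finite family of hard instances achieving the bound, in which case the supremum is a maximum. If it does not, one can absorb the slack into the constant, or take the bound at $\sqrt{sT}/16$ with a marginally looser constant. Beyond this point, the argument is a direct substitution into Theorem~\ref{thm:LB}.
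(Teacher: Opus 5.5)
Your proof is correct and is essentially the paper's argument: take the circulant $s$-regular support so that $\rho_j = s$ for all $j$, substitute into Theorem~\ref{thm:LB}, and use $s \le T$ to resolve the minimum. The supremum-versus-existence issue resolves exactly as you suspected: the proof of Theorem~\ref{thm:LB} lower-bounds the average regret over the finite family of $2^d$ instances $X^{(\sigma)} \in \mathbb{M}(S)$, so at least one of them attains the bound and no slack needs to be absorbed into the constant.
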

\begin{proof}
The corollary is proved by taking $S$ in Theorem~\ref{thm:LB} to be 
the circulant $s$-regular matrix: 
\$ 
S_{ij} = \ind\{(j-i) \text{ mod } d < s\}, \forall i,j \in [d].
\$
With this choice, row sparsity is $s$ and we have $\rho_j = s$, $\forall j \in [d]$;
we can also verify that $\|\rho\|_1 = ds$. 
As a result, the lower bound in Theorem~\ref{thm:LB} is at least 
\[
\frac{\sqrt{T}}{16} \cdot \min \big\{d\sqrt{s}, 2\sqrt{T}d \big\} = \frac{d\sqrt{sT}}{16},
\] 
where the equality uses the assumption that $s \le T$. The desired result follows.
\end{proof}

Compared to the previous worst-case lower bound $\Omega(\min\{d\sqrt{dT},dT\})$ 
established in the reduced setting of Section~\ref{sec3}, the new lower bound
 in Corollary~\ref{cor:worst-case-LB}
is significantly smaller when the sparsity level $s$ is much smaller than $d$. 
This implies that---with more 
information---it is in principle possible to design a treatment allocation policy whose regret 
averaged over the population does not scale with $d$ and is sublinear in $T$. 
The remaining challenge is to design such algorithms achieving this possibility.
We consider three scenarios with different levels of knowledge about the support of the 
network treatment effect matrix: (i) full support knowledge, (ii) partial support knowledge, 
and (iii) no support knowledge (apart from the row sparsity level).

\subsection{Algorithm with full support knowledge}
In many applications, the support of the network structure is known to the experimenter a priori.
For example, on online social network platforms, the experimenter can 
observe the connections between users;
in studies with clustered units, the experimenter knows which units belong to the same cluster.
In such settings, it is natural to leverage the full support information of the network to 
design an efficient treatment allocation algorithm.

Our proposed algorithm for the full-support setting
is a batched, {\em successive elimination}-type algorithm. 
Intuitively, the optimal action for $j \in [d]$ is the sign of the corresponding column sum of 
the network treatment effect matrix, 
i.e., $\text{sign}(\theta_j)$, where we recall $\theta_j = \sum_{i=1}^d X_{ij}$. 
The task then boils down to estimating $\sign(\theta_j)$ for each column $j \in [d]$.
To this end, our proposed algorithm takes actions uniformly sampled from $\{\pm 1\}$ 
for each coordinate while maintaining a confidence interval for each $\theta_j$.
At the end of each batch, it eliminates the suboptimal actions once the confidence 
interval for $\theta_j$ excludes zero
and commits to the estimated optimal action for the remaining rounds. 

To be concrete, we consider a sequence of $M = \lceil \log_2(T/2+1)\rceil$ batches 
indexed by $m = 1,\ldots,M$, with the size of the $m$-th batch being (at most) $2^m$.
The $m$-th batch consists of rounds $t = T_{m-1}+1,\ldots,T_m$, where 
$T_m = 2(2^m-1)$, for $m \in [M-1]$, and $T_0 = 0$, $T_M = T$. 
For notational convenience, we let $D_m = \{T_{m-1}+1,\dots,T_m\}$ denote the set of rounds in the $m$-th batch:
\$
\underbrace{1,\ldots,T_1}_{D_1},~\ldots,~
~\underbrace{T_{m-1}+1,\ldots,T_m}_{D_m},~
\ldots,~\underbrace{T_{M-1}+1,\ldots,T}_{D_M}, 
\$
and let $n_m = |D_m|$ denote the batch sizes.
For any $m\ge 0$, we construct an estimator for $\theta$, denoted by $\hat \theta^{(m)}$, 
based on the observations collected in the $m$-th batch;
we use a set $U_m \subseteq [d]$ to denote the indices with undetermined $\sign(\theta_j)$ 
at the end of batch $m$.

At initialization, we set $U_0 = [d]$ and $\hat \theta^{(0)}_j = 0$ for all $j \in [d]$.
Then for any $m \in [M]$, we choose the action 
\$ 
a_{t,j} = \begin{cases}
R_{t,j}  & \text{if } j \in U_{m-1},\\
\sign(\hat \theta_j^{(m-1)}) & \text{otherwise},
\end{cases} 
\$
for any $t \in D_m$, where $R_{t,j} \sim \text{Unif}\{\pm 1\}$ is drawn independently.
At the end of the $m$-th batch, we gather the observations and update the estimator for 
$\theta$ based on the data collected in the $m$-th batch, 
and then update the undetermined set $U_m$ 
by eliminating the indices whose signs are confidently estimated.
For the estimator construction and the elimination rule, we adopt 
two different strategies for the warm-up phase (i.e., $m < m_0$) 
and the ordinary least squares (OLS) phase (i.e., $m \ge m_0$), 
where $m_0$ is a tuning parameter to be specified later.

\paragraph{Warm-up phase.}
For batch $m < m_0$, we construct an estimator for $X_{ij}^*$ as 
\@ \label{eq:fs-matrix-entry}
\hat X_{ij}^{(m)} = \frac{1}{n_m}\sum_{t \in D_m} Y_{t,i} a_{t,j}, ~\forall i,j \in [d],
\@
and then estimate $\theta_j$ by 
\@ \label{eq:fs-theta-estimator}
\hat \theta_j^{(m)} = \sum_{i=1}^d \hat X_{ij}^{(m)} \ind\{j \in S_i\}, ~\forall j \in [d],
\@
where we slightly abuse the notation and let $S_i$ denote the support of the $i$-th row of $X^*$.
We also update the undetermined set $U_m$ as 
\@ \label{eq:fs-undetermined-set}
U_m = \big\{j \in U_{m-1}: |\hat \theta_j^{(m)}| \le \rho_j\tau_{m}\big\},
\@
where $\tau_{m}>0$ is a threshold to be specified later.

\paragraph{OLS phase.}
For batch $m \ge m_0$, 
we similarly gather the observations collected in the $m$-th batch,
but now update the estimator for $X^*_{i,\cdot}$ 
via linear regression restricted to the \emph{intersection} of the support and the undetermined set 
$E_i^{(m)} := S_i \cap U_{m-1}$. Specifically, for each row $i\in[d]$, we solve the following problem: 
\$ 
\hat X_{i,\cdot}^{(m)} = \argmin{\gamma \in \RR^d, \gamma[(E_i^{(m)})^c] = 0}~\frac{1}{2n_m}
\sum_{t \in D_m} \big\{Y_{t,i} - \bar Y_{i}^{(m)} - \gamma^\top (a_t - \bar a^{(m)})\big\}^2,
\$
where we now write $\bar Y_{i}^{(m)} = \frac{1}{n_m}\sum_{t \in D_m} Y_{t,i}$ 
and $\bar a^{(m)} = \frac{1}{n_m}\sum_{t \in D_m} a_t$ as the batch-wise 
sample averages of the rewards and actions, respectively.
The estimator for $\theta_j$ is then updated for each $j \in U_{m-1}$ via: 
\@\label{eq:theta-update} 
\hat \theta_j^{(m)} = \sum_{i=1}^d \hat X_{i,j}^{(m)} \ind\{j \in S_i\}.
\@
Finally, the uncertainty set is updated as
\@\label{eq:full-support-active}
U_{m} = \big\{j \in U_{m-1}: |\hat \theta_j^{(m)}| \le \sqrt{\rho_j}\tau_{m}\big\}.
\@
We refer to the above policy as $\pi_{\nsefs}$ and 
describe it in detail in Algorithm~\ref{alg:full-support}.

\begin{algorithm}[h!]
\caption{Network successive elimination with full support knowledge (NSE-FS)}
\label{alg:full-support}
\DontPrintSemicolon 
\SetAlgoLined 
\KwIn{Horizon $T$, support specification matrix $S$, threshold parameters $\{\tau_m\}_{m=1}^M$, 
warm-up batch number $m_0$.}
Initialize $T_0=0$, $U_0 = [d]$, $\hat \theta_j^{(0)} = 0$, $\forall j \in [d]$.\;
{\color{blue} \tcp{Warm-up phase.}}
\For{$m = 1$ to $m_0-1$}{
    Set $T_m = \min\{2(2^m-1),T\}$.\;
     \For{$t = T_{m-1}+1$ to $T_m$}{
        \For{$j=1$ to $d$}{
            \eIf{$j \in U_{m-1}$}{
                Sample $a_{t,j} \in \{\pm 1\}$ uniformly at random.\;
            }{
                Set $a_{t,j} = \sign(\hat \theta_j^{(m-1)})$.\;
            }
        }
        Play action $a_t = (a_{t,1},\ldots,a_{t,d})$ and observe reward vector $Y_t$.\;
    }
    \For{$j \in U_{m-1}$}{
        Construct $\hat \theta_j^{(m)}$ according to Equation~\eqref{eq:fs-theta-estimator}.\;
    }
    Update $U_{m}$ according to Equation~\eqref{eq:fs-undetermined-set}.\;
}
{\color{blue} \tcp{OLS phase.}}
\For{$m = m_0$ to $M$}{
    Set $T_m = \min\{2(2^m-1),T\}$.\;
    \For{$t = T_{m-1}+1$ to $T_m$}{
        \For{$j=1$ to $d$}{
            \eIf{$j \in U_{m-1}$}{
                Sample $a_{t,j} \in \{\pm 1\}$ uniformly at random.\;
            }{
                Set $a_{t,j} = \sign(\hat \theta_j^{(m-1)})$.\;
            }
        }
        Play action $a_t = (a_{t,1},\ldots,a_{t,d})$ and observe reward vector $Y_t$.\;
    }
    \For{$j \in U_{m-1}$}{
        Update $\hat \theta_j^{(m)}$ according to Equation~\eqref{eq:theta-update}.\;
    }
    Update $U_{m}$ according to Equation~\eqref{eq:full-support-active}.\;
}
\end{algorithm}

Theorem~\ref{thm:full-support} below shows that with carefully chosen 
warm-up length and thresholds, we can bound the regret of $\pi_{\nsefs}$
with high probability. The proof of Theorem~\ref{thm:full-support} can 
be found in Appendix~\ref{app:full-support}. 
\begin{theorem}[High-probability regret upper bound of NSE-FS]
\label{thm:full-support}
Suppose Assumptions~\ref{asst:bounded} and \ref{asst:sparsity} hold, and the support specification $S$ is known to the algorithm.
Assume that $T \ge ds$.
For any $\delta \in (0,1)$, if we choose the warm-up batch number $m_0 = \ceil{\log_2(128s\log(8\log_2 Tds/\delta))} \wedge M$,
and set the threshold parameters to be $\tau_m = 8\sqrt{\frac{\log(16d^2\log_2 T/\delta)}{2^m}}$ for all $m \in [M]$, 
then
with probability at least $1-\delta$, the regret of $\pi_{\nsefs}$ satisfies
\begin{align*}
\regret_{X^*}(\pi_\nsefs) & \leq  c_1 \log\bigg(\frac{16d^2s\log_2 T}{\delta}\bigg)
\cdot  \sqrt{T}\sum^d_{j=1}  \sqrt{\rho_j},
\end{align*}
where $c_1 > 0$ is a numerical constant.
\end{theorem}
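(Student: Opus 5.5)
The proof is a successive-elimination argument on a single good event. We first build an event $\mathcal{G}$ of probability at least $1-\delta$ on which, for every batch $m$ and every undetermined column $j\in U_{m-1}$, we have $|\hat\theta^{(m)}_j-\theta_j|\le \rho_j\tau_m/2$ in the warm-up phase and $|\hat\theta^{(m)}_j-\theta_j|\le \sqrt{\rho_j}\,\tau_m/2$ in the OLS phase. On $\mathcal{G}$ two facts follow at once. A column is never eliminated with the wrong sign, since elimination requires $|\hat\theta_j|$ above twice the error radius. And a column $j$ survives batch $m$ only if $|\theta_j|\lesssim \sqrt{\rho_j}\tau_m$ (respectively $\rho_j\tau_m$).

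For the regret, committed columns contribute nothing, since $a^*_j=\sign(\theta_j)$. An undetermined column costs $|\theta_j|$ per round in expectation, because $\EE[\theta_j(\sign\theta_j - R_{t,j})]=|\theta_j|$. The realized cross terms $\theta_j R_{t,j}$ form a martingale and are absorbed by a Hoeffding bound with $\sqrt{T}$ scaling. Batch $m$ therefore costs about $\sum_{j\in U_{m-1}} 2^m|\theta_j|$.
\begin{itemize}
\item In the OLS batches this is at most $\sum_j 2^m\sqrt{\rho_j}\tau_{m-1}\asymp \sqrt{\rho_j}\sqrt{2^m\log(\cdot)}$. Summing the geometric series over $m\le M$ gives $\sqrt{T\log(\cdot)}\sum_j\sqrt{\rho_j}$.
\item In the warm-up batches $m<m_0$ we have $2^{m_0}\asymp s\log(\cdot)$. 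The per-batch cost is bounded crudely by $2^m\min(|\theta_j|,\rho_j\tau_{m-1})\lesssim 2^m\cdot\rho_j$, using $|\theta_j|\le\rho_j$ by boundedness. The total is then about $s\log(\cdot)\sum_j\rho_j$.
\item The warm-up term is dominated by the target, since $\rho_j\le\sqrt{\rho_j}\sqrt{d}$. More precisely, one uses $s\rho_j\le s\sqrt{\rho_j d}\le \sqrt{\rho_j}\sqrt{sT}\cdot\sqrt{s}$. This is where $T\ge ds$ enters. Alternatively one uses the bound $2^m\sqrt{\rho_j}\rho_j\tau_{m-1}$ and checks that it yields $\sqrt{\rho_j T}\log$ after noting $\rho_j\le d$ and $2^{m_0}\le T/d$-type constraints. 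I would pick whichever variant closes; the extra $\log$ factor in the statement leaves room for this.
\end{itemize}

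The concentration has two parts. In the warm-up phase, $\hat X^{(m)}_{ij}$ is an average of $Y_{t,i}a_{t,j}$. For $j\in U_{m-1}$, $a_{t,j}$ is an independent Rademacher, so $\EE[Y_{t,i}a_{t,j}]=X^*_{ij}$. Each summand is bounded by $1$ plus a subgaussian term, and Hoeffding gives per-entry error $\lesssim\sqrt{\log/2^m}$. Summing over the $\rho_j$ entries in column $j$ yields error $\rho_j\tau_m/2$. The committed coordinates enter only through the mean and do not bias the estimate, because they are independent of $a_{t,j}$.

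The OLS phase is the main obstacle. The regression for row $i$ uses covariates $a_t[E_i]$ with $|E_i|\le s$, which are i.i.d. Rademacher. Centering by batch means removes the intercept contributed by the fixed coordinates. We need (i) a lower bound on the minimum eigenvalue of the centered Gram matrix, of order $n_m$. This follows from matrix Bernstein/Chernoff for $s$-dimensional Rademacher vectors once $n_m\gtrsim s\log(ds\log T/\delta)$, which is exactly why $m_0$ is chosen as stated. We also need (ii) the error $\hat X_{ij}-X^*_{ij}=e_j^\top(\tilde A^\top\tilde A)^{-1}\tilde A^\top\tilde\epsilon_i$. The difficulty is that we need the column sum $\sum_{i:j\in S_i}(\hat X_{ij}-X^*_{ij})$ to have error $\sqrt{\rho_j/n_m}$, not $\rho_j/\sqrt{n_m}$.

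To handle (ii), I would condition on all actions in the batch. The errors for different rows $i$ are then independent, because the noise $\epsilon_{t,i}$ is independent across $i$. Each error is subgaussian with variance proxy $\|(\tilde A^\top\tilde A)^{-1}\tilde A^\top e\|^2\le 1/\lambda_{\min}\lesssim 1/n_m$. Their sum over the $\rho_j$ rows is therefore subgaussian with variance proxy $\rho_j/n_m$, which gives the $\sqrt{\rho_j}\tau_m$ radius. A union bound over $j\le d$, rows $i\le d$, and $m\le\log_2T$ batches produces the $\log(16d^2\log_2T/\delta)$ factors in $\tau_m$ and $m_0$. Finally, we combine these with the martingale term on an event of total failure probability $\delta$.
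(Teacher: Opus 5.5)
The good event and the concentration arguments are sound and essentially match the paper. In the warm-up you use Hoeffding for the column sums. In the OLS phase you condition on the batch's actions, get $\lambda_{\min}\ge n_m/2$ for the centered Gram matrix via matrix Bernstein once $n_m\gtrsim s\log(\cdot)$, and use independence of the noise across rows to get variance proxy $\rho_j/n_m$ for the column sum.

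\textbf{The gap is in the warm-up regret accounting, which is where $T\ge ds$ must be used.}

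Your primary bound replaces $|\theta_j|$ by $\rho_j$ and charges column $j$ about $2^{m_0}\rho_j\asymp s\rho_j\log(\cdot)$. As your own chain of inequalities shows, this only gives $\sqrt{s}\cdot\sqrt{\rho_j T}\,\log(\cdot)$. The factor $\sqrt{s}$ is not a numerical constant, and the logarithm in the statement cannot absorb it. The loss is real: with $s$ columns of support size $\rho_j=d$ and $T=ds$, one has $s\sum_j\rho_j=s^2d$, while $\sqrt T\sum_j\sqrt{\rho_j}=s^{3/2}d$.

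Your fallback $2^m\sqrt{\rho_j}\rho_j\tau_{m-1}$, as written, also fails, because closing it would require $\rho_j^2 s\lesssim T$. Saying you would ``pick whichever variant closes'' leaves the decisive step undone.

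\textbf{The fix.} In the warm-up, use the survival condition you already derived instead of the crude bound $|\theta_j|\le\rho_j$. Let $\hat m_j$ be the last batch in which $j$ is undetermined.
\begin{itemize}
\item The regret from column $j$ is at most $2\sum_{m\le\hat m_j}2^m|\theta_j|\le 2^{\hat m_j+2}|\theta_j|$. No martingale is needed, since $|\theta_j|-\theta_j a_{t,j}\le 2|\theta_j|$ deterministically.
\item If $2\le\hat m_j\le m_0$, then $j$ passed the warm-up test at batch $\hat m_j-1$. On the good event this gives $|\theta_j|\le\tfrac32\rho_j\tau_{\hat m_j-1}\asymp\rho_j\sqrt{\log(\cdot)/2^{\hat m_j}}$.
\item Hence the cost is $\lesssim\rho_j\sqrt{2^{m_0}\log(\cdot)}\asymp\rho_j\sqrt{s}\,\log(\cdot)$. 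Only $\sqrt{2^{m_0}}$ enters, not $2^{m_0}$.
\item Then $\rho_j\sqrt{s}=\sqrt{\rho_j}\sqrt{\rho_j s}\le\sqrt{\rho_j}\sqrt{ds}\le\sqrt{\rho_j T}$ closes the argument.
\end{itemize}
This warm-up term is also why the statement carries a full logarithm rather than its square root. If you drop the stray $\sqrt{\rho_j}$, this is exactly your fallback.

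One more correction: survival into the first OLS batch $m_0$ is governed by the warm-up threshold $\rho_j\tau_{m_0-1}$, not $\sqrt{\rho_j}\tau_{m_0-1}$. That batch must therefore be charged in the same way as the warm-up. For $\hat m_j>m_0$ you have $|\theta_j|\le\tfrac32\sqrt{\rho_j}\tau_{\hat m_j-1}$, and your geometric sum giving $\sqrt{\rho_j T\log(\cdot)}$ is correct.
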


Based on the high-probability regret bound in Theorem~\ref{thm:full-support}, we can also derive a bound on the expected regret of $\pi_{\nsefs}$, as stated in the following corollary.
The proof of Corollary~\ref{cor:full-support} is completed by taking $\delta = 1/(dT)$, and 
we provide it in Appendix~\ref{app:full-support-expected} for completeness.
\begin{corollary}[Expected regret upper bound of $\pi_{\nsefs}$]
\label{cor:full-support}
Under the same conditions as in Theorem~\ref{thm:full-support}, the expected regret of $\pi_{\nsefs}$ satisfies
\begin{align*}
\EE\big[\regret_{X^*}(\pi_\nsefs)\big] & \leq  c_2 \log\big(16d^3Ts\log_2 T\big) \cdot \sqrt{T}\sum^d_{j=1}  \sqrt{\rho_j},
\end{align*}
where $c_2 > 0$ is a numerical constant.
\end{corollary}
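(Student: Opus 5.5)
The plan is to combine the high-probability bound of Theorem~\ref{thm:full-support} with a crude worst-case bound on the bad event. Fix $\delta = 1/(dT)$ and let $\mathcal{G}$ be the event of probability at least $1-\delta$ on which Theorem~\ref{thm:full-support} holds. The hypotheses of the corollary are those of the theorem, and $m_0$ and $\tau_m$ are set with this $\delta$. Split
\[
\EE\big[\regret_{X^*}(\pi_\nsefs)\big] = \EE\big[\regret_{X^*}(\pi_\nsefs)\ind_{\mathcal{G}}\big] + \EE\big[\regret_{X^*}(\pi_\nsefs)\ind_{\mathcal{G}^c}\big].
\]
On $\mathcal{G}$, the theorem gives the bound
\[
c_1\log\big(16d^2s\log_2 T\cdot dT\big)\sqrt{T}\sum_j\sqrt{\rho_j} = c_1\log(16d^3Ts\log_2T)\sqrt{T}\sum_j\sqrt{\rho_j}.
\]
This already has exactly the claimed form.

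For the bad event, I will use a deterministic bound on the per-round regret. For any $a,a'\in\{\pm1\}^d$, Assumption~\ref{asst:bounded} gives $|\one^\top X^* a|\le\sum_i|X^*_{i\cdot}a|\le d$. Hence each round contributes at most $2d$, and $\regret_{X^*}(\pi_\nsefs)\le 2dT$ surely. The bad-event term is then at most $2dT\cdot\delta = 2$.

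It remains to absorb the constant $2$ into the main term. Provided $\sum_j\rho_j\ge1$, i.e., $X^*\neq 0$, we have $\sqrt{T}\sum_j\sqrt{\rho_j}\ge1$. The logarithm is also bounded below by a positive constant, since $16d^3Ts\log_2T\ge 16$ once $T\ge2$. So $2\le c'\log(16d^3Ts\log_2T)\sqrt{T}\sum_j\sqrt{\rho_j}$ for a numerical $c'$, and we can take $c_2=c_1+c'$. If $X^*=0$, the regret is identically zero and there is nothing to prove.

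The only mildly delicate point is this last step: checking that the additive constant from $\mathcal{G}^c$ is dominated by the main term, including the degenerate case $X^*=0$. Everything else follows directly from Theorem~\ref{thm:full-support} and the boundedness assumption.
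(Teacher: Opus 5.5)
Your proposal is correct and follows essentially the same route as the paper. Like the paper, you set $\delta = 1/(dT)$ in Theorem~\ref{thm:full-support}, bound the regret on the failure event by $2dT$ so that it contributes at most $2$, and absorb this constant into the main term. Your explicit absorption step, which uses $\sqrt{T}\sum_j\sqrt{\rho_j}\ge 1$ when $X^*\neq 0$ together with a lower bound on the logarithm, is slightly more careful than the paper's, which adds the $2$ and keeps the constant $c_1$ unchanged.
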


Note that when $T\ge ds$, the lower bound in Theorem~\ref{thm:LB} is $\Omega(\sqrt{T}\sum_{j=1}^d \sqrt{\rho_j})$, 
which matches the upper bound in Corollary~\ref{cor:full-support} up to logarithmic factors.
In other words, $\pi_{\nsefs}$ is minimax-optimal in terms of the expected regret when the full support information is available.

\subsection{Algorithm with partial support knowledge}
We now consider the case where only the {\em column support sizes} $\rho_j$ are known to the experimenter.
This is a weaker assumption than knowing the full support of the network structure, 
as it only requires the experimenter to know the number of nonzero entries in each column of $X^*$---the number of units influenced by each unit---without needing to know their exact locations.
Such an assumption is still reasonable in many applications, as the experimenter may have some prior knowledge of the network structure, e.g., in social network platforms, the experimenter may know the number of connections each user has, even if they do not know the specific connections;
or in studies with privacy concerns, the experimenter may only have access to the degree information of the network but not the full adjacency matrix.

For this partial-support setting, we propose a modified version of the successive elimination algorithm,
where the main difference lies in the construction of the estimator for $\theta$ and the elimination decision. 
Similar to the full-support case, we also divide the entire horizon into $M = \lceil \log_2(T/2+1)\rceil$ batches,
with the $m$-th batch consisting of rounds $D_m = \{T_{m-1}+1,\ldots,T_m\}$, where $T_m = 2(2^m-1)$ for $m \in [M-1]$, and $T_0 = 0$, $T_M = T$.
At each batch, we construct an estimator for $\theta$ and 
maintain an uncertainty set $U_m \subseteq [d]$, initialized at $U_0 = [d]$ and $\hat \theta_j^{(0)} = 0$ for all $j \in [d]$.

For $m \ge 1$ and any $t \in D_m$, let 
\$ 
a_{t,j} = \begin{cases}
R_{t,j}  & \text{if } j \in U_{m-1},\\
\sign(\hat \theta_j^{(m-1)}) & \text{otherwise},
\end{cases}
\$
where $R_{t,j} \in \{\pm 1\}$ is drawn uniformly at random.
At the end of the $m$-th batch, we update the estimator for $X^*_{ij}$: 
\$ 
\hat X_{ij}^{(m)} = \frac{1}{n_m}\sum_{t \in D_m} Y_{t,i} a_{t,j}, 
~\forall  i\in[d] \text{ and } j\in U_{m-1}
\$

For each $j\in U_{m-1}$, the estimator for $\theta_j$ is then given by a hard-thresholding of the column sum of $\hat X^{(m)}$: 
\$ 
\hat \theta_j^{(m)} := \sum^d_{i=1} 
\hat X_{ij}^{(m)}\cdot \ind\Big\{|\hat X_{ij}^{(m)}| > \tau_{m}/8\Big\},
\$
where $\tau_{m} > 0$ is a threshold to be specified. The uncertainty set is updated as
\$
U_{m} = \big\{j \in U_{m-1}: |\hat \theta_j^{(m)}| \le \rho_j\tau_{m}\big\}.
\$
The complete algorithm is summarized in Algorithm~\ref{alg:col-support}.

\begin{algorithm}[ht]
\caption{Network successive elimination (NSE)}
\label{alg:col-support}
\DontPrintSemicolon 
\SetAlgoLined 
\KwIn{Horizon $T$, column support sizes $\{\rho_j\}_{j=1}^d$, 
threshold parameters $\{\tau_m\}_{m=1}^M$.} 

Initialize $T_0=0$, $U_0 = [d]$, $\hat \theta_j^{(0)} = 0$, $\forall j \in [d]$.\;
\For{$m = 1$ to $M$}{
    Set $T_m = \min\{2(2^m-1),T\}$.\;
    \For{$t = T_{m-1}+1$ to $T_m$}{
        \For{$j=1$ to $d$}{
            \eIf{$j \in U_{m-1}$}{
                Sample $a_{t,j} \in \{\pm 1\}$ uniformly at random.\;
            }{
                Set $a_{t,j} = \sign(\hat \theta_j^{(m-1)})$.\;
            }
        }
        Play action $a_t = (a_{t,1},\ldots,a_{t,d})$ and observe reward vector $Y_t$.\;
    }
    \For{$j \in U_{m-1}$}{
        \For{$i=1$ to $d$}{
            Update $\hat X_{ij}^{(m)} = \frac{1}{n_m}\sum_{t \in D_m} Y_{t,i} a_{t,j}$.\;
        }
        Update $\hat \theta_j^{(m)} = \sum^d_{i=1} 
        \hat X_{ij}^{(m)}\cdot \ind\big\{|\hat X_{ij}^{(m)}| > \tau_{m}/8\big\}$.\; 
    }
    Update $U_{m} = \big\{j \in U_{m-1}: |\hat \theta_j^{(m)}| \le \rho_j\tau_{m}\big\}$.\;
}
\end{algorithm}

We refer to the above algorithm as $\pi_\nse$ and 
the following theorem provides a high-probability upper bound on the regret of 
$\pi_{\mathsf{nse}}$ in terms of time horizon $T$, the size of the network $d$,
and the column support sizes $\rho_j$. Its proof 
provided in Appendix~\ref{app:se_upper_bound}.

\begin{theorem}[High-probability regret upper bound] 
\label{thm:se_upper_bound}
Suppose Assumptions~\ref{asst:bounded} and~\ref{asst:sparsity} hold.
For any $\delta \in (0,1)$, if the threshold parameters are chosen as $\tau_m = 16\sqrt{\frac{\log(4d^2\log_2(T)/\delta)}{2^{m-1}}}$ for $m \in [M]$, then
 with probability at least $1 - \delta$, 
the regret of Algorithm~\ref{alg:col-support} satisfies 
\$
\regret_{X^*}(\pi_{\nse}) \le c_3\sqrt{T\log(4d^2\log_2(T)^2/\delta)}\sum^d_{j=1} \rho_j, 
\$ 
where $c_3 > 0$ is a numerical constant.
\end{theorem}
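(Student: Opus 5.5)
The proof will follow the standard successive-elimination template. First I will establish a good event on which all the entrywise estimates $\hat X^{(m)}_{ij}$ concentrate. Next I will show that, on this event, no column is ever committed to the wrong sign, and that a column stays undetermined only while $|\theta_j|$ is of order $\rho_j\tau_m$. Finally I will sum the per-batch regret over columns and batches.

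\textbf{Concentration.} Fix a batch $m$, a column $j\in U_{m-1}$, and a row $i$. The quantity $Y_{t,i}a_{t,j}$ decomposes as
\[
Y_{t,i}a_{t,j} = X^*_{ij} + \sum_{k\neq j} X^*_{ik}a_{t,k}a_{t,j} + \epsilon_{t,i}a_{t,j}.
\]
Since $a_{t,j}=R_{t,j}$ is an independent Rademacher variable, the cross term $a_{t,j}\sum_{k\ne j}X^*_{ik}a_{t,k}$ has mean zero. It is also bounded in absolute value by a constant: $|X^*_{i\cdot}a_t|\le 1$ by Assumption~\ref{asst:bounded}, and $|X^*_{ij}|\le 1$ (take two actions differing only in coordinate $j$). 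The noise term is $1$-subgaussian. Hoeffding's inequality plus a subgaussian tail bound then give
\[
|\hat X^{(m)}_{ij}-X^*_{ij}|\le c\sqrt{\log(4d^2\log_2 T/\delta)/n_m}
\]
with probability $1-\delta/(d^2M)$. Because $n_m\ge 2^{m-1}$ (the last batch needs separate care, but its size is at least half of $2^m$ up to constants), this is at most $\tau_m/16$ for the chosen constant $16$. A union bound over $i,j\in[d]$ and $m\le M\le\log_2 T$ defines a good event $\mathcal E$ with $\PP(\mathcal E)\ge 1-\delta$. All remaining arguments are deterministic on $\mathcal E$.

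\textbf{Thresholded column sums.} On $\mathcal E$, consider $i\notin\supp(X^*_{\cdot j})$. Then $|\hat X_{ij}|\le\tau_m/16<\tau_m/8$, so the entry is thresholded away. For $i$ in the support, which has $\rho_j$ elements, the retained or discarded entry differs from $X^*_{ij}$ by at most $\tau_m/8+\tau_m/16$; a discarded entry satisfies $|X^*_{ij}|\le 3\tau_m/16$. Hence
\[
|\hat\theta^{(m)}_j-\theta_j|\le \tfrac{3}{16}\rho_j\tau_m<\rho_j\tau_m/2.
\]
This has two consequences.

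Correct elimination: if $j$ is eliminated, then $|\hat\theta_j|>\rho_j\tau_m$, which forces $\sign(\hat\theta_j)=\sign(\theta_j)$. Committed coordinates therefore incur zero regret. This uses that the regret decomposes coordinatewise, $\one^\top X^*(a^*-a_t)=\sum_j|\theta_j|(1-\sign(\theta_j)a_{t,j})$.

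Width bound: if $j\in U_m$, then $|\theta_j|\le \tfrac{3}{2}\rho_j\tau_m$.

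\textbf{Summing regret.} Column $j$ contributes expected regret at most $2|\theta_j|$ per round while it is undetermined. During batch $m$, the coordinates in $U_{m-1}$ satisfy $|\theta_j|\le \tfrac32\rho_j\tau_{m-1}$; for $m=1$, use the trivial bound $|\theta_j|\le\rho_j$, since each entry is at most $1$. The realized regret uses random $a_{t,j}$. Rather than take an expectation, I will bound it directly by $2|\theta_j|$ per round, which is a deterministic upper bound since $|1-\sign(\theta_j)a_{t,j}|\le 2$. So no extra concentration is needed.

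The total regret is then at most
\[
\sum_j\sum_m 2^{m}\cdot 3\rho_j\tau_{m-1}\lesssim \sum_j\rho_j\sqrt{\log(\cdot)}\sum_m 2^{m/2}\lesssim \sqrt{T\log(\cdot)}\sum_j\rho_j .
\]
The geometric sum is dominated by its last term, $\sqrt{2^M}\asymp\sqrt T$.

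\textbf{Main obstacle.} The main obstacle is the concentration step, for two reasons. First, the cross term is only mean zero conditionally on $a_{t,-j}$; it must be handled through the independence of $R_{t,j}$ from the committed coordinates and the other random coordinates. Second, the constants must be matched so that the threshold $\tau_m/8$ cleanly separates null entries while the thresholding bias stays below $\rho_j\tau_m/2$. I will first try bounding $|Y_{t,i}a_{t,j}|$ by $1+|\epsilon_{t,i}|$ and applying a single subgaussian bound with variance proxy $2$. The $\log_2(T)^2$ inside the final logarithm is then absorbed as slack from the union bound.
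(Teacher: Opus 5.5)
Your proposal is correct and is essentially the paper's proof. It uses the same steps: an entrywise Hoeffding good event, thresholding at $\tau_m/8$ to discard every off-support entry so that $|\hat\theta_j^{(m)}-\theta_j|\le\rho_j\tau_m/2$, correct-sign elimination, and the width bound $|\theta_j|\le\frac32\rho_j\tau_{m-1}$ for undetermined columns. The only difference is cosmetic: you sum this width bound over batches as a geometric series, whereas the paper inverts it into $T_{\hat m_j}\lesssim\rho_j^2\log(\cdot)/\theta_j^2$ and balances $\min\{\rho_j^2\log(\cdot)/|\theta_j|,\,|\theta_j|T\}$.

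There are two small slips, both harmless. First, with your variance-proxy-$2$ bound the entrywise deviation you can certify is $\sqrt{2}\,\tau_m/16$, not $\tau_m/16$ (the paper gets $\tau_m/8$). Anything up to $\tau_m/8$ still removes the off-support entries and keeps the column error below $\rho_j\tau_m/2$. Second, the last batch can have size as small as $1$, contrary to your claim. Its estimates never affect an action, though, so the good event is only needed for $m\le M-1$.
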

As a corollary of Theorem~\ref{thm:se_upper_bound}, we have the following expected regret upper bound
by taking $\delta = 1/(dT)$. We incorporate the proof in Appendix~\ref{app:se_upper_bound} for completeness.
\begin{corollary}[Expected regret upper bound] 
\label{cor:se_upper_bound}
Under the same conditions as in Theorem~\ref{thm:se_upper_bound}, 
when the threshold parameters are chosen as $\tau_m = 16\sqrt{\frac{\log(4d^3T\log_2(T))}{2^{m-1}}}$ for $m \in [M]$, 
the expected regret of $\pi_{\nse}$ satisfies 
\$
\EE\big[\regret_{X^*}(\pi_{\nse})\big] \le c_4\sqrt{T\log(4d^2 T \log_2(T))}\sum^d_{j=1} \rho_j,
\$ 
where $c_4 > 0$ is a numerical constant.
\end{corollary}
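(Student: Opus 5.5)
The plan is to derive the expected-regret bound directly from Theorem~\ref{thm:se_upper_bound} with the choice $\delta = 1/(dT)$, controlling the failure event with a crude worst-case bound. Substituting $\delta = 1/(dT)$ into the threshold of Theorem~\ref{thm:se_upper_bound} gives $\log(4d^2\log_2(T)/\delta) = \log(4d^3T\log_2 T)$, which is exactly the prescribed $\tau_m$. Let $\mathcal{G}$ be the good event of probability at least $1-1/(dT)$ on which the high-probability bound holds. On $\mathcal{G}$,
\[
\regret_{X^*}(\pi_{\nse}) \le c_3\sqrt{T\log(4d^3T\log_2(T)^2)}\sum_{j=1}^d\rho_j .
\]

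On $\mathcal{G}^c$ we use the deterministic bound implied by Assumption~\ref{asst:bounded}. Since $|X^*_{i\cdot}a|\le 1$ for every $a\in\cA$, we have $|\one^\top X^*(a^*-a_t)|\le 2d$ per round. Hence $\regret_{X^*}(\pi_{\nse})\le 2dT$ always, and
\[
\EE\big[\regret_{X^*}(\pi_{\nse})\ind_{\mathcal{G}^c}\big]\le 2dT\cdot \frac{1}{dT}=2 .
\]
Adding the two contributions gives $\EE[\regret]\le c_3\sqrt{T\log(4d^3T\log_2(T)^2)}\sum_j\rho_j + 2$.

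It remains to put this in the stated form. If $\sum_j\rho_j=0$, then $X^*=0$ and the regret is identically zero, so the claim is trivial. Otherwise $\sum_j\rho_j\ge1$, and for $T\ge 2$ the square-root factor is at least a constant, so the additive $2$ is absorbed into the constant.

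For the logarithm, note that $\log(4d^3T(\log_2T)^2)\le 2\log(4d^3T\log_2T)$, because $4d^3T\log_2 T\ge \log_2 T$. The corollary writes $\log(4d^2T\log_2T)$ rather than $d^3$. Since $\log(4d^3T\log_2T)\le \tfrac32\log(4d^2T\log_2T)$, as $d^3\le (d^2)^{3/2}$ and the remaining factors are at least $1$, this changes only the constant. Taking $c_4$ large enough completes the proof.

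The only delicate point is this bookkeeping of log factors and the degenerate case. There is no real obstacle, because the argument is simply a union of the high-probability event with a trivial worst-case bound.
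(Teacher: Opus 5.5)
Your proposal is correct and matches the paper's proof: set $\delta = 1/(dT)$ in Theorem~\ref{thm:se_upper_bound}, bound the regret on the failure event by the deterministic $2dT$ so that event contributes at most $2$, and absorb that into the constant. Your extra bookkeeping fills in details the paper leaves implicit. This covers the degenerate case $\sum_j \rho_j = 0$, and the conversion of the $\log(4d^3T\log_2 T)$ that the proof actually produces into the stated $\log(4d^2T\log_2 T)$ up to a factor $3/2$.
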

As we can see in Theorem~\ref{thm:se_upper_bound} and Corollary~\ref{cor:se_upper_bound},
the regret upper bounds scale with $\sqrt{T}\sum_{j=1}^d \rho_j$ up to logarithmic factors.
This matches the lower bound in Theorem~\ref{thm:LB} up to the remaining $\sqrt{\rho_j}$ gap and logarithmic factors. 
Importantly, under Assumption~\ref{asst:sparsity}, we have $\sum_{j=1}^d \rho_j \le ds$,
so the regret upper bounds are at most on the order of $ds\sqrt{T}$---the dependence on $d$ is linear,
and the dependence on $T$ is sublinear. This shows that using the network structure 
can significantly reduce the regret compared to the baseline algorithm in Section~\ref{sec3}.

\subsection{Algorithm without support information}

We now consider the setting where no support information of the network is available
beyond the row-sparsity level $s$. In this case, we adopt an explore-then-commit strategy, 
where the algorithm first explores the action space to estimate the network structure and then commits to the best action 
based on the estimated network.

To be concrete, a time period of length $T_1 = T^{2/3}s^{2/3}$ is first allocated for exploration. 
For $t \le T_1$, the algorithm samples actions $a_{t,j} \stackrel{\text{i.i.d.}}{\sim} \text{Unif}\{\pm 1\}$,
for $j\in[d]$. At the end of the exploration phase, 
for each row $i \in [d]$, the algorithm computes the 
LASSO estimator 
for estimating $X^*_{i\cdot}$:
\@\label{eq:row_lasso}
\hat{X}_{i\cdot} = \underset{\gamma \in \mathbb{R}^d}{\mathrm{argmin}}~ 
\bigg\{\frac{1}{2T_1} \sum_{t=1}^{T_1}(Y_{t, i} - \gamma^\top a_t)^2 + \lambda \|\gamma\|_1\bigg\},    
\@
where $\lambda > 0$ is a regularization parameter to be specified later.
The column sum estimator $\hat \theta_j$ is then given by $\hat \theta_j = \sum_{i=1}^d \hat X_{ij}$.
In the exploitation phase, for each round $t = T_1+1,\ldots,T$, the algorithm plays the action 
$\hat a = \sign(\hat \theta)$.

\begin{algorithm}[ht]
\caption{Network explore-then-commit (NETC)}\label{alg:netc}
\DontPrintSemicolon 
\SetAlgoLined 
\KwIn{Time horizon $T$, sparsity level $s$, regularization parameter $\lambda$.}
\tcp{\color{blue} Exploration phase.}
Set $T_1 = T^{2/3}s^{2/3}$.\;
\For{$t = 1$ to $T_1$}{
Sample $\displaystyle a_{t,j} \sim \text{Unif}\{\pm 1\},~\forall j\in[d]$\;
Play action $a_t$ and observe reward vector $Y_{t}$.\;
}
\For{$i = 1$ to $d$}{
Obtain $\hat X_{i\cdot}$ according to Equation~\eqref{eq:row_lasso}.\;
}

\tcp{\color{blue} Commit phase.}
\For{$t = T_1+1$ to $T$}{
Play action $a_t = \sign(\hat \theta)$ and observe reward vector $Y_{t}$.\;
}
\end{algorithm}

We refer to this algorithm as $\pi_\netc$ and 
summarize the complete algorithm in Algorithm~\ref{alg:netc}. 
Theorem~\ref{thm:dpUB} provides a high-probability upper bound on its regret
followed by a corollary describing the expected regret bound.

\begin{theorem}\label{thm:dpUB}
Suppose Assumptions~\ref{asst:bounded} and~\ref{asst:sparsity} hold.
For any $\delta \in(0,1)$, if $T \gtrsim s^{1/2}\log(d/\delta)^2$
and the regularization parameter is chosen as $\lambda = 4\sqrt{\frac{2\log(2d^2/\delta)}{T^{2/3}s^{2/3}}}$,
then 
with probability at least $1-\delta$, the regret of Algorithm~\ref{alg:netc} satisfies
\begin{align*}
\regret_{X^*}(\pi_{\netc}) \leq c_5 d (sT)^{2/3}\sqrt{2\log(2d^2/\delta)},
\end{align*}
where $c_5$ is a numerical constant.
\end{theorem}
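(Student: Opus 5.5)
The plan is to split the regret into an exploration part and a commit part. During exploration each round costs at most $\sum_i |X^*_{i\cdot}(a^*-a_t)| \le 2d$ by Assumption~\ref{asst:bounded}. Hence the first $T_1=(sT)^{2/3}$ rounds contribute at most $2d(sT)^{2/3}$. In the commit phase each round costs
\[
\one^\top X^*(a^*-\hat a) = \sum_j \theta_j\big(\sign(\theta_j)-\sign(\hat\theta_j)\big) \le 2\sum_j |\theta_j|\,\ind\{\sign(\hat\theta_j)\ne\sign(\theta_j)\}.
\]
Whenever the signs disagree we have $|\theta_j|\le|\theta_j-\hat\theta_j|$, so the per-round cost is at most $2\|\hat\theta-\theta\|_1 \le 2\sum_{i}\|\hat X_{i\cdot}-X^*_{i\cdot}\|_1$. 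It therefore suffices to show that, with probability at least $1-\delta$, every row satisfies $\|\hat X_{i\cdot}-X^*_{i\cdot}\|_1 \lesssim s\lambda$. With $\lambda \asymp \sqrt{\log(d/\delta)}/(sT)^{1/3}$ this gives a commit-phase cost of order $T\cdot d s\lambda \asymp d s^{2/3}T^{2/3}\sqrt{\log(d/\delta)}$. Adding the exploration term yields the claimed $c_5 d(sT)^{2/3}\sqrt{2\log(2d^2/\delta)}$.

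For the row-wise bound I will use the standard LASSO oracle inequality. On the event $\lambda \ge 2\|\frac{1}{T_1}\sum_t \epsilon_{t,i}a_t\|_\infty$, combined with a restricted eigenvalue (or compatibility) condition with constant $\kappa$ for $\hat\Sigma=\frac1{T_1}\sum_t a_ta_t^\top$ over the cone $\{\|v_{S_i^c}\|_1\le 3\|v_{S_i}\|_1\}$, $|S_i|\le s$, the estimate is $\|\hat X_{i\cdot}-X^*_{i\cdot}\|_1\le 12 s\lambda/\kappa$.

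\textbf{Noise event.} Each $\epsilon_{t,i}a_{t,j}$ is $1$-subgaussian, because $a_{t,j}\in\{\pm1\}$ is independent of the noise. A union bound over the $d^2$ pairs $(i,j)$ shows that $\max_{i,j}|\frac1{T_1}\sum_t\epsilon_{t,i}a_{t,j}|\le\sqrt{2\log(2d^2/\delta')/T_1}$ with probability at least $1-\delta'$. This matches the stated $\lambda$ up to the factor $4$, which leaves room to spare.

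\textbf{Design condition.} This is the step I expect to be the main obstacle. The population covariance is $\EE[a_ta_t^\top]=I$. For the off-diagonal entries of $\hat\Sigma$, Hoeffding's inequality and a union bound over $d^2$ entries give $\|\hat\Sigma-I\|_{\max}\le\sqrt{2\log(2d^2/\delta)/T_1}$. For any $v$ in the cone,
\[
v^\top\hat\Sigma v \ge \|v\|_2^2-\|\hat\Sigma-I\|_{\max}\|v\|_1^2 \ge \|v\|_2^2\big(1-16s\|\hat\Sigma-I\|_{\max}\big),
\]
using $\|v\|_1\le 4\sqrt{s}\|v_{S_i}\|_2$ on the cone. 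So $\kappa\ge1/2$ as soon as $32s\sqrt{2\log(2d^2/\delta)}\le T_1^{1/2}=(sT)^{1/3}$, i.e.\ $T\gtrsim s^{2}\log(d/\delta)^{3/2}$. I would first try this elementary max-norm argument and check that it is consistent with the stated condition $T\gtrsim s^{1/2}\log(d/\delta)^2$. If it is not, I would fall back on a sharper restricted-eigenvalue result for sub-Gaussian (here Rademacher) designs, such as Rudelson–Zhou, which requires only $T_1\gtrsim s\log d$.

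Finally, I will split $\delta$ between the noise event and the design event and intersect them. This gives the row-wise $\ell_1$ bound uniformly over all $i\in[d]$. Summing over the $d$ rows and over the $T-T_1\le T$ commit rounds, and adding the exploration cost, completes the argument.
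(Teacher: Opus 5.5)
Your proposal is correct, and its skeleton matches the paper's proof. The shared steps are:
\begin{itemize}
\item the split into exploration cost $2dT_1$ plus $(T-T_1)\cdot 2\|\hat\theta-\theta\|_1$ (you get the factor coordinatewise from sign disagreement; the paper gets it from optimality of $\hat a$ for $\hat\theta$ plus H\"older);
\item the bound $\|\hat\theta-\theta\|_1\le\sum_i\|\hat X_{i\cdot}-X^*_{i\cdot}\|_1$;
\item the same $d^2$-fold union bound for the noise event, the cone condition, and $\|\hat X_{i\cdot}-X^*_{i\cdot}\|_1\le 12s\lambda/\kappa$.
\end{itemize}
The one real divergence is the design condition. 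The paper proves its own restricted-eigenvalue lemma: an $\epsilon$-net over $3s$-sparse unit vectors controls sparse eigenvalues, and a block decomposition then extends this to the cone. That lemma needs only $T_1\gtrsim s\log d+\log(1/\delta)$, and with $T_1=(sT)^{2/3}$ this is where the hypothesis on $T$ comes from. Your Rudelson--Zhou fallback is that same step, cited rather than proved. Splitting $\delta$ between the two events is slightly more careful than the paper, which charges $\delta$ to each and still reports $1-\delta$; as you note, the factor $4$ in $\lambda$ absorbs the split.

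On the check you deferred: taken literally it fails, since $T\gtrsim s^2\log(d/\delta)^{3/2}$ is not implied by $T\gtrsim s^{1/2}\log(d/\delta)^2$. But you do not need the fallback. Assumption~\ref{asst:bounded} gives $\regret_{X^*}(\pi_{\netc})\le 2dT$ deterministically, and
\[
2dT\le c\,d(sT)^{2/3}\sqrt{2\log(2d^2/\delta)}\iff T\le (c/2)^3 s^2\big(2\log(2d^2/\delta)\big)^{3/2}.
\]
Choose $c_5$ a large enough numerical constant. Below your threshold $T\asymp s^2\log(d/\delta)^{3/2}$ the theorem is then trivial. Above it, your max-norm bound gives $\kappa\ge 1/2$, and $T_1\le T$ holds there as well. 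So your elementary route proves the theorem in full, with no hypothesis on $T$ at all. It uses only Hoeffding on the off-diagonal entries of $\hat\Sigma$, whose diagonal is exactly $1$.

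What the paper's covering lemma buys is the sharper sample requirement $T_1\gtrsim s\log d$ for the LASSO estimate itself. That would matter for a shorter exploration phase, or for estimation guarantees in their own right. For this regret bound it adds nothing: after enlarging $c_5$, the bound is vacuous throughout the regime where the coherence argument fails.
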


\begin{corollary}\label{cor:dpUB}
Under the same conditions as in Theorem~\ref{thm:dpUB}, the expected regret of Algorithm~\ref{alg:netc} satisfies
\begin{align*}
\EE\big[\regret_{X^*}(\pi_{\netc})\big] \leq c_6 d (sT)^{2/3}\sqrt{2\log(2d^2T)},
\end{align*}
where $c_6$ is a numerical constant.    
\end{corollary}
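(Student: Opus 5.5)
The plan is to obtain Corollary~\ref{cor:dpUB} from Theorem~\ref{thm:dpUB} by a good-event/bad-event decomposition with $\delta = 1/T$, or $\delta = 1/(dT)$ as in the earlier corollaries. Let $\cE$ be the event on which the high-probability bound of Theorem~\ref{thm:dpUB} holds, so $\PP(\cE^c)\le\delta$. We then write
\[
\EE\big[\regret_{X^*}(\pi_\netc)\big] = \EE\big[\regret_{X^*}(\pi_\netc)\ind_{\cE}\big] + \EE\big[\regret_{X^*}(\pi_\netc)\ind_{\cE^c}\big].
\]
On $\cE$, Theorem~\ref{thm:dpUB} bounds the first term by $c_5 d(sT)^{2/3}\sqrt{2\log(2d^2/\delta)}$.

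For the second term we use a deterministic worst-case bound on the regret. By Assumption~\ref{asst:bounded}, $|X^*_{i\cdot}a|\le 1$ for every $a\in\cA$ and every $i$. Hence
\[
\big|\one^\top X^*(a^*-a_t)\big| \le \sum_{i=1}^d \big(|X^*_{i\cdot}a^*| + |X^*_{i\cdot}a_t|\big) \le 2d
\]
for every round, so the regret is at most $2dT$ almost surely. The second term is therefore at most $2dT\delta$. With $\delta=1/T$ this is $2d$, which is dominated by $d(sT)^{2/3}$ because $s,T\ge1$.

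Substituting $\delta=1/T$ gives $\log(2d^2/\delta)=\log(2d^2T)$, which is exactly the form in the statement. The two terms then combine to $(c_5+2)\,d(sT)^{2/3}\sqrt{2\log(2d^2T)}$, using $\sqrt{2\log(2d^2T)}\ge1$ for $d\ge1$, $T\ge2$. So we may take $c_6=c_5+2$.

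The one point to check is that the condition of Theorem~\ref{thm:dpUB}, $T\gtrsim s^{1/2}\log(d/\delta)^2$, remains valid at $\delta=1/T$, where it becomes $T\gtrsim s^{1/2}\log(dT)^2$. I would read ``under the same conditions'' as imposing the condition at this choice of $\delta$. If needed, the case where it fails can be absorbed using the trivial bound $2dT$: when $T\lesssim s^{1/2}\log(dT)^2$, we have $T\lesssim (sT)^{2/3}$ up to logarithmic factors. This gives $2dT\lesssim d(sT)^{2/3}\cdot\mathrm{polylog}$, and that can be folded into the constant in the regime of interest. This is a routine bookkeeping point rather than a real obstacle.
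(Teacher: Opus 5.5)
Your proposal is correct and matches the paper's proof: split on the good event of Theorem~\ref{thm:dpUB}, use the deterministic bound $\regret_{X^*}(\pi_\netc)\le 2dT$ from Assumption~\ref{asst:bounded} on its complement, and absorb the lower-order term into $c_6$. The differences are minor. First, your choice $\delta=1/T$ (versus the paper's $\delta=1/(Td)$) reproduces the $\log(2d^2T)$ in the statement exactly, whereas the paper's choice gives $\log(2d^3T)\le 2\log(2d^2T)$ and relies on absorbing that discrepancy into the constant. Second, your fallback for when the sample-size condition fails is unnecessary, since the paper likewise just imposes the condition at the chosen $\delta$; as written it would also leave an extra factor up to $\log(dT)^{1/6}$ rather than a pure constant, so it is best dropped.
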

The proofs of Theorem~\ref{thm:dpUB} and Corollary~\ref{cor:dpUB} are 
provided in Appendix~\ref{app:dpUB} and~\ref{app:cor-dpUB}, respectively. 
Our proof extends the explore-then-commit procedure in 
\citet{hao2020high} to our network setting. 
Compared to the algorithms with structural knowledge that achieve $\sqrt{T}$-type regret, the regret here is inflated by $T^{1/6}$.
This difference
has also been observed by \citet{hao2020high} when there is no interference, suggesting that $T^{2/3}$ may be the best we can achieve without support information. Finally, when $T < d^3$, this algorithm will outperform the baseline; intuitively, the baseline algorithm requires a long exploration phase since it is solving a non-sparse $d$-dimensional bandit problem.

\section{Experiments}\label{sec5}

We evaluate the empirical performance of our algorithms using simulated networks 
and semi-synthetic data constructed from social networks in Indian villages,
collected by \citet{banerjee2013diffusion}. Throughout, we compare four methods:
\begin{enumerate}
\item[(i)] a baseline that applies the {\em upper confidence bound (UCB)} 
algorithm to the aggregated reward~\citep{abbasi2011improved};
\item[(ii)] NSE-FS (Algorithm~\ref{alg:full-support}), which exploits full support information;
\item[(iii)] NSE (Algorithm~\ref{alg:col-support}), which uses column support sizes; 
\item[(iv)] NETC (Algorithm~\ref{alg:netc}), which uses only row sparsity.
\end{enumerate}
\paragraph{Implementation details.} Optimization problems are solved with Gurobi. For NETC, we set the Lasso regularization to $\lambda=0.035$ and the exploration length to $T_1=200$ for the simulated experiments and $T_1=300$ for the village experiment. For NSE, we use the one-hot estimator with $\tau_m=c_\tau\sqrt{2\log(2T)/T_m}$ and $c_\tau=0.2$. For NSE-FS, we estimate $\widehat X^{(m)}$ using ordinary least squares and apply hard-threshold elimination with threshold parameter $\delta=0.05$ and threshold constant $8.0$.

\subsection{Experiments with simulated data}\label{sec:simulated_data}

First, we perform experiments using a randomly generated network. 
We fix the action set to be the hypercube $\cA = [-1, 1]^d$, 
and run simulations with $d = 100$ and $T = 20000$. 

The networks and heterogeneous signals are generated using a mixed signal model. For any $i,j\in[d]$, 
we first generate the latent variables:
\[
Z_{ij} \sim \mathrm{Unif}(-1,1),~ 
U_{ij} \sim \mathrm{Unif}(0,1),~R_{ij} \sim \mathrm{Unif}\{\pm 1\},
\]
and then define $X_{ij}$ to be
\begin{equation}\label{eq:signal_function}
X_{ij}
=
\begin{cases}
0, 
& \text{if } i \neq j \text{ and } U_{ij} > s_0/d, \\[6pt]
\beta \cdot Z_{ij}, 
& \text{if } \big(i=j \text{ or } U_{ij} \le s_0/d \big)
\text{ and } R_{ij}=1, \\[6pt]

0.001\beta \cdot Z_{ij}, 
& \text{if } \big(i=j \text{ or } U_{ij} \le s_0/d\big)
\text{ and } R_{ij}=-1,
\end{cases}
\end{equation}
where $\beta=0.1$ determines the signal strength and $s_0=20$ determines the expected row-sparsity. The outcome is generated as in \eqref{eq:model} with $\epsilon_{t,i} \stackrel{\text{iid}}{\sim} N(0, 1)$, $\forall t\in[T], i\in[d]$.
Additional simulation results with varying values of $\beta$ and $s_0$
are deferred to Appendix~\ref{app:experiments}.

We run the four candidate algorithms and plot the averaged cumulative regret over the time horizon
in Figure~\ref{fig:sim_main}. The structure-agnostic baseline incurs the largest regret throughout.
Incorporating network structure significantly improves performance:
the no-information method (NETC) reduces regret relative to the baseline,
and the two structure-aware algorithms (NSE and NSE-FS) further improve performance, achieving the lowest cumulative regret---the curves for partial and full support knowledge nearly coincide by the end of the horizon, consistent with the fact that both achieve $\sqrt{T}$-type regret under sparsity. The gap between the structure-aware methods and the baseline grows over time, indicating clear differences in learning efficiency.

\begin{figure}[ht]
\centering
\includegraphics[width=0.7\textwidth]{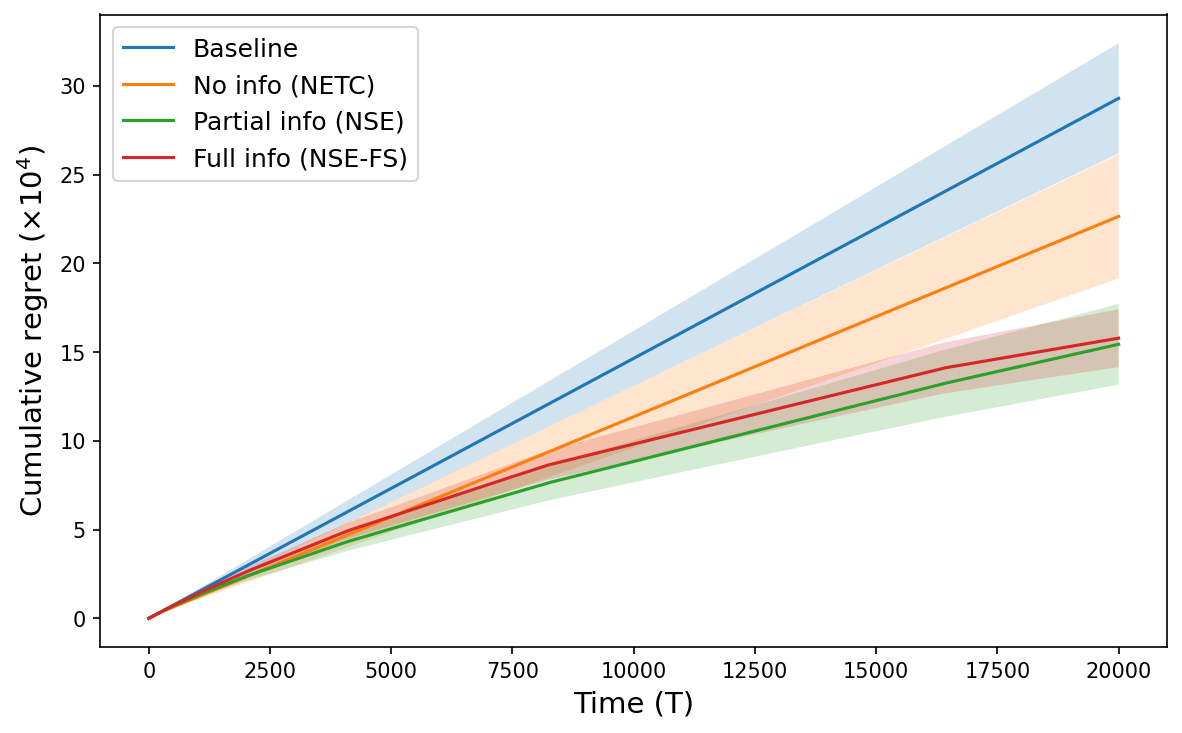}
\caption{Cumulative regret versus time of the four candidate algorithms. 
Blue denotes the baseline; 
orange denotes no network information (Algorithm~\ref{alg:netc}); 
green denotes partial network information (Algorithm~\ref{alg:col-support}); 
red denotes full network information (Algorithm~\ref{alg:full-support}). 
Solid lines represent the mean cumulative regret over 200 independent runs, 
and shaded regions indicate one standard deviation.}
\label{fig:sim_main}
\end{figure}

Next, we examine how regret scales with the network dimension $d$.
Figure~\ref{fig:per-individual} reports the per-individual cumulative regret, 
$\regret_{X^*}(\pi)/d$, of our three proposed algorithms as $d$ ranges from 100 to 900, 
with $s_0=20$ and $T=20000$ held fixed. The NETC algorithm, 
which uses no network information beyond the row-sparsity level, 
exhibits per-individual regret that is essentially flat in $d$, 
consistent with its $\tilde{O}((sT)^{2/3})$ per-individual bound. 
The two support-aware algorithms (NSE and NSE-FS) incur lower per-individual 
regret at small $d$ and grow only slowly as $d$ increases before plateauing. 
Overall, all three proposed algorithms maintain per-individual regret that is 
bounded in $d$ (up to log factors), in sharp contrast to the super-linear scaling of the baseline.

\begin{figure}[ht!]
\centering
\includegraphics[width=0.7\textwidth]{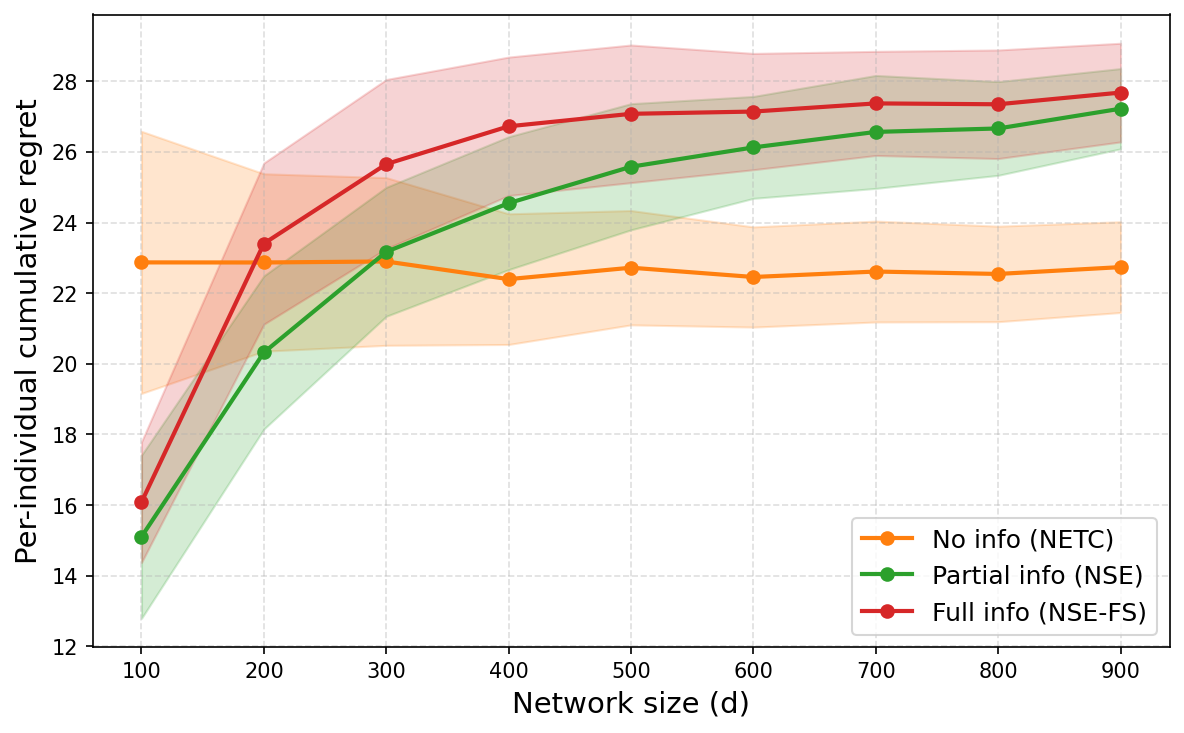}
\caption{Per-individual regret of proposed algorithms versus the network dimension $d$. 
Orange denotes no network information (Algorithm~\ref{alg:netc}); 
green denotes partial network information (Algorithm~\ref{alg:col-support}); 
red denotes full network information (Algorithm~\ref{alg:full-support}).
Solid lines represent the mean per-individual cumulative regret over 100 independent runs,
and shaded regions indicate one standard deviation.}
\label{fig:per-individual}
\end{figure}

\subsection{Semi-synthetic experiments with Indian village data}

Next, we use networks from the Indian village dataset published in \citet{banerjee2013diffusion}, which contains social networks of 74 villages in rural southern Karnataka, a state in India. The adjacency matrices in the dataset encode undirected graphs representing the social network of a village at the household level. Table~\ref{tab:village_summary_statistics} shows summary statistics of these networks, where sparsity is defined as the upper bound of the fractional row sparsity of each network, i.e., the largest number of other households to whom any single household is connected, divided by the total number of households. 
We use these networks as our sparse network structure, and then generate the random 
heterogeneous treatment effects for each edge in the network according to the model in~\eqref{eq:signal_function}.
With this fixed treatment effects matrix for each village, 
we repeat each of our algorithms 5 times using different random seeds. 
We compare the performance of all three proposed algorithms along with the baseline as before.

Since each village has a different number of individuals ($d$), we compare the per-individual cumulative regret of our algorithms, averaged across all 74 villages with 5 runs each. Figure~\ref{fig:village} shows the aggregated per-individual cumulative regret for $T = 20000$. As can be seen, the performance is consistent with the results from simulated data.

\begin{table}[t]
\centering
\caption{Summary statistics for Indian village dataset.}
\label{tab:village_summary_statistics}
\begin{tabular}{lrrrrr}
\toprule
Statistic & Mean & Median & Std & Min & Max \\
\midrule
Individuals ($d$) & 199.31 & 191.00 & 59.87 & 77.00 & 356.00 \\
Sparsity ($s/d$) & 0.20 & 0.18 & 0.07 & 0.10 & 0.37 \\
\bottomrule
\end{tabular}
\end{table}

\begin{figure}[t]
\centering
\includegraphics[width=0.6\textwidth]{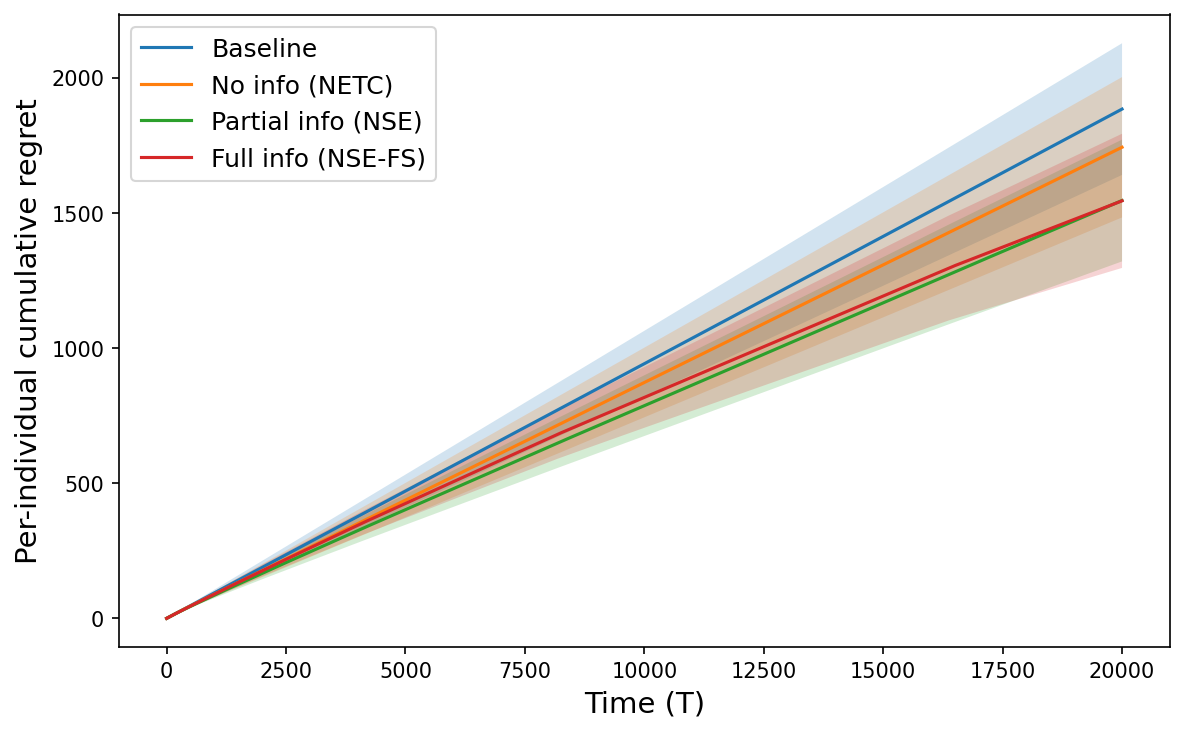}
\caption{Per-individual cumulative regret comparison. Blue denotes the baseline; 
orange denotes no network information (Algorithm~\ref{alg:netc}); 
green denotes partial network information (Algorithm~\ref{alg:col-support}); 
red denotes full network information (Algorithm~\ref{alg:full-support}). 
Solid lines represent the mean per-individual cumulative regret over 74 villages, each with 5 runs over different random seeds, 
and shaded regions indicate one standard deviation.}\label{fig:village}
\end{figure}

\section{Conclusion}

This paper studies adaptive targeting under network interference in a bandit setting 
where treatment effects may propagate across individuals through an unknown sparse network. 
Our work provides a fine-grained analysis of how the availability 
of structural information about the interference network affects the difficulty of online learning.
Our results highlight the importance of incorporating network structure when learning targeting policies in the presence of spillovers. 
Even limited knowledge of the interference structure can dramatically improve learning efficiency,
while fully ignoring network effects leads to substantial regret penalties. We close the paper by discussing the limitations and future directions.

\paragraph{Known sparsity level.} 
In practice, the sparsity parameter $s$ needs to be specified for NETC. Typically, we need to estimate $s$ either from prior knowledge about the network, by running a pilot study, or by using initial exploration data to estimate an upper bound on $s$ since it is usually unknown. We note that $s$ is used to determine the exploration length $T_1$ for Algorithm~\ref{alg:netc}. In practice, one can also use cross-validation to determine the length 
(see discussion in \citet{agarwal_multi-armed_2024}).

\paragraph{Future directions.} Several directions remain for future work. First, it would be interesting to close the remaining gap between upper and lower bounds in the unknown-network regime and determine whether the $(Ts)^{2/3}$ rate is optimal. Second, extending our framework to more general interference models---such as nonlinear spillovers or dynamic networks---would broaden its applicability. Finally, integrating inference objectives with regret minimization, as studied in recent work on experimentation under interference, is an important direction for designing practical adaptive experimentation systems.

\subsection*{Reproducibility}
Code for reproducing the numerical results in this paper is available at \url{https://github.com/Xiaomengwang99/Interference_bandit_public.git}.

\subsection*{Acknowledgments}
The authors would like to thank the Wharton Research Computing team for the computational resources
provided and the great support from the staff members.  
Z.~R.~is supported by the National Science Foundation (NSF) under grant DMS-2413135
and Wharton Analytics. 
\bibliographystyle{apalike}
\bibliography{main_reference}

@article{athey2016recursive,
  title={Recursive partitioning for heterogeneous causal effects},
  author={Athey, Susan and Imbens, Guido},
  journal={Proceedings of the National Academy of Sciences},
  volume={113},
  number={27},
  pages={7353--7360},
  year={2016},
  publisher={National Academy of Sciences}
}

@article{hao2020high,
  title={High-dimensional sparse linear bandits},
  author={Hao, Botao and Lattimore, Tor and Wang, Mengdi},
  journal={Advances in Neural Information Processing Systems},
  volume={33},
  pages={10753--10763},
  year={2020}
}

@book{lattimore2020bandit,
  title={Bandit algorithms},
  author={Lattimore, Tor and Szepesv{\'a}ri, Csaba},
  year={2020},
  publisher={Cambridge University Press}
}

@article{abbasi2011improved,
  title={Improved algorithms for linear stochastic bandits},
  author={Abbasi-Yadkori, Yasin and P{\'a}l, D{\'a}vid and Szepesv{\'a}ri, Csaba},
  journal={Advances in neural information processing systems},
  volume={24},
  year={2011}
}

@inproceedings{abbasi2012online,
  title={Online-to-confidence-set conversions and application to sparse stochastic bandits},
  author={Abbasi-Yadkori, Yasin and P{\'a}l, D{\'a}vid and Szepesv{\'a}ri, Csaba},
  booktitle={Artificial Intelligence and Statistics},
  pages={1--9},
  year={2012},
  organization={PMLR}
}

@inproceedings{carpentier2012bandit,
  title={Bandit theory meets compressed sensing for high dimensional stochastic linear bandit},
  author={Carpentier, Alexandra and Munos, R{\'e}mi},
  booktitle={Artificial Intelligence and Statistics},
  pages={190--198},
  year={2012},
  organization={PMLR}
}

@inproceedings{wang2018minimax,
  title={Minimax concave penalized multi-armed bandit model with high-dimensional covariates},
  author={Wang, Xue and Wei, Mingcheng and Yao, Tao},
  booktitle={International Conference on Machine Learning},
  pages={5200--5208},
  year={2018},
  organization={PMLR}
}

@article{kim2019doubly,
  title={Doubly-robust lasso bandit},
  author={Kim, Gi-Soo and Paik, Myunghee Cho},
  journal={Advances in Neural Information Processing Systems},
  volume={32},
  year={2019}
}

@article{bastani2020online,
  title={Online decision making with high-dimensional covariates},
  author={Bastani, Hamsa and Bayati, Mohsen},
  journal={Operations Research},
  volume={68},
  number={1},
  pages={276--294},
  year={2020},
  publisher={INFORMS}
}

@article{lattimore2015linear,
  title={Linear multi-resource allocation with semi-bandit feedback},
  author={Lattimore, Tor and Crammer, Koby and Szepesv{\'a}ri, Csaba},
  journal={Advances in Neural Information Processing Systems},
  volume={28},
  year={2015}
}

@article{ren2023dynamic,
  title={Dynamic batch learning in high-dimensional sparse linear contextual bandits},
  author={Ren, Zhimei and Zhou, Zhengyuan},
  journal={Management Science},
  year={2023},
  publisher={INFORMS}
}

@inproceedings{li2022interference,
  title={Interference, bias, and variance in two-sided marketplace experimentation: Guidance for platforms},
  author={Li, Hannah and Zhao, Geng and Johari, Ramesh and Weintraub, Gabriel Y},
  booktitle={Proceedings of the ACM Web Conference 2022},
  pages={182--192},
  year={2022}
}

@article{johari2022experimental,
  title={Experimental design in two-sided platforms: An analysis of bias},
  author={Johari, Ramesh and Li, Hannah and Liskovich, Inessa and Weintraub, Gabriel Y},
  journal={Management Science},
  volume={68},
  number={10},
  pages={7069--7089},
  year={2022},
  publisher={INFORMS}
}

@inproceedings{ugander2013graph,
  title={Graph cluster randomization: Network exposure to multiple universes},
  author={Ugander, Johan and Karrer, Brian and Backstrom, Lars and Kleinberg, Jon},
  booktitle={Proceedings of the 19th ACM SIGKDD international conference on Knowledge discovery and data mining},
  pages={329--337},
  year={2013}
}

@article{banerjee2013diffusion,
  title={The diffusion of microfinance},
  author={Banerjee, Abhijit and Chandrasekhar, Arun G and Duflo, Esther and Jackson, Matthew O},
  journal={Science},
  volume={341},
  number={6144},
  pages={1236498},
  year={2013},
  publisher={American Association for the Advancement of Science}
}

@article{sussman2017elements,
  title={Elements of estimation theory for causal effects in the presence of network interference},
  author={Sussman, Daniel L and Airoldi, Edoardo M},
  journal={arXiv preprint arXiv:1702.03578},
  year={2017}
}

@article{akbarpour2020just,
  title={Just a few seeds more: value of network information for diffusion},
  author={Akbarpour, Mohammad and Malladi, Suraj and Saberi, Amin},
  journal={Available at SSRN 3062830},
  year={2020}
}

@inproceedings{toulis2013estimation,
  title={Estimation of causal peer influence effects},
  author={Toulis, Panos and Kao, Edward},
  booktitle={International conference on machine learning},
  pages={1489--1497},
  year={2013},
  organization={PMLR}
}

@article{yu2022estimating,
  title={Estimating the total treatment effect in randomized experiments with unknown network structure},
  author={Yu, Christina Lee and Airoldi, Edoardo M and Borgs, Christian and Chayes, Jennifer T},
  journal={Proceedings of the National Academy of Sciences},
  volume={119},
  number={44},
  pages={e2208975119},
  year={2022},
  publisher={National Academy of Sciences}
}

@article{bojinov2023design,
  title={Design and analysis of switchback experiments},
  author={Bojinov, Iavor and Simchi-Levi, David and Zhao, Jinglong},
  journal={Management Science},
  volume={69},
  number={7},
  pages={3759--3777},
  year={2023},
  publisher={INFORMS}
}

@article{jia2024multiarmed,
  title={Multi-armed bandits with interference},
  author={Jia, Su and Frazier, Peter and Kallus, Nathan},
  journal={arXiv preprint arXiv:2402.01845},
  year={2024}
}

@article{leung2020treatment,
  title={Treatment and spillover effects under network interference},
  author={Leung, Michael P},
  journal={Review of Economics and Statistics},
  volume={102},
  number={2},
  pages={368--380},
  year={2020},
  publisher={MIT Press One Rogers Street, Cambridge, MA 02142-1209, USA journals-info~…}
}

@article{eckles2017design,
  title={Design and analysis of experiments in networks: Reducing bias from interference},
  author={Eckles, Dean and Karrer, Brian and Ugander, Johan},
  journal={Journal of Causal Inference},
  volume={5},
  number={1},
  pages={20150021},
  year={2017},
  publisher={De Gruyter}
}

@article{hu2022average,
  title={Average direct and indirect causal effects under interference},
  author={Hu, Yuchen and Li, Shuangning and Wager, Stefan},
  journal={Biometrika},
  volume={109},
  number={4},
  pages={1165--1172},
  year={2022},
  publisher={Oxford University Press}
}

@article{xu2024linear,
  title={Linear Contextual Bandits with Interference},
  author={Xu, Yang and Lu, Wenbin and Song, Rui},
  journal={arXiv preprint arXiv:2409.15682},
  year={2024}
}

@article{shirani2023causal,
  title={Causal Message Passing: A Method for Experiments with Unknown and General Network Interference},
  author={Shirani, Sadegh and Bayati, Mohsen},
  journal={arXiv preprint arXiv:2311.08340},
  year={2023}
}

@inproceedings{seeman2013adaptive,
  title={Adaptive seeding in social networks},
  author={Seeman, Lior and Singer, Yaron},
  booktitle={2013 IEEE 54th Annual Symposium on Foundations of Computer Science},
  pages={459--468},
  year={2013},
  organization={IEEE}
}

@article{bastani2021efficient,
  title={Efficient and targeted COVID-19 border testing via reinforcement learning},
  author={Bastani, Hamsa and Drakopoulos, Kimon and Gupta, Vishal and Vlachogiannis, Ioannis and Hadjichristodoulou, Christos and Lagiou, Pagona and Magiorkinis, Gkikas and Paraskevis, Dimitrios and Tsiodras, Sotirios},
  journal={Nature},
  volume={599},
  number={7883},
  pages={108--113},
  year={2021},
  publisher={Nature Publishing Group UK London}
}

@inproceedings{li2010contextual,
  title={A contextual-bandit approach to personalized news article recommendation},
  author={Li, Lihong and Chu, Wei and Langford, John and Schapire, Robert E},
  booktitle={Proceedings of the 19th international conference on World wide web},
  pages={661--670},
  year={2010}
}

@article{agarwal_multi-armed_2024,
  title={Mutli-armed bandits with network interference},
  author={Agarwal, Abhineet and Agarwal, Anish and Masoero, Lorenzo and Whitehouse, Justin},
  journal={Advances in Neural Information Processing Systems},
  volume={37},
  pages={36414--36437},
  year={2024}
}

@article{jamshidi2025graph,
  title={Graph-Dependent Regret Bounds in Multi-Armed Bandits with Interference},
  author={Jamshidi, Fateme and Shahverdikondori, Mohammad and Kiyavash, Negar},
  journal={arXiv preprint arXiv:2503.07555},
  year={2025}
}

@article{li2022random,
  title={Random graph asymptotics for treatment effect estimation under network interference},
  author={Li, Shuangning and Wager, Stefan},
  journal={The Annals of Statistics},
  volume={50},
  number={4},
  pages={2334--2358},
  year={2022},
  publisher={Institute of Mathematical Statistics}
}

@article{wang2025designbasedbandits,
  title={Design-based bandits under network interference: Trade-off between regret and statistical inference},
  author={Wang, Zichen and Hong, Haoyang and Li, Chuanhao and Li, Haoxuan and Zhang, Zhiheng and Wang, Huazheng},
  journal={Advances in Neural Information Processing Systems},
  volume={38},
  pages={119314--119345},
  year={2026}
}

@article{gleich2025scalablepolicy,
  title={Scalable Policy Maximization Under Network Interference},
  author={Gleich, David and Laber, Eric and Volfovsky, Alexander},
  journal={arXiv preprint arXiv:2505.18118},
  year={2025},
  note={AISTATS 2026 poster}
}

@article{zhang2024onlineexpdesign,
  title={Online experimental design with estimation-regret trade-off under network interference},
  author={Zhang, Zhiheng and Wang, Zichen},
  journal={Advances in Neural Information Processing Systems},
  volume={38},
  pages={15910--15950},
  year={2026}
}

@article{viviano2025policytargeting,
  title={Policy Targeting under Network Interference},
  author={Viviano, Davide},
  journal={Review of Economic Studies},
  volume={92},
  number={2},
  pages={1257--1292},
  year={2025}
}

@article{schroder2026exposure,
  title={Causal Inference on Networks under Misspecified Exposure Mappings: A Partial Identification Framework},
  author={Schr{\"o}der, Lukas and Oprescu, Miruna and Feuerriegel, Stefan and Kallus, Nathan},
  journal={arXiv preprint arXiv:2602.03459},
  year={2026}
}

@article{huber2026learningexposure,
  title={Learning and Testing Exposure Mappings of Interference using Graph Convolutional Autoencoder},
  author={Huber, Matthias and Kueck, Julian and Mattes, Martin},
  journal={arXiv preprint arXiv:2601.05728},
  year={2026}
}

@article{aronow2017estimating,
 ISSN = {19326157},
 URL = {http://www.jstor.org/stable/26362172},
 author = {Peter M. Aronow and Cyrus Samii},
 journal = {The Annals of Applied Statistics},
 number = {4},
 pages = {1912--1947},
 publisher = {Institute of Mathematical Statistics},
 title = {ESTIMATING AVERAGE CAUSAL EFFECTS UNDER GENERAL INTERFERENCE, WITH APPLICATION TO A SOCIAL NETWORK EXPERIMENT},
 urldate = {2026-05-03},
 volume = {11},
 year = {2017}
}

@article{tropp2015introduction,
  title={An introduction to matrix concentration inequalities},
  author={Tropp, Joel A},
  journal={Foundations and trends{\textregistered} in machine learning},
  volume={8},
  number={1-2},
  pages={1--230},
  year={2015},
  publisher={Emerald Publishing Limited}
}

@article{basse2019randomization,
  title={Randomization tests of causal effects under interference},
  author={Basse, Guillaume W and Feller, Avi and Toulis, Panos},
  journal={Biometrika},
  volume={106},
  number={2},
  pages={487--494},
  year={2019},
  publisher={Oxford University Press}
}

\newpage
\appendix

\section{Proof of main theorems}

\subsection{Proof of Theorem~\ref{thm:instance-baselineLB}}\label{app:baselineLB}
Fix a policy $\pi$ such that $a_t \in \sigma(a_1,Z_1,\ldots,a_{t-1},Z_{t-1})$
for any $t \in [T]$. 
We consider a set of instances indexed by $\{-1,1\}^d$. To be specific,  
for any $\sigma \in \{-1,1\}^d$, we define the network treatment effect matrix $X^{(\sigma)}$
by letting $X^{(\sigma)}_{ij} = S_{ij}\Delta_j\sigma_j$, $\forall i,j \in [d]$, where 
$\Delta_1,\ldots,\Delta_d \ge 0$ are scaling constants to be determined later.
By construction, it can be checked that the support of $X^{(\sigma)}$ coincides with $S$, 
and the sign of each column $j$ is determined by $\sigma_j$.
For notational simplicity, we shall use $\EE_{\sigma}$ and 
$\PP_{\sigma}$ to refer to the expectation and 
probability under $X^{(\sigma)}$.

Note that the incurred reward of any action $a \in \mathcal{A}$ can be computed as
\$
\one^\top X^{(\sigma)} a = \sum^{d}_{i,j=1} S_{ij}\Delta_j \sigma_j a_j = 
\sum^d_{j=1} \rho_j \Delta_j \sigma_j a_j.
\$
We then lower bound the worst-case regret by 
the average regret over the instances indexed by $\{-1,1\}^d$:
\begin{align}\label{eq:lower_bnd_2}
\sup_{X \in \mathbb{M}(S)} \EE[\regret_X(\pi)] & \ge \frac{1}{2^d} \sum_{\sigma \in \{-1,1\}^d}
\EE[\regret_{X^{(\sigma)}}(\pi)] \notag\\
  & =  \frac{1}{2^d} \sum_{\sigma \in \{-1,1\}^d}
  \EE_{\sigma}\Bigg[\sum^{T}_{t=1}
  \sum^{d}_{j=1} \big(\rho_j \Delta_j \sigma_j a^*_j - \rho_j \Delta_j
  \sigma_j a_{t,j} \big)\Bigg] \nonumber \\
  & =  \frac{1}{2^d} \sum_{\sigma \in \{-1,1\}^d}
  \sum_{j=1}^{d} \rho_j \Delta_j 
  \EE_{\sigma}\Bigg[\sum^{T}_{t=1} \big(|\sigma_j| - \sigma_j a_{t,j}) \Bigg] \nonumber\\ 
  & \ge  \frac{1}{2^d} \sum_{\sigma \in \{-1,1\}^d}
  \sum_{j=1}^{d} \rho_j \Delta_j 
  \EE_{\sigma}\Bigg[\sum^{T}_{t=1} \one\{\sign(a_{t,j}) \neq \sigma_j\} \Bigg], 
\end{align}
where the last inequality is because when $\sign(a_{t,j}) \neq \sigma_j$, 
$|\sigma_j| - \sigma_j a_{t,j} = 1 - \sigma_j a_{t,j} \ge 1$.
Applying Markov's inequality, we further have that 
\[
 \eqref{eq:lower_bnd_2} \ge  \frac{T}{2^{d+1}} 
  \sum_{j=1}^{d} \sum_{\sigma \in \{-1,1\}^d} \rho_j \Delta_j 
  \PP_{\sigma}\Bigg(\sum^T_{t=1} \one\{\sign(a_{t,j}) \neq \sigma_j\} \ge 
  \frac{T}{2}\Bigg). 
\]
For any $\sigma \in \{-1,1\}^d$ and $j \in [d]$, we define 
$\sigma^{(j)} = (\sigma_1,\ldots,-\sigma_j,\ldots,\sigma_d)$, 
where we flip the sign of the $j$-th item of $\sigma$.
It can then be checked that $\one\{\sign(a_{t,j}) \neq \sigma_j \} = \one\{\sign(a_{t,j}) = \sigma_j^{(j)}\}$.
We then have 
\begin{align*}
&\PP_{\sigma}\bigg(\sum^T_{t=1} \one\{\sign(a_{t,j}) \neq \sigma_j\} \ge 
\frac{T}{2}\bigg)+ 
\PP_{\sigma^{(j)}}\bigg(\sum^T_{t=1} \one\{\sign(a_{t,j}) \neq \sigma_j^{(j)}\} \ge 
\frac{T}{2}\bigg) \\
=&\PP_{\sigma}\bigg(\sum^T_{t=1} \one\{\sign(a_{t,j}) \neq \sigma_j\} \ge 
\frac{T}{2}\bigg)+ 
\PP_{\sigma^{(j)}}\bigg(\sum^T_{t=1} \one\{\sign(a_{t,j}) \neq  \sigma_j\} \le 
\frac{T}{2}\bigg)\\ 
\ge &\PP_{\sigma}\bigg(\sum^T_{t=1} \one\{\sign(a_{t,j}) \neq \sigma_j\} \ge 
\frac{T}{2}\bigg)+ 1-  
\PP_{\sigma^{(j)}}\bigg(\sum^T_{t=1} \one\{\sign(a_{t,j}) \neq  \sigma_j\} \ge 
\frac{T}{2}\bigg)\\ 
\ge& 1 - \text{TV}(\PP_{\sigma}, \PP_{\sigma^{(j)}}) \\
\ge& 1 - \sqrt{\tfrac{1}{2}\cdot D_{\rm KL}(\PP_{\sigma} \,\|\, \PP_{\sigma^{(j)}})},
\end{align*}
where the penultimate step follows from the definition of TV distance, and
the last step follows from Pinsker's inequality.
Since the algorithm operates on $(a_t,Z_t)_{t=1}^T$, we 
can decompose the KL-divergence between $\PP_\sigma$ and 
$\PP_{\sigma^{(j)}}$ as 
\begin{align*}
D_{\rm KL}(\PP_{\sigma} \,\|\, \PP_{\sigma^{(j)}})
&=  \EE_{\sigma}\Bigg[\log\Bigg(\prod_{t=1}^T 
\frac{d\PP_{\sigma, Z_{t} \given a_t}}{d\PP_{\sigma^{(j)}, Z_{t} \given a_t}}
\Bigg)\Bigg]
=  \sum^T_{t=1}
\EE_{\sigma}\bigg[\log\Bigg(
\frac{d\PP_{\sigma, Z_{t} \given a_t}}{d\PP_{\sigma^{(j)}, Z_{t} \given a_t}}
\Bigg)\Bigg], 
\end{align*}
where $\PP_{\sigma}( Z_t \given a_t)$ denotes the distribution of $Z_t$ given $a_t$ under $X_{\sigma}$.
Since $\sigma$ differs from $\sigma^{(j)}$ in only one entry, the conditional means of $Z_t$
under the two instances differ by at most $2\rho_j \Delta_j |a_{t,j}|$.
Recalling that $\eta_t$ is $d$-sub-Gaussian and $|a_{t,j}| \le 1$, we have 
\begin{align*}
\sum^T_{t=1}
\EE_{\sigma}\Bigg[\log\Bigg(
\frac{d\PP_{\sigma, Z_{t} \given a_t}}{d\PP_{\sigma^{(j)}, Z_{t} \given a_t}}
\Bigg)\Bigg]
\le  \frac{2 T \rho_j^2 \Delta_j^2}{d}.
\end{align*}
We then take $\Delta_j = \min\{\frac{\sqrt{d}}{2\sqrt{T}\rho_j}, \frac{1}{s}\}$ if 
$\rho_j >0$, and $\Delta_j = 0$ otherwise. The choice ensures 
that for every row $i$ and action $a \in \cA$,
\[
\big|X^{(\sigma)}_{i\cdot} a\big|
\le \sum_{j:S_{ij}=1} \Delta_j |a_j|
\le \sum_{j:S_{ij}=1} \frac{1}{s}
\le 1,
\]
so Assumption~\ref{asst:bounded} is satisfied.
By this choice, we have
\[
\PP_{\sigma}\Bigg(\sum^T_{t=1} \one\{\sign(a_{t,j}) \neq \sigma_j\} \ge 
\frac{T}{2}\Bigg) + 
\PP_{\sigma^{(j)}}\Bigg(\sum^T_{t=1} \one\{\sign(a_{t,j}) \neq \sigma_j^{(j)}\} \ge 
\frac{T}{2}\Bigg)\ge 1/2,
\]
and consequently 
\$
\eqref{eq:lower_bnd_2}   \ge &\frac{T}{2^{d+1}}
\sum^{d}_{j=1}\rho_j\Delta_j 
\sum_{\sigma \in \{\pm 1\}^d, \sigma_j = 1}
\Bigg\{\PP_{\sigma}\Bigg(\sum^T_{t=1} \one\{\sign(a_{t,j}) \neq \sigma_j\} \ge 
\frac{T}{2}\Bigg) +\PP_{\sigma^{(j)}}\Bigg(\sum^T_{t=1} \one\{\sign(a_{t,j}) \neq \sigma_j^{(j)}\} \ge 
\frac{T}{2}\Bigg) \Bigg\}\\ 
\ge & \frac{T}{8} \sum^d_{j=1}\rho_j\Delta_j\\
\!= & \frac{T}{8} \sum_{j=1}^d \min\Big\{\frac{\sqrt{d}}{2\sqrt{T}}\ind\{\rho_j > 0\},
\frac{\rho_j}{s}\Big\}\\
\!= & \frac{\sqrt{T}}{16}\sum_{j=1}^d
\min \Big\{\sqrt{d}\cdot \ind\{\rho_j > 0\}, \frac{2\sqrt{T}}{s}\rho_j\Big\}. 
\$
The proof is therefore completed.

\subsection{Proof of Theorem~\ref{thm:LB}}\label{app:LB}
Fix a policy $\pi$ such that $a_t \in \sigma(a_1,Y_1,\ldots,a_{t-1},Y_{t-1})$, $\forall t \in [T]$. 
As in the proof of Theorem~\ref{thm:instance-baselineLB}, 
we consider a set of instances indexed by $\{-1,1\}^d$. 
For each $\sigma \in \{-1,1\}^d$,  we define the corresponding matrix to be
$X^{(\sigma)}_{ij} = S_{ij}\Delta_j\sigma_j$, $\forall i,j \in[d]$, 
where $\Delta_1,\ldots,\Delta_d > 0$ are scaling parameters to be determined shortly.
By construction, the support of $X^{(\sigma)}$ coincides with $S$.

Lower bounding the worst case regret by the average regret over the 
instances in $\{-1,1\}^d$, we have 
\begin{align}\label{eq:lower_bnd}
  \sup_{X \in \mathbb{M}(S)} ~ \EE[\regret_X(\pi)]
  \ge \frac{1}{2^d}\sum_{\sigma \in \{-1,1\}^d} 
  \EE[\regret_{X^{(\sigma)}}(\pi)]. 
\end{align}
For each $\sigma \in \{-1,1\}^d$, we can explicitly write out the regret and 
therefore 
\begin{align}\label{eq:lower_bnd_mid}
  \eqref{eq:lower_bnd}  & = \frac{1}{2^d} \sum_{\sigma \in \{-1,1\}^d}
  \EE_{\sigma}\Bigg[\sum^{T}_{t=1}
  \sum^{d}_{j=1}\Delta_j\rho_j\cdot (\sigma_j a^*_j - \sigma_j a_{t,j}) \Bigg] \nonumber \\
  & =  \frac{1}{2^d} \sum_{\sigma \in \{-1,1\}^d}
  \sum_{j=1}^{d} \Delta_j \rho_j 
  \EE_{\sigma}\Bigg[\sum^{T}_{t=1} (1 - \sigma_j a_{t,j}) \Bigg] \nonumber \\
  & \ge  \frac{1}{2^d} \sum_{\sigma \in \{-1,1\}^d}
  \sum_{j=1}^{d} \Delta_j\rho_j 
  \EE_{\sigma}\Bigg[\sum^{T}_{t=1} \one\{\sign(a_{t,j}) \neq \sigma_j\} \Bigg]  \nonumber \\
  & {\ge}  \frac{T}{2^{d+1}} 
  \sum_{\sigma \in \{-1,1\}^d}\sum^d_{j=1}\Delta_j\rho_j
  \PP_{\sigma}\bigg(\sum^T_{t=1} \one\{\sign(a_{t,j}) \neq \sigma_j\} \ge 
  \frac{T}{2}\bigg).
\end{align}
where the last step follows from Markov's inequality.
Given $j \in [d]$ and $\sigma \in \{-1,1\}^d$, let $\sigma^{(j)}$
be the vector obtained by flipping the sign of the $j$-th coordinate of $\sigma$. 
Then,
\begin{align*}
& \PP_{\sigma}\bigg(\sum^T_{t=1} \one\{\sign(a_{t,j}) \neq \sigma_j\} \ge 
\frac{T}{2}\bigg) + 
\PP_{\sigma^{(j)}}\bigg(\sum^T_{t=1} \one\{\sign(a_{t,j}) \neq \sigma_j^{(j)}\} \ge 
\frac{T}{2}\bigg) \\
\ge & \PP_{\sigma}\bigg(\sum^T_{t=1} \one\{\sign(a_{t,j}) \neq \sigma_j\} \ge 
\frac{T}{2}\bigg) + 
1 - \PP_{\sigma^{(j)}}\bigg(\sum^T_{t=1} \one\{\sign(a_{t,j}) \neq \sigma_j\} \ge 
\frac{T}{2}\bigg) \\
\ge & 1 - \text{TV}(\PP_{\sigma}, \PP_{\sigma^{(j)}})
\ge 1 - \sqrt{\tfrac{1}{2}\cdot D_{\rm KL}(\PP_{\sigma} \,\|\, \PP_{\sigma^{(j)}})},
\end{align*}
where the penultimate step uses the definition of the TV distance and
the last step follows from Pinsker's inequality.   
Let $P_{\sigma}(Y_{k,t} \given a_t)$ denote the distribution of 
$Y_{k,t}$ given $a_t$ under $X_{\sigma}$, $\forall k\in[d], t\in[T]$.  
We then have
\begin{align*}
D_{\rm KL}(\PP_{\sigma} \,\|\, \PP_{\sigma^{(j)}})
=&  \EE_{\sigma}\Bigg[\log\Bigg(\prod_{t=1}^T \prod^d_{k=1}
\frac{d\PP_{\sigma, Y_{k,t} \given a_t}}{d\PP_{\sigma^{(j)}, Y_{k,t} \given a_t}}
\Bigg)\Bigg]
= \sum^T_{t=1}\sum^d_{k=1}
\EE_{\sigma}\Bigg[\log\Bigg(
\frac{d\PP_{\sigma, Y_{k,t} \given a_t}}{d\PP_{\sigma^{(j)}, Y_{k,t} \given a_t}}
\Bigg)\Bigg]
\le 2T\rho_j\Delta_j^2,
\end{align*}
where the last step uses the sub-gaussianity of the noise terms $\epsilon_{k,t}$
and that $X_{\sigma}$ differs from $X_{\sigma^{(j)}}$ in only one column.
Taking $\Delta_j = \min\{\frac{1}{2\sqrt{\rho_jT}},\frac{1}{s}\}$, 
which ensures that $\sup_{a \in \cA}|X^{(\sigma)}_{i\cdot} a| \le 1$, 
we have 
\begin{align*}
  \eqref{eq:lower_bnd_mid} = &
  \frac{T}{2^{d+1}}
  \sum^{d}_{j=1}\Delta_j \rho_j \sum_{\sigma \in \{-1,1\}^d, \sigma_j=1} \Bigg\{
  \PP_{\sigma}\Bigg(\sum^T_{t=1} \one\{\sign(a_{t,j}) \neq \sigma_j\} \ge 
  \frac{T}{2}\Bigg)+ \PP_{\sigma^{(j)}}\Bigg(\sum^T_{t=1} \one\{\sign(a_{t,j}) \neq \sigma^{(j)}_j\} \ge 
  \frac{T}{2}\Bigg)\Bigg\}\\
  \ge & \frac{T}{8} \sum^d_{j=1}\rho_j \min\Big\{\frac{1}{2\sqrt{\rho_j T}},\frac{1}{s}\Big\} \\
  = & \frac{1}{16} \cdot \sum^d_{j=1} \min\Big\{\sqrt{T\rho_j}, \frac{2T\rho_j}{s}\Big\}
\end{align*}
The proof is therefore completed.

\subsection{Proof of Theorem~\ref{thm:full-support}}\label{app:full-support}
For coordinates with $\theta_j = 0$, the regret contribution is zero, so we restrict the following argument to coordinates with $\theta_j \neq 0$.
We first define two good events on which 
the undetermined estimators $\hat \theta_j^{(m)}$ are close to the true $\theta_j$,
for $m < m_0$ and $m \ge m_0$, respectively. 
\$ 
\cE_1 = \bigcap_{m=1}^{m_0-1} \bigcap_{j \in U_{m-1}} \Big\{\big|\hat \theta_j^{(m)} - \theta_j\big| \le
\rho_j\tau_m/2\Big\},~
\cE_2 = \bigcap_{m=m_0}^M \bigcap_{j \in U_{m-1}} \Big\{\big|\hat \theta_j^{(m)} - \theta_j\big| \le 
\sqrt{\rho_j}\tau_m/2\Big\},
\$
The following lemma shows that both $\cE_1$ and $\cE_2$ hold with high probability.
\begin{lemma}\label{lemma:full-support-highprob}
For any $\delta \in (0,1)$, with probability at least $1-\delta$, 
\$ 
&\big|\hat \theta_j^{(m)} - \theta_j\big| \le \rho_j \tau_m/2,~\forall m < m_0 \text{ and } j\in U_{m-1},\\
&\big|\hat \theta_j^{(m)} - \theta_j\big| \le \sqrt{\rho_j} \tau_m/2,~\forall m \ge m_0 \text{ and } j\in U_{m-1}.
\$
\end{lemma}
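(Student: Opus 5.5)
The plan is to condition on the history up to the end of batch $m-1$, so that $U_{m-1}$ and the committed signs $\sign(\hat\theta^{(m-1)}_j)$, $j\notin U_{m-1}$, are fixed. Within batch $m$ the randomized coordinates $R_{t,j}$, $j\in U_{m-1}$, are then i.i.d.\ Rademacher, independent of the noise, and the fixed coordinates are deterministic. We bound each batch separately and union bound over at most $M\le \log_2 T$ batches and $d$ columns. The $\delta$ budget is split as $\delta/(\text{number of batches}\cdot d^2)$ per event. This accounts for the $16d^2\log_2 T/\delta$ inside $\tau_m$, and for the $\log(8\log_2 T\,ds/\delta)$ entering $m_0$.

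\textbf{Warm-up phase ($m<m_0$).} Fix $j\in U_{m-1}$ and $i$ with $j\in S_i$. Write
\[
\hat X^{(m)}_{ij}-X^*_{ij}=\frac1{n_m}\sum_{t\in D_m}\Big(\sum_{k\ne j}X^*_{ik}a_{t,k}a_{t,j}+\epsilon_{t,i}a_{t,j}\Big),
\]
using $a_{t,j}^2=1$. Since $a_{t,j}=R_{t,j}$ is a Rademacher variable independent of everything else in the summand, each term is mean zero. Each term is also sub-Gaussian with constant parameter, because $|\sum_{k\neq j}X^*_{ik}a_{t,k}|\le 2$ by Assumption~\ref{asst:bounded} (the full row is bounded by 1, so removing one coordinate costs at most 1 more) and the noise is $1$-subgaussian. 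Hoeffding's inequality then gives $|\hat X^{(m)}_{ij}-X^*_{ij}|\le c\sqrt{\log(\cdot)/n_m}$. Since $n_m\asymp 2^m$, this is at most $\tau_m/2$ for the constant $8$ in $\tau_m$. Summing over the $\rho_j$ rows $i$ with $j\in S_i$ yields $|\hat\theta^{(m)}_j-\theta_j|\le\rho_j\tau_m/2$. The union bound runs over $i,j$ (the $d^2$ factor) and over batches.

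\textbf{OLS phase ($m\ge m_0$).} Here we need the sharper $\sqrt{\rho_j}$ rate, so summing entrywise errors is not enough. For row $i$ let $E=E_i^{(m)}$ with $|E|\le s$. The coordinates outside $E$ are either fixed (and removed by centering), or randomized but with $X^*_{ik}=0$ (since $k\notin S_i$). Hence the centered response satisfies
\[
Y_{t,i}-\bar Y_i=X^*_{i,E}(a_{t,E}-\bar a_E)+(\epsilon_{t,i}-\bar\epsilon_i),
\]
and the OLS gives $\hat X_{i,E}-X^*_{i,E}=\hat\Sigma_E^{-1}\frac1{n_m}\sum_t(a_{t,E}-\bar a_E)\epsilon_{t,i}$, with $\hat\Sigma_E$ the centered Gram matrix of $\pm1$ Rademacher columns. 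A matrix Chernoff/Bernstein bound \citep{tropp2015introduction}, combined with the batch-size requirement $n_m\gtrsim s\log(\cdot)$ guaranteed by $2^{m_0}\ge128 s\log(8\log_2 T\,ds/\delta)$, gives $\|\hat\Sigma_E-I\|\le1/2$, hence $\|\hat\Sigma_E^{-1}\|\le2$, simultaneously for all rows $i$ and batches. The estimator is then
\[
\hat\theta_j^{(m)}-\theta_j=\sum_{i:\,j\in S_i}e_j^\top\hat\Sigma_{E_i}^{-1}\frac1{n_m}\sum_t(a_{t,E_i}-\bar a_{E_i})\epsilon_{t,i}.
\]
Conditional on all actions in the batch, this is a linear combination of independent noises $\epsilon_{t,i}$ across distinct rows $i$. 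Its variance proxy is $\sum_i e_j^\top\hat\Sigma_{E_i}^{-1}e_j/n_m\le 2\rho_j/n_m$. A conditional sub-Gaussian tail bound therefore gives $|\hat\theta_j^{(m)}-\theta_j|\le\sqrt{\rho_j}\,\tau_m/2$ after the union bound over $j$ and batches.

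\textbf{Main obstacle.} The hardest step is this OLS-phase argument. It needs the cross-row independence of the noise, which is what turns $\rho_j$ into $\sqrt{\rho_j}$. It also needs a uniform invertibility bound for all restricted Gram matrices, and the centering has to be handled correctly: the fixed coordinates drop out, the $\bar a$ correction perturbs $\hat\Sigma$ by only $O(s\log/n_m)$, and $E_i^{(m)}$ is determined by past batches and is therefore fixed given the conditioning.
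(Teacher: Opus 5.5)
Your proposal is correct and follows the paper's proof essentially step for step. In the warm-up you use Hoeffding: you concentrate each $\hat X^{(m)}_{ij}$ and sum over the $\rho_j$ rows, whereas the paper applies Hoeffding to the column sum directly, but both give $\rho_j\tau_m/2$. In the OLS phase you use the same ingredients as the paper: centering to eliminate committed coordinates, matrix Bernstein plus a Hoeffding bound on the rank-one $\bar a$ correction to get $\lambda_{\min}(\tilde{\sfA}_i^{(m)\top}\tilde{\sfA}_i^{(m)})\ge n_m/2$, and a sub-Gaussian bound conditional on the actions, across independent rows, with variance proxy $2\rho_j/n_m$. Your error formula also correctly drops the $\bar\epsilon$ term, since $\tilde{\sfA}_i^{(m)\top}\one=0$; the paper bounds that term separately.

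One caveat you inherit from the paper: the last batch has $n_M=T-T_{M-1}$, which can be as small as $1$, so both $n_M\asymp 2^M$ and the Gram-matrix condition can fail at $m=M$. This is harmless for Theorem~\ref{thm:full-support}, which never uses the batch-$M$ estimates, but strictly the lemma should exclude that batch.
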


We defer the proof of Lemma~\ref{lemma:full-support-highprob} to 
Appendix~\ref{app:proof-full-support-highprob} and 
proceed supposing that $ \cE := \cE_1 \cap \cE_2$ holds. For each $j\in[d]$, define 
\$ 
\hat m_j = \max\big\{m \in [M]: j \in U_{m-1}\big\}, 
\$ 
which is the last batch on which the sign of $\theta_j$ is undetermined.

If $\hat m_j < m_0$, then since $\cE_1$ holds, there is
\$
|\theta_j - \hat \theta^{(\hat m_j)}_j| \le \rho_j\tau_{\hat m_j}/2.
\$
By the choice of the uncertainty set, there is
\$
|\hat \theta^{(\hat m_j)}_j| \ge \rho_j\tau_{\hat m_j} 
> |\hat \theta^{(\hat m_j)}_j - \theta_j| 
\Rightarrow \sign(\hat \theta_j^{(\hat m_j)}) = \sign(\theta_j).
\$ 
Similarly, if $m_0 \le \hat m_j < M$, then since $\cE_2$ holds, there is
\$
|\theta_j - \hat \theta^{(\hat m_j)}_j| \le \sqrt{\rho_j}\tau_{\hat m_j}/2,
\$  
Then by the choice of the uncertainty set, there is 
\$
|\hat \theta^{(\hat m_j)}_j| \ge \sqrt{\rho_j}\tau_{\hat m_j} 
> |\hat \theta^{(\hat m_j)}_j - \theta_j| 
\Rightarrow \sign(\hat \theta_j^{(\hat m_j)}) = \sign(\theta_j).
\$ 
In any case, the regret incurred by the actions in batches $m > \hat m_j$ is zero. We 
can therefore upper bound the regret as 
\@ \label{eq:fs-regret-1}
\regret_{X^*}(\pi_\nsefs)   
\le \sum_{j=1}^d \sum_{m=1}^{\hat m_j} \sum_{t \in D_m} \big\{|\theta_j| - a_{t,j}\theta_j\big\}.
\le 2\sum_{j=1}^d \sum_{m=1}^{\hat m_j} \sum_{t \in D_m}|\theta_j|.
\@
Meanwhile, when $1 < \hat m_j \le m_0$, then by the definition of $\hat m_j$ there is 
\$ 
\rho_j\tau_{\hat m_j-1} \ge 
|\hat \theta_j^{(\hat m_j-1)}| \ge 
|\theta_j| - |\hat \theta_j^{(\hat m_j-1)} - \theta_j| 
\ge |\theta_j| - \frac{\rho_j}{2}\tau_{\hat m_j-1},
\$
where the last inequality uses the definition of $\cE$. Rearranging the above gives us
\$
|\theta_j| \le \frac{3}{2}\rho_j\tau_{\hat m_j-1}
 = 12\rho_j\sqrt{\frac{\log(16d^2M/\delta)}{n_{\hat m_j-1}}}.
\$
The above inequality further implies that 
$n_{\hat m_j - 1} \le 144\rho_j^2 \log(16d^2M/\delta)/\theta_j^2$.
Similarly, when $\hat m_j > m_0$, there is
\$
\sqrt{\rho_j}\tau_{\hat m_j-1} \ge 
|\hat \theta_j^{(\hat m_j-1)}| \ge 
|\theta_j| - |\hat \theta_j^{(\hat m_j-1)} - \theta_j| 
\ge |\theta_j| - \frac{\sqrt{\rho_j}}{2}\tau_{\hat m_j-1},
\$
which gives us
\$
|\theta_j| \le \frac{3}{2}\sqrt{\rho_j}\tau_{\hat m_j-1} = 12\sqrt{\rho_j}\sqrt{\frac{\log(16d^2M/\delta)}{n_{\hat m_j-1}}}
\Rightarrow n_{\hat m_j - 1} \le 144\rho_j \log(16d^2M/\delta)/\theta_j^2.
\$
Putting the above two cases together, we have
\$ 
\eqref{eq:fs-regret-1} & \le  
2 \sum_{j=1}^d \ind\{\hat m_j = 1\} 2|\theta_j| + 
\ind\{1 < \hat m_j \le m_0\} 2^{\hat m_j +1}|\theta_j| + \ind\{\hat m_j > m_0\} 2^{\hat m_j + 1}|\theta_j|\\
& \le 2 \sum_{j=1}^d \ind\{\hat m_j = 1\} 2\rho_j +
\ind\{1< \hat m_j \le m_0\} \min \Bigg\{2^{m_0 +1}|\theta_j|, \frac{576\rho_j^2}{|\theta_j|}\log(16d^2M/\delta) \Bigg\}\\
& \qquad \qquad + 
\ind\{\hat m_j > m_0\} \min\Bigg\{T|\theta_j|, \frac{576 \rho_j}{|\theta_j|} \log(16d^2M/\delta) \Bigg\} \\ 
& \le 48 \sum^d_{j=1}2\rho_j+ 2^{m_0/2}\rho_j \sqrt{\log(8d^2M/\delta)} + \sqrt{\rho_j T\log(16d^2M/\delta)}\\
& = 48\sum^d_{j=1}2\rho_j + 36\rho_j \sqrt{s}\log(16d^2Ms/\delta)+ \sqrt{\rho_j T\log(16d^2M/\delta)},
\$
where the last step uses the definition of $m_0$.
By assumption, we have $\rho_js \le ds \le T$, $\forall j\in[d]$, which implies that 
the regret is upper bounded by $c_1\sum^d_{j=1}\sqrt{\rho_j T}\log(16d^2Ms/\delta)$,
for some absolute constant $c_1 > 0$.
Recalling that $\cE$ holds with probability at least $1-\delta$, we conclude the proof of Theorem~\ref{thm:full-support}. 

\subsection{Proof of Corollary~\ref{cor:full-support}}\label{app:full-support-expected}
Letting $\delta = 1/(Td)$, we invoke Theorem~\ref{thm:full-support} to obtain that with probability at least $1-1/(Td)$,
\begin{align*}
\regret_{X^*}(\pi_\nsefs) & \leq  c_1 \log\big(16d^3sT\log_2 T\big) \cdot  \sqrt{T}\sum^d_{j=1}  \sqrt{\rho_j}.
\end{align*}
Let us denote the above event as $\cG$. We then have
\$ 
\EE[\regret_{X^*}(\pi_\nsefs)] & = \EE[\regret_{X^*}(\pi_\nsefs) \ind\{\cG\}] + \EE[\regret_{X^*}(\pi_\nsefs) \ind\{\cG^c\}]\\
& \leq c_1 \log\big(16d^3sT\log_2 T\big) \cdot  \sqrt{T}\sum^d_{j=1}  \sqrt{\rho_j} + \frac{2Td}{Td} \\
& \le c_1 \log\big(16d^3sT\log_2 T\big) \cdot \sqrt{T}\sum^d_{j=1}  \sqrt{\rho_j}.
\$
The proof is complete.

\subsection{Proof of Theorem~\ref{thm:se_upper_bound}}
\label{app:se_upper_bound}
As before, we define the following good event
\$
\mathcal{E} = \bigcap_{m=1}^M \bigcap_{j \in U_{m-1}} 
\Big\{\big|\hat \theta_j^{(m)} - \theta_j \big| \le \rho_j\tau_m/2 \Big\}. 
\$
The following lemma shows that $\mathcal{E}$ holds with high probability.
\begin{lemma}
\label{lemma:good_event_whp}
Under the conditions in Theorem~\ref{thm:se_upper_bound},
for any $\delta \in (0,1)$,
the good event $\mathcal{E}$ holds with probability at least $1 - \delta$.
\end{lemma}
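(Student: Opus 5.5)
The plan is to prove a uniform entrywise concentration bound for $\hat X^{(m)}_{ij}$, intersect over all $(m,i,j)$ with a union bound, and then show deterministically that hard-thresholding keeps the column-sum error within $\rho_j\tau_m/2$.

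\textbf{Step 1: unbiasedness and decomposition.} Fix a batch $m$ and condition on the history up to the end of batch $m-1$. This fixes $U_{m-1}$ and the committed signs $\sign(\hat\theta^{(m-1)}_k)$ for $k\notin U_{m-1}$. Within $D_m$ the actions $a_t$ are then i.i.d. across $t$. Each coordinate $a_{t,j}$ with $j\in U_{m-1}$ is an independent Rademacher variable, and the other coordinates are deterministic. For $j\in U_{m-1}$ and any $i\in[d]$ I would write
\[
Y_{t,i}a_{t,j} = X^*_{ij} + \Big(\sum_{k\neq j}X^*_{ik}a_{t,k}\Big)a_{t,j} + \epsilon_{t,i}a_{t,j}.
\]
Because $a_{t,j}$ is Rademacher and independent of $\{a_{t,k}\}_{k\ne j}$ and of $\epsilon_{t,i}$, the last two terms have conditional mean zero. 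So $\hat X^{(m)}_{ij}$ is conditionally unbiased for $X^*_{ij}$.

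\textbf{Step 2: concentration of each entry.} The middle term is bounded in absolute value by $|X^*_{i\cdot}a_t|+|X^*_{ij}|\le 2$ by Assumption~\ref{asst:bounded}. Here $|X^*_{ij}|\le 1$ follows from Assumption~\ref{asst:bounded} by comparing two actions that differ only in coordinate $j$. The noise term is $1$-subgaussian. The summands are therefore conditionally independent and $O(1)$-subgaussian, and Hoeffding's inequality gives, for each $(m,i,j)$,
\[
\PP\Big(|\hat X^{(m)}_{ij}-X^*_{ij}| > C\sqrt{\log(4d^2M/\delta)/n_m}\ \Big|\ \text{history}\Big)\le \frac{\delta}{d^2M}.
\]
Since this holds conditionally, it holds unconditionally as well. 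A union bound over $i,j\in[d]$ and $m\in[M]$ then gives probability at least $1-\delta$ that every entry with $j\in U_{m-1}$ satisfies $|\hat X^{(m)}_{ij}-X^*_{ij}|\le\tau_m/16$. This uses $n_m\asymp 2^{m-1}$, and the constant $16$ in $\tau_m$ absorbs $C$. Note that $M\le\log_2 T$ up to constants, which matches the logarithm in $\tau_m$.

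\textbf{Step 3: deterministic thresholding argument.} On the event of Step 2, take any $j\in U_{m-1}$.
\begin{itemize}
\item If $X^*_{ij}=0$, then $|\hat X^{(m)}_{ij}|\le\tau_m/16<\tau_m/8$. The entry is thresholded out and contributes no error.
\item If $X^*_{ij}\neq0$ and the entry is kept, its error is at most $\tau_m/16$.
\item If $X^*_{ij}\neq0$ and the entry is dropped, the error equals $|X^*_{ij}|\le|\hat X^{(m)}_{ij}|+\tau_m/16\le 3\tau_m/16$.
\end{itemize}
Only the $\rho_j$ support entries of column $j$ can contribute, so
\[
|\hat\theta^{(m)}_j-\theta_j|\le 3\rho_j\tau_m/16\le\rho_j\tau_m/2.
\]
This is exactly $\mathcal E$.

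\textbf{Expected obstacles.} The first is the adaptivity: both $U_{m-1}$ and the committed signs are data-dependent. I handle this by conditioning batch-by-batch, since each batch uses only fresh randomness and fresh noise. The second is the final batch, where $n_M=T-T_{M-1}$ may be much smaller than $2^{M-1}$. I would address it by noting that $\tau_M$ would need $n_M$ in place of $2^{M-1}$, or by merging a short last batch into the previous one. Either change alters only constants. I expect this bookkeeping, rather than the concentration itself, to need the most care.
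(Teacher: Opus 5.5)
Your architecture matches the paper's proof: an entrywise Hoeffding bound for $\hat X^{(m)}_{ij}$, a union bound over $(m,i,j)$, and a deterministic thresholding step. The gap is the constant accounting in Step~2, and Step~3 depends on it. The threshold $\tau_m$ is fixed by Theorem~\ref{thm:se_upper_bound}, so you cannot let ``16 absorb $C$.''

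\begin{itemize}
\item For $m<M$ we have $n_m=2^m$, so $\tau_m/16=\sqrt{2\log(4d^2\log_2(T)/\delta)/n_m}$.
\item This is already essentially the Hoeffding deviation of the noise average $\frac{1}{n_m}\sum_t\epsilon_{t,i}a_{t,j}$ alone at failure level $\delta/(d^2M)$.
\item Hence accuracy $\tau_m/16$ cannot come out of Hoeffding plus a union bound once the cross term is added.
\end{itemize}

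What Step~3 actually needs is accuracy at most $\tau_m/8$, so that every entry with $X^*_{ij}=0$ is removed by the strict threshold. Your cross-term bound $|X^*_{i\cdot}a_t|+|X^*_{ij}|\le 2$ does not give this.

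\begin{itemize}
\item Splitting cross and noise terms gives deviation $(\sqrt8+\sqrt2)\sqrt{\log(4d^2M/\delta)/n_m}\approx 3\tau_m/16$.
\item Treating the summand as one subgaussian variable with variance proxy $4+1$ gives about $\sqrt5\,\tau_m/16$.
\end{itemize}

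Both exceed $\tau_m/8$. So your argument no longer guarantees that zero entries are thresholded out. The column-$j$ error is then not confined to the $\rho_j$ support entries, and the bound $\rho_j\tau_m/2$ does not follow.

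The fix is the sharper cross-term bound the paper uses. Write $a_t^{(j)}$ for $a_t$ with coordinate $j$ flipped. Then $\sum_{k\neq j}X^*_{ik}a_{t,k}=\tfrac12\bigl(X^*_{i\cdot}a_t+X^*_{i\cdot}a_t^{(j)}\bigr)$, which is at most $1$ in absolute value by Assumption~\ref{asst:bounded}. This is your own flip trick for $|X^*_{ij}|\le1$, averaged instead of differenced.

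\begin{itemize}
\item Each of the two averages then has tail $2\exp(-n_m\eta^2/2)$.
\item Take $\eta=\sqrt{2\log(4Md^2/\delta)/n_m}$, which is at most $\tau_m/16$ when $M\le\log_2 T$ up to rounding, as you note.
\item This gives $|\hat X^{(m)}_{ij}-X^*_{ij}|\le 2\eta\le\tau_m/8$ for all $(m,i,j)$ simultaneously, with probability at least $1-\delta$.
\end{itemize}

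Your Step~3 then runs with $\tau_m/8$ in place of $\tau_m/16$. Zero entries are dropped, kept support entries err by at most $\tau_m/8$, and dropped ones by at most $\tau_m/4$. Hence $|\hat\theta^{(m)}_j-\theta_j|\le\rho_j\tau_m/4\le\rho_j\tau_m/2$.

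Your batch-wise conditioning, which handles the data-dependent $U_{m-1}$ and committed signs, is correct and makes explicit what the paper leaves implicit. Your concern about the last batch is also well founded: $n_M$ can be as small as $1$, while the paper tacitly uses $n_M=2^M$. However, the proof of Theorem~\ref{thm:se_upper_bound} only invokes $\mathcal E$ at batches $m\le M-1$. So the simplest repair is to restrict $\mathcal E$ to $m<M$, rather than alter $\tau_M$ or the batching.
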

We provide its proof in Appendix~\ref{app:proof_good_event_whp} and 
proceed on the event $\mathcal{E}$.
Let $\hat m_j$ denote the last batch for which 
$j$ is still in the ``exploration'' phase, i.e., 
\$
\hat m_j := \max\big\{m \in [M]: j \in U_{m-1}\big\}.
\$
We decompose the regret as follows:
\@\label{eq:regret}
\regret_{X^*}(\pi) = \sum^M_{m=1} \sum_{t \in D_m}
\sum^d_{j=1} \big\{ |\theta_j| - \theta_j a_{t,j} \big\}.
\@
For any $j\in[d]$, $j \in U_{\hat m_j-1}$ by definition.
On the good event $\mathcal{E}$, at the end of batch $\hat m_j$, there is 
\$ 
\big|\hat \theta_j^{(\hat m_j)} - \theta_j \big| \le \rho_j \tau_{\hat m_j}/2
\$
Meanwhile, if $\hat m_j < M$, then by the choice of $\hat m_j$,
$|\hat \theta_j^{(\hat m_j)}| > \rho_j \tau_{\hat m_j}$, 
which implies that $\sign(\hat \theta_j^{(\hat m_j)}) = \sign(\theta_j)$.
In other words, for column $j$, no regret will be incurred when $t >  T_{\hat m_{j}}$, 
which is trivially true when $\hat m_j = M$.
Combining the above, we conclude that on $\mathcal{E}$,  
\$
\eqref{eq:regret} \le \sum^d_{j=1} 2|\theta_j|\cdot T_{\hat m_j}.
\$
On the other hand, on $\mathcal{E}$, for $j \in [d]$, if $\hat m_j > 1$, then by the definition of $\hat m_j$ there is 
\@\label{eq:regret2}
|\theta_j| \le \big|\theta_j - \hat \theta_j^{(\hat m_j- 1)}\big| + |\hat \theta_j^{(\hat m_j - 1)}| 
\le \rho_j\tau_{\hat m_j - 1}/2 + |\hat \theta_j^{(\hat m_j-1)}| 
\le \frac{3}{2}\rho_j\tau_{\hat m_j-1}, 
\@
where the first step uses the triangle inequality, the second step is by the 
definition of $\cE$, and the last step follows from the choice of $\hat m_j$.
Inverting the inequality in~\eqref{eq:regret2}, we have 
\$
n_{\hat m_j-1} 
\le  \frac{512 \rho_j^2 \log(4d^2\log_2(T)/\delta)}{\theta_j^2} 
\$ 
Furthermore, by construction, we have 
\$ 
T_{\hat m_j} = 2^{\hat m_j+1} - 2 \le 4n_{\hat m_j -1} 
\le \frac{2048 \rho_j^2 \log(4d^2\log_2(T)/\delta)}{\theta_j^2}. 
\$
Combining the above with the trivial bound $\regret_{X^*}(\pi) \le 2T\sum^d_{j=1} |\theta_j|$ gives us that,
\$ 
\regret_{X^*}(\pi_{\textnormal{se}})
& \le\sum_{j=1}^d 2|\theta_j| \ind\{\hat m_j=1\} + 
2|\theta_j| \cdot \min\bigg\{\frac{2048\rho_j^2 \log(4d^2\log_2(T)/\delta)}{\theta_j^2},T\bigg\}\\
& \le \sum^d_{j=1}2 \rho_j + \min\bigg\{\frac{4096 \rho_j^2 \log(4d^2\log_2(T)/\delta)}{|\theta_j|}, 2|\theta_j|T \bigg\}\\
& \le \sum^d_{j=1} 66\rho_j \sqrt{T\log(4d^2\log_2(T)/\delta)}. 
\$
Recalling that $\mathcal{E}$ holds with probability at least $1-\delta$, we conclude the proof of Theorem~\ref{thm:se_upper_bound}.

\subsection{Proof of Corollary~\ref{cor:se_upper_bound}}
\label{app:proof_of_exp_upper_bnd}
Taking $\delta = \frac{1}{Td}$ in Theorem~\ref{thm:se_upper_bound}, we arrive at
\$
\EE\big[\regret_{X^*}(\pi_{\nse})\big] 
& = \EE\bigg[\regret_{X^*}(\pi_{\nse}) \cdot  \one\Big\{\regret_{X^*}(\pi_{\nse})\le \sum^d_{j=1} 64\rho_j \sqrt{T\log(4d^3T\log_2(T))} \Big\}\bigg] \\
& \qquad \qquad + \EE\bigg[\regret_{X^*}(\pi_{\nse}) \cdot  \one \Big\{\regret_{X^*}(\pi_{\nse})> \sum^d_{j=1} 64\rho_j \sqrt{T\log(4d^3T\log_2(T))} \Big\}\bigg]\\
& \le  \sum^d_{j=1} 64\rho_j \sqrt{T\log(4d^3T\log_2(T) )} + 2dT \cdot \frac{1}{dT}\\ 
&\le \sum^d_{j=1} 66\rho_j \sqrt{T\log(4d^3T\log_2(T))}.
\$

\subsection{Proof of Theorem~\ref{thm:dpUB}}\label{app:dpUB}
We first decompose the regret as
\begin{align*}
\regret_{X^*}(\pi_{\netc}) 
& = \sum_{t=1}^T  \theta^\top(a^* - a_t) \\
& = \sum_{t=1}^{T_1} \theta^\top(a^* - a_t) + \sum_{t=T_1+1}^{T} \theta^\top(a^* - \hat a) \\ 
& \leq 2T_1 d+ (T-T_1) \theta^\top(a^* - \hat a), 
\end{align*}
where the last step uses $\sup_{a \in \cA} |X^*_{i,\cdot}a| \le 1$ by Assumption~\ref{asst:bounded}.
For the second term, we have 
\begin{align*}
\theta^\top(a^* - \hat a)
& = (\theta - \hat{\theta})^\top (a^* - \hat a) + \hat{\theta}^\top (a^* - \hat a)  
\stepa{\le} (\theta - \hat{\theta})^\top (a^* - \hat a)  
\stepb{\le} \|\theta - \hat{\theta}\|_1 \|a^* - \hat a\|_\infty 
\le 2 \|\theta - \hat \theta\|_1,
\end{align*}
where step (a) follows from the optimality of $\hat a$ and step (b) uses H\"{o}lder's inequality.
By the definition of $\theta$ and $\hat \theta$, we have
\$ 
\|\theta - \hat \theta\|_1 = \sum^d_{j=1} \Big|\sum^d_{i=1} (X^*_{ij} - \hat X_{ij})\Big|
\le \sum_{i,j=1}^d |X^*_{ij} - \hat X_{ij}| = \sum_{i=1}^d \|X^*_{i\cdot} - \hat{X}_{i\cdot}\|_1.
\$
The remaining task is to bound $\|X^*_{i\cdot} - \hat{X}_{i\cdot}\|_1$ for each $i\in[d]$.
By the optimality of $\hat X_{i,\cdot}$, we have that 
\$ 
\frac{1}{2T_1} \sum_{t=1}^{T_1} \big(Y_{i,t} - a_t^\top \hat X_{i,\cdot}\big)^2 
+ \lambda \big\|\hat X_{i,\cdot}\big\|_1
\le \frac{1}{2T_1} \sum_{t=1}^{T_1} \big(Y_{i,t} - a_t^\top X^*_{i,\cdot}\big)^2 
+ \lambda \big\|X^*_{i,\cdot}\big\|_1.
\$
Rearranging the above inequality, we get
\$ 
\frac{1}{2T_1} \sum_{t=1}^{T_1} \big[(X^*_{i,\cdot} - \hat X_{i,\cdot}) a_t\big]^2
& \le \frac{1}{T_1} \sum_{t=1}^{T_1} \epsilon_{i,t} (\hat X_{i,\cdot} - X^*_{i,\cdot})a_t 
+ \lambda (\|X  ^*_{i,\cdot}\|_1 - \|\hat X_{i,\cdot}\|_1)\\
& \le \Big\|\frac{1}{T_1} \sum^{T_1}_{t=1} \epsilon_{i,t} a_t \Big\|_{\infty} \|\hat X_{i,\cdot} - X^*_{i,\cdot}\|_1
+ \lambda (\|X  ^*_{i,\cdot}\|_1 - \|\hat X_{i,\cdot}\|_1).
\$
The last step uses H\"{o}lder's inequality. Next, we bound the first term on the right-hand side.
For any $x > 0$, 
\@\label{eq:lasso_inftybound}
\PP\Bigg(\max_{j \in [d]} \Big|\frac{1}{T_1}\sum^{T_1}_{t=1} \epsilon_{i,t} a_{t,j} \Big| \geq x\Bigg)
\le \sum_{j=1}^d \PP\Bigg(\Big|\frac{1}{T_1}\sum^{T_1}_{t=1} \epsilon_{i,t} a_{t,j} \Big| \geq x\Bigg)
\le 2d\exp\Big(-\frac{T_1x^2}{2}\Big), 
\@
where the first step uses the union bound and the second step follows from Hoeffding's inequality.
As an implication of~\eqref{eq:lasso_inftybound} and a union bound over $i\in[d]$, with probability at least $1 - \delta$, 
there is 
\@ \label{eq:bounded_event}
\Big\|\frac{1}{T_1} \sum^{T_1}_{t=1} \epsilon_{i,t} a_t \Big\|_{\infty}
\le 2\sqrt{\frac{2\log(2d^2/\delta)}{T_1}} \le \frac{\lambda}{2},
\@
by the choice of $\lambda$ in Algorithm~\ref{alg:netc}.
Conditional on the event in~\eqref{eq:bounded_event} and 
writing $\hat \Delta_i := (\hat X_{i,\cdot} - X^*_{i,\cdot})^\top$, we have
\@ \label{eq:oracle_ineq}
\frac{1}{2T_1} \sum_{t=1}^{T_1} (a_t^\top \hat \Delta_i)^2
& \le \frac{\lambda}{2} \|\hat \Delta_i\|_1 + \lambda (\|X^*_{i,\cdot}\|_1 - \|\hat X_{i,\cdot}\|_1) \\
& = \frac{\lambda}{2} \big\|\hat \Delta_{i}[S_i]\big\|_1 + \frac{\lambda}{2} \big\|\hat \Delta_{i}[S_i^c]\big\|_1
+ \lambda (\|X^*_{i, \cdot}[S_i]\|_1 - \|\hat X_{i, \cdot}[S_i]\|_1 - \|\hat X_{i, \cdot}[S_i^c]\|_1) \\
& \le \frac{3\lambda}{2} \|\hat \Delta_{i}[S_i]\|_1 - \frac{\lambda}{2} \|\hat \Delta_{i}[S_i^c]\|_1,
\@
where the last step uses the triangle inequality.
As a result, we can conclude that 
\@\label{eq:S3}
3\|\hat \Delta_i[S_i]\|_1 \geq \|\hat \Delta_i[S_i^c]\|_1.
\@

We next lower bound the left-hand side of the inequality in~\eqref{eq:oracle_ineq}. 
Recall that the actions $a_t$'s are independent Rademacher random vectors. 
The following lemma establishes a restricted eigenvalue condition for the design matrix formed by   
the actions $a_t$'s. The proof of the lemma uses standard sparse regression techniques and is deferred to Appendix~\ref{app:proof_restricted_eigen}.
\begin{lemma}\label{lemma:restricted_eigen}
For any $\delta \in (0,1)$, suppose that 
$T_1 \ge  224^2(2s\log(d)+\log(2/\delta))$. Then with probability at least $1 - \delta$, 
for any $u \in \RR^d$ such that $\|u[S^c]\|_1 \le 3 \|u[S]\|_1$, for some $S \subseteq [d]$ with $|S| \le s$, 
there is 
\$ 
\frac{1}{T_1} \sum_{t=1}^{T_1} (a_t^\top u)^2 \ge \frac{1}{2} \|u[\tilde{S}]\|_2^2,
\$ 
where $\tilde S$ is the union of $S$ and the set of indices corresponding to the $s$ largest entries in $u[S^c]$ in absolute value.
\end{lemma}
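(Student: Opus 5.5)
Let $\hat\Sigma = \frac{1}{T_1}\sum_{t=1}^{T_1} a_t a_t^\top$. The plan is the standard route to a restricted eigenvalue: first establish a restricted isometry-type bound for $\hat\Sigma$ on sparse vectors, then extend it to the cone $\{\|u[S^c]\|_1\le 3\|u[S]\|_1\}$ by a shelling argument. Since $\EE[a_ta_t^\top]=I_d$ for Rademacher vectors, the target is that, with probability at least $1-\delta$, simultaneously for all $v$ with $\|v\|_0\le 2s$,
\[
\big|v^\top(\hat\Sigma - I_d)v\big| \le \kappa\|v\|_2^2 ,
\]
for a small constant $\kappa$ (say $\kappa \le 1/16$).

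To get this sparse bound, fix a support $J$ with $|J|=2s$. I would take a $1/4$-net $\mathcal N_J$ of the unit sphere in $\RR^J$, of size at most $9^{2s}$. For each fixed unit $v$ supported on $J$, the variable $(a_t^\top v)^2-1$ is bounded, because Hoeffding gives $a_t^\top v$ is $1$-subgaussian, so $(a_t^\top v)^2$ is sub-exponential with $O(1)$ parameters. Bernstein's inequality then gives
\[
\Big|\frac{1}{T_1}\sum_t (a_t^\top v)^2-1\Big|\le c\sqrt{x/T_1}
\]
with probability $1-2e^{-x}$ whenever $x\le T_1$. The standard net argument, $\sup_{v} |v^\top A v|\le 2\max_{\mathcal N}|v^\top Av|$ for symmetric $A$, transfers this to all unit $v$ on $J$. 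A union bound over $\binom{d}{2s}\le d^{2s}$ supports and the net gives $x = 2s\log d + 2s\log 9+\log(2/\delta)$. The hypothesis $T_1\ge 224^2(2s\log d+\log(2/\delta))$ is exactly what makes $c\sqrt{x/T_1}\le\kappa$; the large constant $224^2$ absorbs the $\log 9$ term and the Bernstein constants. The factor-$2$ and the constants need care, but I expect this step to be routine.

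The main step is extending to the cone, where I expect the bookkeeping to be the obstacle. Given $u$ in the cone, I would order $S^c$ by decreasing $|u_j|$ and split it into blocks $B_1,B_2,\dots$ of size $s$, so that $\tilde S = S\cup B_1$. The standard shelling bound gives
\[
\sum_{k\ge2}\|u[B_k]\|_2\le s^{-1/2}\|u[S^c]\|_1\le 3s^{-1/2}\|u[S]\|_1\le 3\|u[S]\|_2\le 3\|u[\tilde S]\|_2 .
\]
Then I would expand the quadratic form as
\[
u^\top\hat\Sigma u \ge u[\tilde S]^\top \hat\Sigma u[\tilde S] - 2\sum_{k\ge2}\big|u[\tilde S]^\top\hat\Sigma u[B_k]\big| ,
\]
dropping the nonnegative tail quadratic term. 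The first term is at least $(1-\kappa)\|u[\tilde S]\|_2^2$. For the cross terms the supports are disjoint, so $\langle x, y\rangle=0$. Polarization applied to the sparse bound on vectors of sparsity at most $3s$ then gives $|x^\top\hat\Sigma y|\le \kappa\|x\|_2\|y\|_2$.

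Because $\tilde S$ has size $2s$ and $B_k$ has size $s$, the sparse bound must be taken at sparsity $3s$ rather than $2s$. To handle this I would either rerun the union bound at level $3s$, which fits within the constant $224^2$, or split $u[\tilde S]$ into $u[S]$ and $u[B_1]$ so that only $2s$-sparse pairs appear. With this, the cross terms total at most $2\kappa\cdot 3\|u[\tilde S]\|_2^2$. Overall this yields $u^\top\hat\Sigma u\ge(1-7\kappa)\|u[\tilde S]\|_2^2\ge \tfrac12\|u[\tilde S]\|_2^2$ for $\kappa\le 1/14$, which completes the proof.
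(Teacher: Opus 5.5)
Your proposal is correct and is essentially the paper's proof: a restricted-isometry bound for $\frac{1}{T_1}\sum_t a_ta_t^\top$ on sparse vectors via a sub-exponential Bernstein bound, an $\epsilon$-net and a union bound over supports (the paper works directly at sparsity $3s$ with $\epsilon=1/64$), polarization for the disjoint-support cross terms, and shelling of $S^c$ into size-$s$ blocks with the tail quadratic dropped, giving a $(1-7\kappa)\|u[\tilde S]\|_2^2$-type bound. One caveat, which the paper's proof shares: with the standard constants (Bernstein constant $8$, net factor $2$, $\kappa=1/14$) the requirement comes out as exactly $T_1\ge 224^2\big(\log\binom{d}{3s}+3s\log 9+\log(2/\delta)\big)$. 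So the $\log 9$ and $3s$-versus-$2s$ terms are not absorbed by the stated $224^2(2s\log d+\log(2/\delta))$ unless you use sharper sub-exponential constants; the paper's own argument likewise needs $T_1\ge 224^2(3s\log(130d)+\log(2/\delta))$.
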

By assumption, $T_1 \ge 224^2 (2s\log(d)+\log(2/\delta))$, 
and applying Lemma~\ref{lemma:restricted_eigen} to $\hat \Delta_i$ yields
\$ 
\frac{1}{2}\|\hat \Delta_i[\widetilde{S_i}]\|_2^2 \le \frac{1}{T_1} \sum_{t=1}^{T_1} (a_t^\top \hat \Delta_i)^2
\le \frac{3\lambda}{2} \|\hat \Delta_i[S_i]\|_1 \le \frac{3\lambda}{2} \sqrt{s} \|\hat \Delta_i[S_i]\|_2
\le \frac{3\lambda}{2} \sqrt{s} \|\hat \Delta_i[\widetilde{S_i}]\|_2,
\$
Dividing both sides by $\|\hat \Delta_i[\widetilde{S_i}]\|_2/2$, we have 
\$ 
\|\hat \Delta_i[\widetilde{S_i}]\|_2 \le 3\lambda \sqrt{s}.
\$
By the choice of $\widetilde{S_i}$, we have 
\$ 
\|\hat \Delta_i\|_2^2 \le \|\hat \Delta_i[\widetilde{S_i}]\|_2^2 + 
\|\hat \Delta_i[\widetilde{S_i}\backslash S_i]\|_1^2/s^2
\le \|\hat \Delta_i[\widetilde{S_i}]\|^2_2. 
\$
Furthermore, 
\$ 
\|\hat \Delta_i\|_1 \le 4 \|\hat \Delta_i(S_i)\|_1 \le 4\sqrt{s} \| \hat \Delta_i(S_i)\|_2
\le 4\sqrt{s} \|\hat \Delta_i\|_2 \le 12 s \lambda.
\$
Summing over $i\in[d]$, we have that with probability at least $1 - \delta$, 
\$ 
\|\hat \theta - \theta \|_1 \le \sum^d_{i=1} \|\hat \Delta_i\|_1 \le 12 s \lambda d.
\$
Plugging the above bound into the regret decomposition, we have that with probability at least $1 - \delta$,
\$ 
\regret_{X^*}(\pi_{\netc}) \le 2T_1 d + 24 s \lambda d (T - T_1) \le 200\sqrt{\log\Big(\frac{2d^2}{\delta}\Big)}\cdot T^{2/3}s^{2/3}d,
\$
where the last step uses the choice of $T_1$ and $\lambda$ in Algorithm~\ref{alg:netc}.

\subsection{Proof of Corollary~\ref{cor:dpUB}}
\label{app:cor-dpUB}
Applying Theorem~\ref{thm:dpUB} with $\delta = 1/(Td)$, we have that 
\$ 
\EE\big[\regret_{X^*}(\pi_{\netc})\big] & = 
\EE\big[\regret_{X^*}(\pi_{\netc})\ind\{\regret_{X^*}(\pi_{\netc}) \le c_5 d (sT)^{2/3}\sqrt{2\log(2d^2T)}\}\big] \\
& \qquad+ \EE\big[\regret_{X^*}(\pi_{\netc})\ind\{\regret_{X^*}(\pi_{\netc}) > c_5 d (sT)^{2/3}\sqrt{2\log(2d^2T)}\}\big] \\
& \le c_5 d (sT)^{2/3}\sqrt{2\log(2d^2T)} + 2 \le c_6d(sT)^{2/3}\sqrt{\log(2d^2T)}.
\$

\section{Proofs of supporting lemmas}
\subsection{Proof of Lemma~\ref{lemma:full-support-highprob}}
\label{app:proof-full-support-highprob}
We consider the case of $m < m_0$ and $m \ge m_0$ separately. 
\paragraph{Warm-up phase.}
Fix $m<m_0$. For any $i\in[d]$ and $j\in U_{m-1}$, 
\$ 
\hat X_{ij}^{(m)} - X^*_{ij} 
& = \frac{1}{n_m}\sum_{t\in D_m} a_{t,j} y_{t,i} - X^*_{ij} \\
& = \frac{1}{n_m}\sum_{t\in D_m} a_{t,j} \bigg(\sum^d_{k=1}X^*_{ik} a_{t,k} + \epsilon_{t,i}\bigg) 
- X^*_{ij}\\ 
& = \frac{1}{n_m}\sum_{t\in D_m} \bigg(X_{ij}^* +\sum_{k \neq j}X_{ik}^*a_{t,k} a_{t,j} + \epsilon_{t,i} a_{t,j} \bigg) - X^*_{ij}\\
& = \frac{1}{n_m}\sum_{t\in D_m} \bigg(\sum_{k \neq j}X_{ik}^*a_{t,k} a_{t,j} + \epsilon_{t,i} a_{t,j}\bigg)
\$
Summing over $i$ for which $j \in U_{m-1}$, we have
\$ 
\hat \theta_j^{(m)} - \theta_j
& = \sum_{i=1}^d \frac{\ind\{j\in S_i\}}{n_m}\sum_{t\in D_m} \bigg(\sum_{k \neq j}X_{ik}^*a_{t,k} a_{t,j} + \epsilon_{t,i} a_{t,j}\bigg)\\
& = \frac{1}{n_m}\sum_{t \in D_m} \sum^d_{i=1} \ind\{j\in S_i\}\bigg(\sum_{k \neq j}X_{ik}^*a_{t,k} a_{t,j} + \epsilon_{t,i} a_{t,j}\bigg).
\$
Since $a_{t,j}$'s are independent Rademacher random variables for $t\in D_m$, 
$\EE[\hat\theta_j^{(m)} - \theta_j] = 0$. 
By Assumption~\ref{asst:bounded}, 
\$ 
\Big|\sum_{i=1}^d \ind\{j\in S_i\}\sum_{k \neq j}X_{ik}^*a_{t,k} a_{t,j}\Big| 
\le\sum_{i=1}^d \ind\{j\in S_i\} \Big| \sum_{k \neq j}X_{ik}^*a_{t,k} \Big|  
\le \rho_j \sup_{a \in \cA} |X^*_{i\cdot} a|\le \rho_j.
\$
By Hoeffding's inequality, we have for any $\eta >0$ that 
\$ 
& \PP\Bigg(\bigg|\frac{1}{n_m}\sum_{t\in D_m}\sum_{i=1}^d \ind\{j \in S_i\}\bigg(\sum_{k \neq j}X_{ik}^*a_{t,k} a_{t,j} + \epsilon_{t,i} a_{t,j}\bigg) \bigg|\ge 2\eta \Bigg)\\
\le~& \PP\Bigg(\bigg|\frac{1}{n_m}\sum_{t\in D_m} \sum_{i=1}^d \ind\{j \in S_i\} \sum_{k \neq j}X_{ik}^*a_{t,k} a_{t,j} \bigg| \ge \eta \Bigg)
+ \PP\Bigg(\bigg|\frac{1}{n_m}\sum_{t\in D_m} \sum_{i=1}^d \ind\{j \in S_i\} \epsilon_{t,i} a_{t,j} \bigg| \ge \eta \Bigg)\\
\le~& 2\exp\bigg(-\frac{\eta^2 n_m}{2 \rho_j^2}\bigg) + 2\exp\bigg(-\frac{\eta^2 n_m}{2 \rho_j}\bigg)
\le 4\exp\bigg(-\frac{\eta^2 n_m}{2 \rho_j^2}\bigg).
\$ 
Taking a union bound over $m < m_0$  and $j\in U_{m-1}$, and letting 
$\eta =  \sqrt{\rho_j \frac{2\log(8Md^2/\delta)}{n_m}}$, we obtain that 
with probability at least $1-\delta/2$, for any $m\in[M]$ and $j\in U_{m-1}$, 
\$
\big|\hat \theta_j^{(m)} - \theta_j\big| \le 2 \rho_j \sqrt{\frac{2\log(8Md^2/\delta)}{n_m}} = \rho_j \tau_m/2.
\$

\paragraph{OLS phase.}
Fix $m \ge m_0$. To ease presentation, we abuse notation and let $n_{m_0} = T_{m_0}$.
For each row $i \in [d]$, we assemble the actions in batch $m$ 
restricted to the indices in $E_i^{(m)}$ into a $n_m \times |E_i^{(m)}|$ matrix:
\$ 
\sfA_i^{(m)} = (a_{T_{m-1}+1}[E^{(m)}_i], \ldots, a_{T_{m}}[E^{(m)}_i])^\top,
\$
and its centered version
\$ 
\tilde \sfA_i^{(m)} = H\sfA := \Big(I - \frac{1}{n_m}\one \one^\top\Big) \sfA_i^{(m)}.
\$
Also let $\sfY_{i}^{(m)} = (Y_{t,i})_{t \in D_m} \in \RR^{n_m}$ denote the reward vector for unit $i$ in batch $m$. 
By construction, we have 
\@\label{eq:row-est}
\hat X_{i,E_i^{(m)}} = (\tilde \sfA_{i}^{(m)\top} \tilde \sfA_{i}^{(m)})^{-1} \tilde \sfA_{i}^{(m)\top} 
(\sfY_{i}^{(m)} - \bar Y_{i}^{(m)} \one),
\@ 
Note that for any $t\in D_m$, $a_{t,j} \equiv \sign(\hat \theta_j^{(m-1)})$ if $j \notin U_{m-1}$. Therefore, 
\$ 
Y_{t,i} - \bar Y_{i}^{(m)} &= X_{i,\cdot}^* a_t + \epsilon_{t,i} - \frac{1}{n_m}\sum_{s \in D_m} (X_{i,\cdot}^* a_s + \epsilon_{s,i})\\
& = X_{i,E_i^{(m)}}^* a_{t}[E_i^{(m)}] + \epsilon_{t,i} - 
\frac{1}{n_m}\sum_{s \in D_m} \big(X_{i,E_i^{(m)}}^* a_{s}[E_i^{(m)}] + \epsilon_{s,i} \big). 
\$
Using the above, we have 
\$ 
\eqref{eq:row-est} & = 
(\tilde \sfA_{i}^{(m)\top} \tilde \sfA_{i}^{(m)})^{-1} \tilde \sfA_{i}^{(m)\top} 
(\tilde \sfA_{i}^{(m)} X^*_{i,E_i^{(m)}} + \sfE_{i}^{(m)} - \bar \sfE_{i}^{(m)} \one_{n_m})\\
& = X^*_{i,E_i^{(m)}} \underbrace{ + (\tilde \sfA_{i}^{(m)\top} \tilde \sfA_{i}^{(m)})^{-1} \tilde \sfA_{i}^{(m)\top} ( \sfE_{i}^{(m)} - \bar \sfE_{i}^{(m)} \one_{n_m})}_{=:r_i^{(m)}},
\$
where $\sfE_{i}^{(m)} = (\epsilon_{t,i})_{t \in D_m}$ and $\bar \sfE_{i}^{(m)} = \frac{1}{n_m}\sum_{t \in D_m} \epsilon_{t,i}$ is the sample mean of the noise for unit $i$ in batch $m$. 
For any $j \in U_{m-1}$, we can now decompose the estimation error for $\theta_j$ as follows:
\$
\hat \theta_j^{(m)} - \theta_j = \sum_{i=1}^d r_{i, \pi_i(j)}^{(m)} \ind\{j \in S_i\},
\$
where $\pi_i(j)$ is the position of index $j$ in the set $E_i^{(m)}$. 

Next, we will bound the estimation error $r_i^{(m)}$.
To this end, we first show that with high probability, the minimum eigenvalue of $\tilde \sfA_{i}^{(m)\top} \tilde \sfA_{i}^{(m)}$ is bounded away from zero.

\paragraph{Eigenvalue lower bound of $\tilde \sfA_{i}^{(m)\top} \tilde \sfA_{i}^{(m)}$.}
For each $m\in[M]$,  we have by construction that  
\$
\tilde \sfA_i^{(m)\top} \tilde \sfA_i^{(m)}  = 
\sfA_i^{(m)\top} \sfA_i^{(m)} - \frac{1}{n_m} \sfA_i^{(m)\top} \one \one^\top \sfA_i^{(m)}.
\$
The triangular inequality gives that 
\$ 
\|\tilde \sfA_i^{(m)\top} \tilde \sfA_i^{(m)} - n_m I\| 
\le \|\sfA_i^{(m)\top} \sfA_i^{(m)} - n_m I\| + \Big\|\frac{1}{n_m} \sfA_i^{(m)\top} \one \one^\top \sfA_i^{(m)} \Big\|.
\$
We bound the two terms separately conditional on the previous batches, 
where we do not explicitly write the conditioning for ease of presentation. 

First, note that $a_{t,j} \stackrel{\text{iid}}{\sim} \text{Uniform}(\{-1,1\})$, for any $t \in D_m$ and $j \in E_i^{(m)}$ and 
\$ 
&\sfA_i^{(m)\top} \sfA_i^{(m)} - n_m I= \sum_{t \in D_m} \Big(a_{t}[E_i^{(m)}] a_{t}[E_i^{(m)}]^\top - I\Big), 
\$
where for each $t \in D_m$, there is 
\$ 
& \big\|a_{t}[E_i^{(m)}] a_{t}[E_i^{(m)}]^\top - I\big\| \le |E_i^{(m)}| \le s, \\ 
& \Bigg\|\sum_{t \in D_m} \EE\Big[\big(a_t[E_i^{(m)}]a_t[E_i^{(m)}]^\top - I\big) \big(a_t[E_i^{(m)}]a_t[E_i^{(m)}]^\top - I\big)\Big]\Bigg\|
\le n_m s.
\$
Applying the matrix Bernstein inequality~\citep[Theorem 6.1.1]{tropp2015introduction}, we have that for any $x >0$, 
\$ 
\PP\Big(\|\sfA_i^{(m)\top} \sfA_i^{(m)} - n_m I\| \ge x\Big) \le 
2 s \exp\Big(\frac{-x^2/2}{n_m s + s x/3}\Big).
\$

As for the second term, we can write 
\$
\Big\|\frac{1}{n_m} \sfA_i^{(m)\top} \one \one^\top \sfA_i^{(m)} \Big\| 
= \frac{1}{n_m} \|\sfA_i^{(m)\top} \one \|_2^2 
= \frac{1}{n_m} \bigg\|\sum_{t \in D_m} a_t[E_i^{(m)}] \bigg\|_2^2
= \frac{1}{n_m} \sum_{\ell \in E_i^{(m)}} \Bigg(\sum_{t \in D_m} a_{t,\ell}\Bigg)^2
\$
Applying Hoeffding's inequality to each sum $\sum_{t \in D_m} a_{t,\ell}$ and a union bound over $\ell \in E_i^{(m)}$, 
we have for any $y > 0$,
\$ 
\PP\Bigg(\sum_{\ell \in E_i^{(m)}} \Bigg(\sum_{t \in D_m} a_{t,\ell}\Bigg)^2 \ge sy^2\Bigg) \le 
\sum_{\ell \in E_i^{(m)}} \PP\Bigg(\Bigg|\sum_{t \in D_m} a_{t,\ell}\Bigg| \ge y \Bigg)
\le 2 s\exp\Big(-\frac{y^2}{2n_m}\Big)
\$
With the above bounds and taking a union bound over $m\in[M]$ and $i\in[d]$, 
we have with probability at least $1-\delta/4$ that 
\@ \label{eq:eigen_event}
\| \tilde \sfA^{(m)\top}_i \tilde \sfA_i^{(m)} - n_m I \| 
\le 2\sqrt{2 n_m s \log(16Mds/\delta)} + s \log(16Mds/\delta), \quad \forall m\in[M], i\in[d].
\@
We proceed conditioning on the high-probability event in~\eqref{eq:eigen_event}.

By assumption, $m \ge m_0$ and therefore $n_{m} \ge 128s \log(16Mds/\delta)$ and we have 
\$ 
\lambda_{\min}(\tilde \sfA_i^{(m)\top} \tilde \sfA_i^{(m)})
& = \lambda_{\min}(\tilde \sfA_i^{(m)\top} \tilde \sfA_i^{(m)} - n_m I + n_mI)\\
& \stepa{\ge} n_m - \big\|\tilde \sfA_i^{(m)\top} \tilde \sfA_i^{(m)} - n_m I\big\|\\
& \ge n_m - 2\sqrt{2n_ms \log(16Mds/\delta)}- s \log(16Mds/\delta)\\
& \ge n_m/2, 
\$
where step (a) follows from Weyl's inequality.
As a result, $\|(\tilde \sfA_i^{(m)\top} \tilde \sfA_i^{(m)})^{-1}\| \le  \frac{2}{n_m}$.
Next, we write 
\$
\sfM^{(i,m)} := (\tilde \sfA_i^{(m)\top} \tilde \sfA_i^{(m)})^{-1} \tilde \sfA_i^{(m)\top},
\$ 
and use $\sfM^{(i,m)}_{\pi_i(j)}$ to denote the $\pi_i(j)$-th row of $\sfM^{(i,m)}$.
Note that 
\$ 
\|\sfM^{(i,m)}\sfM^{(i,m)\top}\| & = \|(\tilde \sfA_i^{(m)\top} \tilde \sfA_i^{(m)} )^{-1}\|
\le \frac{2}{n_m}.
\$
As a result, we have a bound on the row norms:
\$
\|\sfM^{(i,m)}_{\pi_i(j)}\|_2^2 \le \|\sfM^{(i,m)}\sfM^{(i,m)\top}\| \le \frac{2}{n_m}.
\$
Applying Hoeffding's inequality conditional on the actions, for any $x>0$, we have
\$ 
\PP\Bigg(\bigg|\sum_{i=1}^d \ind\{j \in S_i\}\sfM^{(i,m)}_{\pi_i(j)}\sfE_i \bigg| \ge x\Bigg) & \le 
2 \exp\Bigg(\frac{-x^2/2}{\sum_{i=1}^d \ind\{j \in S_i\}\|\sfM^{(i,m)}_{\pi_i(j)}\|_2^2}\Bigg)\\
& \le  
2 \exp\Bigg(\frac{-x^2/2}{\sum_{i=1}^d \ind\{j \in S_i\} \cdot 2/n_m}\Bigg) 
= 2 \exp\Bigg(\frac{-n_m x^2}{4 \rho_j}\Bigg).
\$
Similarly, for any $y > 0$, we have
\$ 
\PP\Bigg(\bigg|\sum_{i=1}^d \ind\{j \in S_i\}(\sfM^{(i,m)}_{\pi_i(j)}\one) \bar\sfE^{(m)}_i \bigg| \ge y\Bigg) & \le
2 \exp\Bigg(\frac{-y^2/2}{\sum_{i=1}^d \ind\{j \in S_i\}\|\sfM^{(i,m)}_{\pi_i(j)}\one\|_2^2 \cdot \frac{1}{n_m}} \Bigg)\\ 
& \le 2 \exp\Bigg(\frac{-n_m y^2/2}{2\rho_j}\Bigg).
\$  
Taking a union bound over $j \in U_{m-1}$ and $m\in[M]$, as well as the event in~\eqref{eq:eigen_event}, 
we have with probability at least $1-\delta/2$ that for any $m \ge m_0$
and $j \in U_{m-1}$,
\$ 
|\hat \theta_j - \theta| \le 4\sqrt{\frac{\rho_j \log(16 d M/\delta)}{n_m}} 
= \sqrt{\rho_j} \tau_m/2.
\$

\subsection{Proof of Lemma~\ref{lemma:good_event_whp}}
\label{app:proof_good_event_whp}
Fix $m\in[M]$. For any $i\in[d]$ and $j\in U_{m-1}$, 
\$ 
\hat X_{ij}^{(m)} - X^*_{ij} 
& = \frac{1}{n_m}\sum_{t\in D_m} a_{t,j} y_{t,i} - X^*_{ij} \\
& = \frac{1}{n_m}\sum_{t\in D_m} a_{t,j} \bigg(\sum^d_{k=1}X^*_{ik} a_{t,k} + \epsilon_{t,i}\bigg) 
- X^*_{ij}\\ 
& = \frac{1}{n_m}\sum_{t\in D_m} \bigg(X_{ij}^* +\sum_{k \neq j}X_{ik}^*a_{t,k} a_{t,j} + \epsilon_{t,i} a_{t,j} \bigg) - X^*_{ij}\\
& = \frac{1}{n_m}\sum_{t\in D_m} \bigg(\sum_{k \neq j}X_{ik}^*a_{t,k} a_{t,j} + \epsilon_{t,i} a_{t,j}\bigg)
\$
Since $a_{t,j}$'s are independent Rademacher random variables for $t\in D_m$, 
$\EE[\hat X_{ij}^{(m)} - X^*_{ij}] = 0$. 
By Assumption~\ref{asst:bounded}, 
\$ 
\Big|\sum_{k \neq j}X_{ik}^*a_{t,k} a_{t,j}\Big| \le \Big| \sum_{k \neq j}X_{ik}^*a_{t,k} \Big|  
\le \sup_{a \in \cA} |X^*_{i\cdot} a|\le 1.
\$
By Hoeffding's inequality, we have for any $\eta >0$ that 
\$ 
& \PP\Bigg(\bigg|\frac{1}{n_m}\sum_{t\in D_m} \bigg(\sum_{k \neq j}X_{ik}^*a_{t,k} a_{t,j} + \epsilon_{t,i} a_{t,j}\bigg) \bigg|\ge 2\eta \Bigg)\\
\le~& \PP\Bigg(\bigg|\frac{1}{n_m}\sum_{t\in D_m} \sum_{k \neq j}X_{ik}^*a_{t,k} a_{t,j} \bigg| \ge \eta \Bigg)
+ \PP\Bigg(\bigg|\frac{1}{n_m}\sum_{t\in D_m} \epsilon_{t,i} a_{t,j} \bigg| \ge \eta \Bigg)\\
\le~& 4\exp\bigg(-\frac{\eta^2 n_m}{2}\bigg).
\$ 
Taking a union bound over $m\in [M]$, $i\in[d]$ and $j\in U_{m-1}$, and letting 
$\eta = \tau_m /16 =  \sqrt{\frac{2\log(4Md^2/\delta)}{n_m}}$, we obtain that 
with probability at least $1-\delta$, for any $m\in[M]$, $i\in[d]$ and $j\in U_{m-1}$, 
\$
\Big|\hat X_{ij}^{(m)} - X_{ij}^*\Big| \le \frac{\tau_m}{8}.
\$
On the above event, for any $m\in[M]$ and $j\in U_{m-1}$, we have
\$  
\big|\theta_j - \hat \theta_j^{(m)}\big|
\le & \sum_{i=1}^d \big|\hat X_{ij}^{(m)}\ind\{|\hat X_{ij}^{(m)}| > \tau_m/8\} - X_{ij}^*\big|\\
\le &\sum_{i=1}^d \ind\{X_{ij}^* \neq 0, |\hat X_{ij}^{(m)}| \le \tau_m/8\}
\cdot \big|X_{ij}^*\big| + \ind\{X_{ij}^* \neq 0, |\hat X_{ij}^{(m)}| > \tau_m/8\}
\cdot \big|\hat X_{ij}^{(m)} - X_{ij}^*\big| \\
\le & \rho_j \tau_m/2.
\$
The proof is complete.

\subsection{Proof of Lemma~\ref{lemma:restricted_eigen}}
\label{app:proof_restricted_eigen}
We define the restricted eigenvalue constants for a matrix $A \in \RR^{d\times d}$ with order $m\le d$ as
\$ 
\begin{aligned}
\phi_{\max}(A,m) &:= \sup_{v \in \RR^d: \|v\|_0 \le m}
\frac{v^\top A v}{\|v\|_2^2}, \\
\phi_{\min}(A,m) &:= \inf_{v \in \RR^d: \|v\|_0 \le m}
\frac{v^\top A v}{\|v\|_2^2}.
\end{aligned}
\$
Let $v \in \RR^d$ denote an arbitrary unit vector with $\|v\|_0 \le m$. 
We can check that for any $y \in \RR$,  
\$
\EE[e^{y (a_t^\top v)}] = \prod_{j=1}^d \EE[e^{y a_{t,j} v_j}]
= \prod_{j=1}^d \cosh(y v_j) \le \prod_{j=1}^d e^{y^2 v_j^2/2} = e^{y^2/2},
\$
which implies that $a_t^\top v$ is a sub-gaussian random variable with variance proxy at most $1$.
Consequently, $(a_t^\top v)^2 - 1$ is a sub-exponential random variable with parameters $(4\sqrt{2}, 4)$.
Applying Bernstein's inequality, we have that for any $x > 0$,
\$ 
\PP\Bigg(\bigg|\frac{1}{T_1} \sum_{t=1}^{T_1} (a_t^\top v)^2 - 1\bigg| \ge x\Bigg)
\le 2\exp\bigg(-\min \Big\{\frac{x^2}{64},\frac{x}{8}\Big\} \cdot T_1\bigg).
\$
Let $\cN(\epsilon)$ denote an $\epsilon$-net of the $m$-dimensional unit ball, $\mathbb{S}^{m-1}$, 
for some $\epsilon \in (0,1/3)$.
By a standard covering argument, the cardinality of $\cN(\epsilon)$ can be upper bounded as 
$\left(1 + \frac{2}{\epsilon}\right)^m$. We further denote $\overline{\cN(\epsilon)}$ the 
set of vectors in $\RR^d$ whose support is in $\cN(\epsilon)$, i.e.,
\$
\overline{\cN(\epsilon)} := \{v \in \RR^d: v[\supp(v)] \in \cN(\epsilon)\}.
\$
It then follows that the cardinality of $\overline{\cN(\epsilon)}$ is upper bounded by
\$
{d \choose m} \cdot \left(1 + \frac{2}{\epsilon}\right)^m \le \left(\frac{ed}{m}\right)^m \cdot \left(1 + \frac{2}{\epsilon}\right)^m.
\$ 
Taking a union bound over $v \in \overline{\cN(\epsilon)}$,  
we have that with probability at least $1-2\exp(-(\frac{x^2}{64}\wedge \frac{x}{8})T_1 + 
m\log(ed/m) + m\log(1 + 2/\epsilon))$, 
\@ \label{eq:epsilonnet}
\bigg|\frac{1}{T_1} \sum_{t=1}^{T_1} (a_t^\top v)^2 - 1\bigg| \le x, ~\forall v \in \overline{\cN(\epsilon)}.
\@
Conditional on the event in~\eqref{eq:epsilonnet}, for an arbitrary vector 
$s$-sparse $r \in \RR^d$, there exists $v_0 \in \overline{\cN(\epsilon)}$ such that 
$\supp(v_0) = \supp(r)$ and $\|v_0 - r\|_2 \le \epsilon$. Writing 
$\sfA := \frac{1}{T_1} \sum^{T_1}_{t=1} a_t a_t^\top$, we have  
\$ 
r^\top \sfA r -1 = r^\top \sfA (r - v_0) + (r - v_0)^\top \sfA v_0 + v_0^\top \sfA v_0 - 1
\le 2 \phi_{\max}(\sfA, s) \|r - v_0\|_2 + x  \le 2\phi_{\max}(\sfA, m) \epsilon + x.
\$
Taking the maximum over $r$, we further have that 
\$ 
\phi_{\max}(\sfA, m) \le \frac{1+x}{1-2\epsilon} = 1 + \frac{2\epsilon + x}{1-2\epsilon}. 
\$
On the other hand, 
\$ 
r^\top \sfA r - 1 \ge -2\phi_{\max}(\sfA,m)\epsilon - x \ge -\frac{2\epsilon(1+x)}{1-2\epsilon}-x 
\Rightarrow \phi_{\min}(\sfA, m) \ge 1 - x -  \frac{2\epsilon(1+x)}{1-2\epsilon} = 1 - \frac{2\epsilon + x}{1-2\epsilon}.
\$
Taking $\epsilon =\frac{1}{64}$, we have $\phi_{\min}(\sfA, m) \ge 1 - 2x - \frac{1}{16}$
and $\phi_{\max}(\sfA, m) \le 1 + 2x +\frac{1}{16}$. 
In addition, for any $v,r \in \RR^d$ such that $\supp(v)\cap \supp(r) = \varnothing$ and $\|v\|_0+\|r\|_0 \le m$, we have 
\@ \label{eq:cross_term}
|v^\top \sfA r| = \bigg|\frac{(v+r)^\top \sfA(v+r) - (v-r)^\top \sfA (v-r)}{4} \bigg| \le x + \frac{1}{32}. 
\@

We now apply the above results with $m = 3s$.
Next, for an arbitrary vector $u$ satisfying $\|u[{S^c}]\|_1 \le 3 \|u[S]\|_1$ 
for some $S \subseteq [d]$ with $|S|\le s$, we decompose its 
coordinates into blocks of size $s$ with the first block $\sfI_1$ containing the indices in $S$, 
the second block $\sfI_2$ containing the indices of the $s$ largest coordinates in $S^c$, and so on. 
Let $K$ denote the number of blocks. We can check that 
\$ 
\frac{1}{T_1} \sum^{T_1}_{t=1} (a_t^\top u)^2 
& = \frac{1}{T_1} \sum^{T_1}_{t=1} \Big(\sum^K_{k=1} \sum_{j \in \sfI_k } u_j a_{t,j}\Big)^2\\
& \ge \frac{1}{T_1}\sum^{T_1}_{t=1}  \Big(\sum_{j \in \sfI_1 \cup \sfI_2 } u_j a_{t,j}\Big)^2
+ \frac{2}{T_1}\sum^{T_1}_{t=1} \Big(\sum^K_{k=3} \sum_{j \in \sfI_k } u_j a_{t,j}\Big) \Big(\sum_{j \in \sfI_1 \cup \sfI_2 } u_j a_{t,j}\Big).
\$
The first term on the right-hand side can be lower bounded as $\phi_{\min}(\sfA, 3s) \|u[\sfI_1 \cup \sfI_2]\|_2^2$.
As for the second term, we have
\$ 
\bigg|\frac{1}{T_1} \sum^{T_1}_{t=1}
\Big(\sum^K_{k=3} \sum_{j \in \sfI_k } u_j a_{t,j}\Big) \Big(\sum_{j \in \sfI_1 \cup \sfI_2 } u_j a_{t,j}\Big)\bigg|
& \le \sum^K_{k=3} \bigg| \frac{1}{T_1} \sum^{T_1}_{t=1}\Big(\sum_{j \in \sfI_k } u_j a_{t,j}\Big) \Big(\sum_{j \in \sfI_1 \cup \sfI_2 } u_j a_{t,j}\Big)\bigg|\\
& \le \Big(x+\frac{1}{32}\Big) \sum^K_{k=3} \|u[\sfI_k]\|_2 \|u[\sfI_1 \cup \sfI_2]\|_2,
\$
where the last step uses the result in Equation~\eqref{eq:cross_term}. By the definition of the blocks, we have for any $k\ge 3$ that 
\$ 
\sum_{k\ge 3}\|u[\sfI_k]\|_2 \le \sum_{k\ge 3}\frac{1}{\sqrt{s}}\|u[\sfI_{k-1}]\|_1 
\le \frac{1}{\sqrt{s}} \|u[\sfI_1^c]\|_1 \le \frac{3}{\sqrt{s}} \|u[\sfI_1]\|_1 \le 3 \|u[\sfI_1]\|_2. 
\$ 
Putting everything together, we have that with probability at least $1-2\exp(-(\frac{x^2}{64}\wedge \frac{x}{8})T_1 + 
3s\log(130d))$ 
that for any $u$ satisfying $\|u[S^c]\|_1 \le 3 \|u[S]\|_1$, for some $S \subseteq [d]$ with $|S|\le s$,
\$ 
\frac{1}{T_1} \sum^{T_1}_{t=1} (a_t^\top u)^2 \ge \phi_{\min}(\sfA, 3s) \|u[\tilde S]\|_2^2 - (6x + \frac{3}{16}) \|u[\tilde S]\|_2^2
\ge \Big(1 - 7x - \frac{1}{4}\Big) \|u[\tilde{S}]\|_2^2.
\$
Taking $x = 8 \sqrt{(3s\log(130d) + \log(2/\delta))/T_1}$ and using the assumption, 
we have that with probability at least $1-\delta$,
\$ 
\frac{1}{T_1} \sum^{T_1}_{t=1} (a_t^\top u)^2 \ge \frac{1}{2} \|u[\tilde{S}]\|_2^2,
\$
for any $u$ satisfying $\|u[S^c]\|_1 \le 3 \|u[S]\|_1$, for some $S \subseteq [d]$ with $|S|\le s$.

\section{Additional simulation results}\label{app:experiments}


There is an implementation detail to cap the computational cost of the baseline algorithm. At each round of optimization, if the optimizer cannot find a solution before a fixed time limit, the algorithm simply samples a random action vector from the previous actions. This ensures the algorithm does not spend an unbounded amount of time at any round.

We performed robustness checks for the following methods:
(i) the baseline,
(ii) no network information (NETC, Algorithm~\ref{alg:netc}),
(iii) partial network information (NSE, Algorithm~\ref{alg:col-support}), and
(iv) full network information (NSE-FS, Algorithm~\ref{alg:full-support}).

The setting for Figure~\ref{fig:sim_signal} is the same as in Section~\ref{sec5}, with signal strengths in $\{0.01, 0.05, 0.1, 0.15, 0.2, 0.5\}$. For each signal strength, the experiment is run 100 times on different random seeds. The treatment effects are generated as in \eqref{eq:signal_function} with different $\texttt{signal\_strength}$ values. As the signal strength increases, all methods incur lower regret in absolute terms, but the relative ordering shifts: at very small signal strengths (e.g., $0.01$), all four methods perform similarly because the underlying policy gap is small and distinguishing signals from noise is hard; at moderate signal strengths (around $0.1$, the setting used in the main text), the structure-aware algorithms (NSE and NSE-FS) achieve the lowest regret, while NETC sits between them and the baseline; at larger signal strengths (e.g., $0.2$ and $0.5$), NETC's explore-then-commit strategy benefits disproportionately because the post-exploration greedy action is much more likely to be correct, so NETC can match or outperform the structure-aware methods. 

The setting for Figure~\ref{fig:sim_sparsity} is also the same as in Section~\ref{sec5} except that the sparsity parameter $s$ takes values in $\{5, 10, 15, 20, 25, 50\}$ while $d$ is fixed at $100$. For each sparsity level, the experiment is run 100 times on different random seeds. When the network is very sparse ($s=5$), all algorithms perform similarly since there is little interference to learn. As $s$ increases, the baseline's regret grows most rapidly, consistent with its $\tilde{\mathcal{O}}(d\sqrt{dT})$ rate that does not benefit from sparsity, while the structure-aware algorithms continue to outperform it. At the densest setting ($s=50$, i.e., half the population), NSE-FS (full support knowledge) provides the most substantial improvement, highlighting the value of structural information when the network is dense. 

\begin{figure}[t]
\centering
\includegraphics[width=\textwidth]{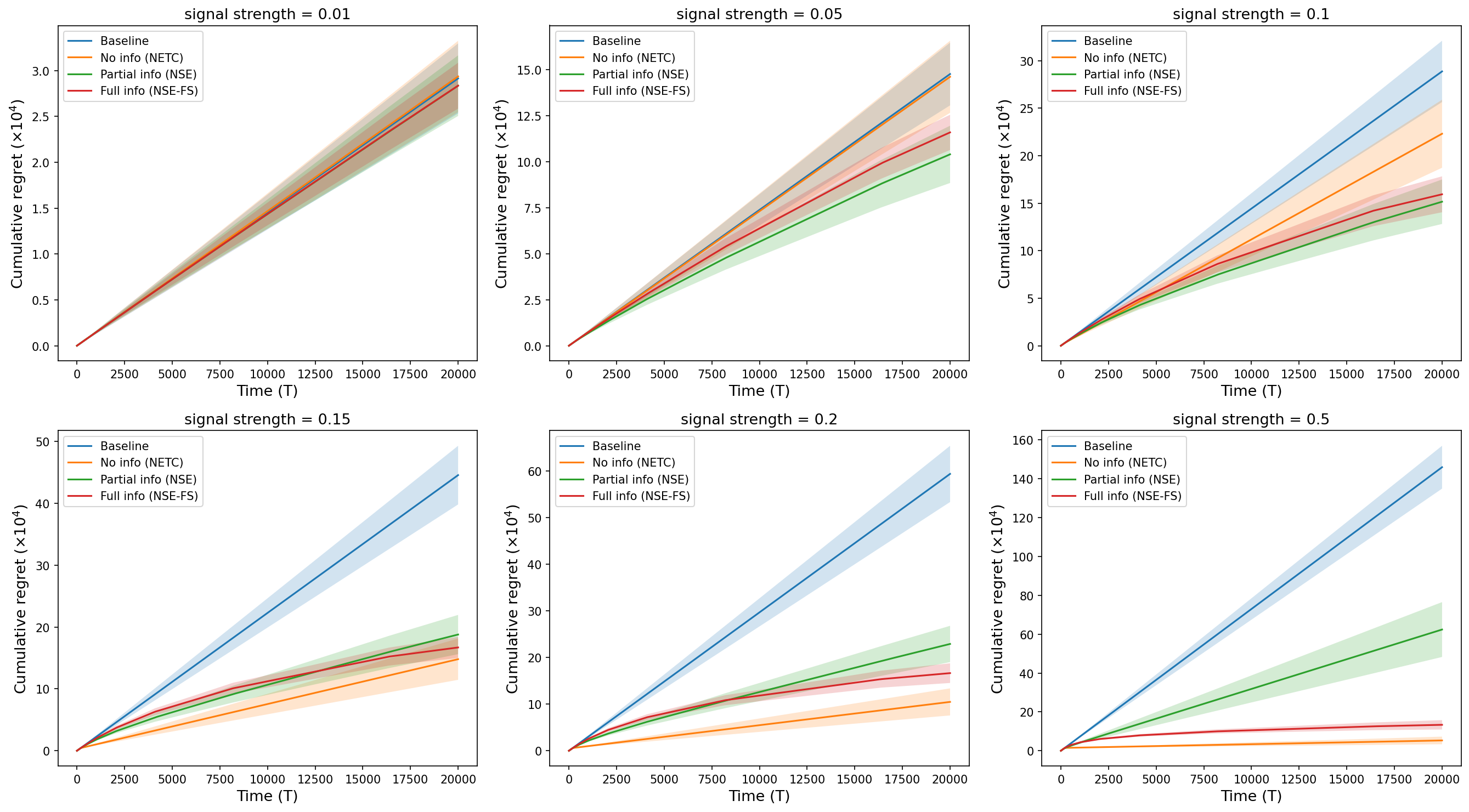}
\caption{Cumulative regret comparison for different signal strengths.}\label{fig:sim_signal}
\end{figure}

\begin{figure}[t]
\includegraphics[width=\textwidth]{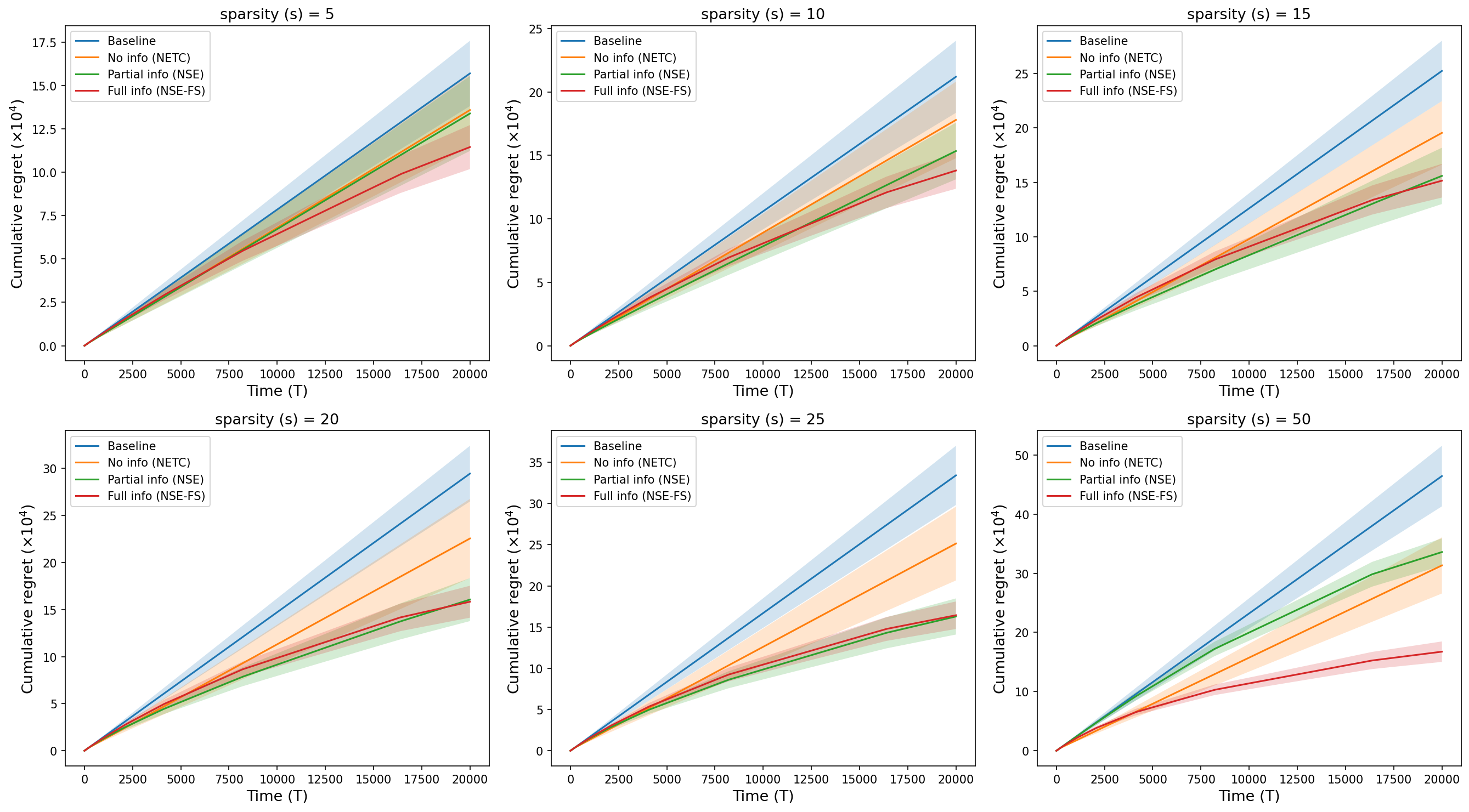}
\caption{Cumulative regret comparison for different sparsity levels.}\label{fig:sim_sparsity}
\end{figure}

\end{document}